\DeclareTextFontCommand{\emph}{\em}
\newtheorem{lemma}{Lemma}
\newtheorem{lemmaap}{Lemma}
\declaretheorem{theorem}
\declaretheorem{corollary}
\declaretheorem{definition}
\newenvironment{customthm}[1]
  {\innercustomthm}
  {\endinnercustomthm}
\newcommand{\cA}{\mathcal{A}}
\newcommand{\cF}{\mathcal{F}}
\newcommand{\cM}{\mathcal{M}}
\newcommand{\cP}{\mathcal{P}}
\newcommand{\cR}{\mathcal{R}}
\newcommand{\cS}{\mathcal{S}}
\newcommand{\cT}{\mathcal{T}}
\newcommand{\cW}{\mathcal{W}}
\newcommand{\rE}{\mathbb{E}}
\newcommand{\rI}{\mathbb{I}}
\newcommand{\rM}{\mathbb{M}}
\newcommand{\rR}{\mathbb{R}}
\newcommand{\rT}{\mathbb{T}}
\newcommand{\mdp}{\mathcal{M}}
\newcommand{\sspace}{\mathcal{S}}
\newcommand{\rew}{\mathcal{R}}
\newcommand{\prob}{\mathcal{P}}
\newcommand{\trans}{\mathcal{T}}
\newcommand{\aspace}{\mathcal{A}}
\newcommand{\cbar}{\, | \,}
\newcommand{\csemi}{\, ; \,}
\title{Metrics and continuity in reinforcement learning}
\author{
  Charline Le Lan  \thanks{Work performed while a Google Student Researcher.} \textsuperscript{\rm 1}, Marc G. Bellemare\textsuperscript{\rm 2}, Pablo Samuel Castro\textsuperscript{\rm 2}\\}
\begin{document}

\maketitle

\begin{abstract}
  In most practical applications of reinforcement learning, it is untenable to maintain direct estimates for individual states; in continuous-state systems, it is impossible. Instead, researchers often leverage {\em state similarity} (whether explicitly or implicitly) to build models that can generalize well from a limited set of samples. The notion of state similarity used, and the neighbourhoods and topologies they induce, is thus of crucial importance, as it will directly affect the performance of the algorithms. Indeed, a number of recent works introduce algorithms assuming the existence of ``well-behaved'' neighbourhoods, but leave the full specification of such topologies for future work. In this paper we introduce a unified formalism for defining these topologies through the lens of metrics. We establish a hierarchy amongst these metrics and demonstrate their theoretical implications on the Markov Decision Process specifying the reinforcement learning problem. We complement our theoretical results with empirical evaluations showcasing the differences between the metrics considered.
\end{abstract}

\section{Introduction}
A simple principle to generalization in reinforcement learning is to require that similar states be assigned similar predictions.
State aggregation implements a coarse version of this principle, by using a notion of similarity to group states together.
A finer implementation is to use the similarity in an adaptive fashion, for example by means of a nearest neighbour scheme over representative states.
This approach is classically employed in the design of algorithms for continuous state spaces, where the fundamental assumption is the existence of a metric characterizing the real-valued distance between states.

\begin{figure}[t!]
  \centering
  \includegraphics[width=0.3\textwidth]{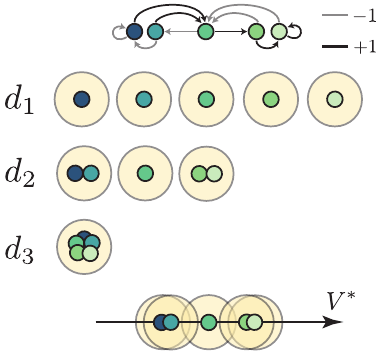}
  \caption{A simple five-state MDP (top) with the neighbourhoods induced by three metrics: an identity metric which isolates each state ($d_1$); a metric which captures behavioral proximity ($d_2$); and a metric which is not able to distinguish states ($d_3$). The yellow circles represent $\epsilon$-balls in the corresponding metric spaces. The bottom row indicates the $V^*$ values for each state.}
  \label{fig:highLevelFigure}
\end{figure}

To illustrate this idea, consider the three similarity metrics depicted in \autoref{fig:highLevelFigure}. The metric $d_1$ isolates each state, the metric $d_3$ groups together all states, while the metric $d_2$ aggregates states based on the similarity in their {\em long-term dynamics}. In terms of generalization, $d_1$ would not be expected to generalize well as new states cannot leverage knowledge from previous states; $d_3$ can cheaply generalize to new states, but at the expense of accuracy; on the other hand, $d_2$ seems to strike a good balance between the two extremes.

In this paper we study the effectiveness of \emph{behavioural metrics} at providing a good notion of state similarity.
We call behavioural metrics the class of metrics derived from properties of the environment, typically measuring differences in reward and transition functions.
Since the introduction of bisimulation metrics \citep{ferns2004metrics,ferns05metrics}, a number of behavioural metrics have emerged with additional desirable properties, including lax bisimulation \citep{taylor2009bounding,castro2010using} and $\pi$-bisimulation metrics \citep{castro2020scalable}.
Behavioural metrics are of particular interest in the context of understanding generalization, since they directly encode the differences in action-conditional outcomes between states, and hence allow us to make meaningful statements about the relationship between these states.

We focus on the interplay between behavioural metrics and the continuity properties they induce on various functions of interest in reinforcement learning. Returning to our example, $V^*$ is only continuous with respect to $d_1$ and $d_2$.
The continuity of a set of functions (with respect to a given metric) is assumed in most theoretical results for continuous state spaces,
such as uniform continuity of the transition function \citep{kakade2003exploration}; Lipschitz continuity of all Q-functions of policies \citep{pazis2013pac}, Lipschitz continuity of the rewards and transitions \citep{zhao2014mec,ok2018exploration} or of the optimal Q-function \citep{song2019efficient,touati2020zooming,sinclair2019adaptive}.
We find that behavioural metrics support these algorithms to varying degrees: the original bisimulation metric, for example, provides fewer guarantees  than what is required by some near-optimal exploration algorithms \citep{pazis2013pac}.
These results are particularly significant given that behavioural metrics form a relatively privileged group: any metric that enables generalization must in some sense reflect the structure of interactions within the environment and hence, act like a behavioural metric.

\section{Overview}

Our aim is to unify representations of state spaces and the notion of continuity via a taxonomy of metrics.

Our first contribution is a general result about the continuity relationships of different functions of the MDP (Theorem 1). While \citet{gelada2019deepmdp} (resp. \citet{norets2010continuity}) proved the uniform Lipschitz continuity of the optimal action-value function (resp. local continuity of the optimal value function) given the uniform Lipschitz continuity (resp. local continuity) of the reward and transition functions and \citet{rachelson2010locality} showed the uniform Lipschitz continuity of the value function given the uniform Lipschitz continuity of the action-value function in the case of deterministic policies, Theorem 1 is a more comprehensive result about the different components of the MDP (reward and transition functions, value and action value functions), for a spectrum of continuity notions (local and uniform continuity, local and uniform Lipschitz continuity) and applicable with stochastic policies, also providing counterexamples demonstrating that these relationships are only implication results. 

Our second contribution is to demonstrate that different metrics lead to different notions of continuity for different classes of functions (Section Continuity: Prior metrics, Section value-based metrics and Table 2). We first study metrics that have been introduced in the literature (presented in Section Prior metrics and abstractions). While \citet{li2006towards} provide a unified treatment of some of these metrics, they do not analyse these abstractions through the lens of continuity. Using our taxonomy, we find that most commonly discussed metrics are actually poorly suited for algorithms that convert representations into values, so we introduce new metrics to overcome this shortcoming (section Value-based metrics). We also analyse the relationships between the topologies induced by all the metrics in our taxonomy (Theorem 2). 

Finally, we present an empirical evaluation that supports our taxonomy and shows the importance of
the choice of a neighbourhood in reinforcement learning algorithms (section Empirical evaluation).

\section{Background}
\label{sec:background}

We consider an agent interacting with an environment, modelled as a Markov Decision Process (MDP) $\mdp = \langle\sspace, \aspace, \rew, \prob, \gamma\rangle$ \citep{puterman1994markov}. Here $\sspace$ is a continuous state space with Borel $\sigma$-algebra $\Sigma$ and $\aspace$ a discrete set of actions. Denoting $\Delta(X)$ to mean the probability distribution over $X$, we also have that $\prob : \sspace \times \aspace \to \Delta(\sspace)$ is the transition function, $\rew : \sspace \times \aspace \rightarrow [0, R_{\text{max}}]$ is the measurable reward function, and $\gamma \in [0, 1)$ is the discount factor. We write $\prob_s^a$ to denote the next-state distribution over $\sspace$ resulting from selecting action $a$ in $s$ and write $\rew_s^a$ for the corresponding reward.

A stationary policy $\pi:\sspace \rightarrow \Delta(\aspace)$ is a mapping from states to distributions over actions, describing a particular way of interacting with the environment. We denote the set of all policies by $\Pi$. For any policy $\pi \in \Pi$, the value function $V^\pi(s)$ measures the expected discounted sum of rewards received when starting from state $s\in\sspace$ and acting according to $\pi$:
\begin{equation*}
	V^\pi(s) := \mathop\mathbb{E} \Big [ \sum_{t \ge 0} \gamma^t \rew_{s_t}^{a_t} \csemi s_0 = s, a_t \sim \pi(\cdot \cbar s_t) \Big ] .
\end{equation*}
The maximum attainable value is $V_{\text{max}} := \frac{R_{\text{max}}}{1-\gamma}$.
The value function satisfies Bellman's equation:
\begin{equation*}
	V^{\pi}(s) = \mathop\mathbb{E}_{a\sim\pi(\cdot\cbar s)}[\rew_s^a + \gamma \mathop\mathbb{E}_{s'\sim\prob_s^a}V^{\pi}(s')] .
\end{equation*}
The state-action value function or Q-function $Q^\pi$ describes the expected discounted sum of rewards when action $a \in \aspace$ is selected from the starting state $s$, and satisfies the recurrence
\begin{equation*}
	Q^{\pi}(s, a) = \rew_s^a + \gamma \mathop\mathbb{E}_{s'\sim\prob_s^a} V^{\pi}(s').
\end{equation*}
A policy $\pi$ is said to be optimal if it maximizes the value function at all states: 
\begin{equation*}
	V^{\pi}(s) = \mathop{\max}_{\pi' \in\Pi}V^{\pi'}(s) \text{ for all } s \in \sspace .
\end{equation*}
The existence of an optimal policy is guaranteed in both finite and infinite state spaces. We will denote this policy $\pi^* \in \Pi$. The corresponding value function and Q-function are denoted respectively $V^*$ and $Q^*$.

\subsection{Metrics, topologies, and continuity}

We begin by recalling standard definitions regarding metrics and continuity, two concepts central to our work.
\begin{definition}[\citeauthor{royden68real}, \citeyear{royden68real}]
  \label{def:metricspace}
  A \textbf{metric space} $\langle X, d\rangle$ is a nonempty set $X$ of elements (called points) together with a real-valued function $d$ defined on $X\times X$ such that for all $x$, $y$, and $z$ in $X$: $d(x, y)\geq 0$; $d(x, y) = 0$ if and only if $x = y$; $d(x, y) = d(y, x)$ and $d(x, y)\leq d(x, z) + d(z, y)$.
  The function $d$ is called a \textbf{metric}.
  A \textbf{pseudo-metric} $d$ is a metric with the second condition replaced by the weaker condition $x=y\implies d(x, y)=0$.
\end{definition}
In what follows, we will often use {\em metric} to stand for {\em pseudo-metric} for brevity.

A metric $d$ is useful for our purpose as it quantifies, in a real-valued sense, the relationship between states of the environment. Given a state $s$, a natural question is: What other states are similar to it? The notion of a \emph{topology} gives a formal answer.
\begin{definition}[\citeauthor{sutherland2009introduction}, \citeyear{sutherland2009introduction}]
  A metric space $\langle X, d\rangle$ induces a \textbf{topology} $(X, \mathcal{T}_d)$ defined as the collection of open subsets of $X$; specifically, the subsets $U \subset X$ that satisfy the property that for each $x \in U$, there exists $\epsilon >0$ such that the \textbf{$\epsilon$-neighbourhood} $B_d(x, \epsilon)=\{y \in X| d(y, x)<\epsilon\} \subset U$.\\
  Let $(X, \mathcal{T})$ and $(X, \mathcal{T'})$ be two topologies on the same space $X$. We say that $\mathcal{T}$ is \textbf{coarser} than $\mathcal{T'}$, or equivalently that $\mathcal{T'}$ is \textbf{finer} than $\mathcal{T}$, if $\mathcal{T} \subset \mathcal{T'}$.
\end{definition}

Given two similar states under a metric $d$, we are interested in knowing how functions of these states behave. In the introductory example, we asked specifically: how does the optimal value function behave for similar states? This leads us to the notion of functional continuity.
Given $f: X \rightarrow Y$ a function between a metric space $(X, d_X)$ and a metric space $(Y, d_Y)$,
  \begin{itemize}[leftmargin=*]
    \item \textbf{Local continuity (LC)}: $f$ is locally continuous at $x\in X$ if for any $\epsilon >0$, there exists a $\delta_{x,\epsilon} > 0$ such that for all $x' \in X$, $d_X(x, x') < \delta_{x, \epsilon} \implies d_Y(f(x), f(x')) < \epsilon$. $f$ is said to be locally continuous on $X$ if it is continuous at every point $x\in X$.
    \item \textbf{Uniform continuity (UC)}: $f$ is uniformly continuous on $X$ when given any $\epsilon > 0$, there exists $\delta_\epsilon > 0$ such that for all $x, x' \in X, d_X(x, x') < \delta_\epsilon \implies d_Y(f(x), f(x')) < \epsilon$.
    \item \textbf{Local Lipschitz continuity (LLC)}: $f$ is locally Lipschitz continuous at $x \in X$ if there exists $\delta_{x} > 0, K_{x}>0$ such that for all $x', x'' \in B_{d_X}(x, \delta_x)$, $d_Y(f(x'), f(x'')) \leq K_{x}d_X(x', x'')$.
    \item \textbf{Uniform Lipschitz continuity (ULC)}: $f$ is uniformly Lipschitz continuous if there exist $K>0$ such that for all $x, x' \in X$ we have $d_Y(f(x), f(x')) \leq Kd_X(x, x')$.
  \end{itemize}

The relationship between these different forms of continuity is summarized by the following diagram:
  \begin{equation}
    \begin{tikzcd}
   UC \arrow[d] & ULC \arrow[d] \arrow[l] \\
   LC & LLC \arrow[l]
    \end{tikzcd}
    \label{diag:continuity}
  \end{equation}
where an arrow indicates implication; for example, any function that is ULC is also UC.

Here, we are interested in functions of states and state-action pairs. Knowing whether a particular function $f$ possesses some continuity property $p$ under a metric $d$ informs us on how well we can extrapolate the value $f(s)$ to other states; in other words, it informs us on the generalization properties of $d$.

\subsection{Prior metrics and abstractions}
\label{sec:existingmetrics}
The simplest structure is to associate states to distinct groups, what is often called state aggregation \citep{bertsekas11approximate}. This gives rise to an {\em equivalence relation}, which we interpret as a {\em discrete pseudo-metric}, that is a metric taking a countable range of values.
\begin{definition}
  \label{def:discreteMetric}
  An equivalence relation $E\subseteq X \times X$ induces a \textbf{discrete pseudo-metric} $e^E$ where $e^E(x, x')= 0$ if $(x, y)\in E$, and $1$ otherwise.
\end{definition}

Throughout the text, we will use $e$ to denote discrete pseudo-metrics.
Two extremal examples of metrics are the \textbf{identity metric} $e^{\rI}: \sspace \times \sspace \rightarrow \{0, 1\}$, induced by the {\em identity relation} $\rI = \lbrace (s, t)\in\sspace\times\sspace | s = t\rbrace$ (e.g. $d_1$ in \autoref{fig:highLevelFigure}), and the \textbf{trivial metric} $e^{\rT}: \sspace \times \sspace \rightarrow \{0\}$ that collapses all states together (e.g. $d_3$ in \autoref{fig:highLevelFigure}).

In-between these extremes, $\eta$-abstractions \citep{li2006towards,abel2017near} are functions  $\phi:\sspace\rightarrow\hat{\sspace}$ that aggregates states which are mapped close to each other by a function $f$.
That is, given a threshold $\eta \geq 0$ and $f: \sspace \times \aspace \rightarrow \mathbb{R}$, $\phi_{f, \eta}(s)=\phi_{f, \eta}(t) \implies |f(s, a)-f(t, a)|\leq\eta.$
We list a few choices for $f$ along with the name of the abstraction we will refer to throughout this text in \autoref{tbl:absfunctions}.
\begin{table*}[!h]
  \centering
  \begin{tabular}{rl}
      \toprule
    \textbf{f} & $\phi_{f, \eta}$\\
    \midrule
$Q^*$& approximate Q function abstraction ($\eta \geq 0$) / $Q^*$-irrelavance ($\eta =0$) \\
$\rew$ and $\prob$& approximate model abstraction ($\eta \geq 0$) / Model-irrelevance ($\eta =0$) \\
$Q^{\pi}$& $Q^{\pi}$-irrelevance abstraction ($\eta=0$) \\
$\max\limits_{\aspace}Q^*$& $a^*$-irrelevance abstraction ($\eta=0$) \\
        \bottomrule
  \end{tabular}
  \caption{Different types of state abstractions.}
  \label{tbl:absfunctions}
\end{table*}

$\eta$-abstractions are defined in terms of a particular function of direct relevance to the agent. However, it is not immediately clear whether these abstractions are descriptive, and, more specifically, the kind of continuity properties they support. An alternative is to relate states based on the outcomes that arise from different choices, starting in these states. These are \emph{bisimulation relations} \citep{givan2003equivalence}.

\begin{definition}
  \label{def:bisimulation}
  An equivalence relation $E\subseteq\sspace\times\sspace$ with $\sspace_{E}$ the quotient space and $\Sigma(E)$ the $\Sigma$ measurable sets closed under $E$, if whenever $(s, t)\in E$ we have:
  \begin{itemize}[leftmargin=*]
    \item \textbf{Bisimulation relation}[\citeauthor{givan2003equivalence}, \citeyear{givan2003equivalence}].

     Behavioral indistinguishability under {\em equal actions}; namely, for any action $a\in\aspace$, $\rew_s^a =\rew_t^a$, and $\prob_s^a(X) = \prob_t^a(X)$ for all $X\in\Sigma(E)$.
      We call $E$ a {\bf bisimulation relation}. We denote the largest bisimulation relation as $\sim$, and its corresponding discrete metric as $e^{\sim}$.
    \item {\bf Lax-bisimulation relation} [\citeauthor{taylor2009bounding}, \citeyear{taylor2009bounding}].

    Behavioral indistinguishability under {\em matching actions}; namely, for any action $a\in\aspace$ from state $s$ there is an action $b\in\aspace$ from state $t$ such that $\cR_s^a =\cR_t^b$, and $\prob_s^a(X) = \prob_t^b(X)$ for all $X\in\Sigma(E)$, and vice-versa, we call $E$ a {\bf lax-bisimulation relation}. We denote the largest lax-bisimulation relation as $\sim_{lax}$, and its corresponding discrete metric as $e^{\sim_{lax}}$.

    \item {\bf $\pi$-bisimulation relation} [\citeauthor{castro2020scalable}, \citeyear{castro2020scalable}].
    Behavioral indistinguishability under a {\em fixed policy}; namely, given a policy $\pi\in\Pi$, $\sum_{a\in\aspace}\pi(a | s) \cR_s^a =  \sum_{a\in\aspace}\pi(a | t) \cR_t^a$, and $\sum_{a\in\aspace}\pi(a | s) \prob_s^a(X) = \sum_{a\in\aspace}\pi(a | s)\prob_t^b(X)$ for all $X\in\Sigma(E)$. We call $E$ a {\bf $\pi$-bisimulation relation}. We denote the largest bisimulation relation as $\sim_{\pi}$, and its corresponding discrete metric as $e^{\sim_{\pi}}$.
  \end{itemize}
\end{definition}
A {\em bisimulation metric} is the continous generalization of a bisimulation relation. Formally, $d$ is a bisimulation metric if its kernel is equivalent to the bisimulation relation. The canonical bisimulation metric \citep{ferns05metrics} is constructed from the Wasserstein distance between probability distributions.
\begin{definition}
Let $(Y, d_Y)$ be a metric space with Borel $\sigma$-algebra $\Sigma$.
  The Wasserstein distance \citep{villani2008optimal} between two probability measures $P$ and $Q$ on $Y$, under a given metric $d_Y$ is given by
  $W_{d_Y}(P, Q) = \inf_{\lambda \in \Gamma(P, Q)}\mathop\mathbb{E}_{(x, y)\sim \lambda}[d_Y(x, y)]$, where $\Gamma(P, Q)$ is the set of couplings between $P$ and $Q$.
\end{definition}

\begin{lemma}[\citeauthor{ferns05metrics}, \citeyear{ferns05metrics}]
  Let $\cM$ be the space of state pseudo-metrics and define the functional $F:\cM\rightarrow\cM$ as $F(d)(x, y) = \max_{a\in\aspace}\left(|\rew^a_x - \rew^a_y| + \gamma W_d(\prob^a_x, \prob^a_y)\right)$.
  Then $F$ has a least fixed point $d^{\sim}$ and $d^{\sim}$ is a bisimulation metric.
  \label{lemmacontbisim}
\end{lemma}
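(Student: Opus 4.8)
The plan is to exhibit $d^{\sim}$ as the least fixed point of $F$ through the Knaster--Tarski theorem, and then separately to identify its kernel with the bisimulation relation $\sim$. First I would restrict attention to the set $\cM_b\subseteq\cM$ of state pseudo-metrics bounded by $V_{\text{max}}$, partially ordered pointwise: $d\preceq d'$ iff $d(x,y)\le d'(x,y)$ for all $x,y\in\sspace$. Pointwise suprema and infima of families in $\cM_b$ are again pseudo-metrics bounded by $V_{\text{max}}$ (non-negativity, $d(x,x)=0$, symmetry and the triangle inequality all pass to sups and infs), so $(\cM_b,\preceq)$ is a complete lattice with bottom the zero metric $\mathbf 0$. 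Next I would check that $F$ restricts to a monotone self-map of $\cM_b$: $F(d)$ inherits $F(d)(x,x)=0$ and symmetry trivially; its triangle inequality follows from that of $|\cdot|$ and of $W_d$ (which holds because $d$ is a pseudo-metric) together with $\max_{a}(A_a+B_a)\le\max_{a}A_a+\max_{a}B_a$; and $F(d)(x,y)\le R_{\text{max}}+\gamma V_{\text{max}}=V_{\text{max}}$, so $F(d)\in\cM_b$. Monotonicity is the structural point: if $d\preceq d'$ then every coupling integrates to a no-smaller value under $d'$, so $W_d(P,Q)\le W_{d'}(P,Q)$ and hence $F(d)\preceq F(d')$. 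Knaster--Tarski then gives a least fixed point $d^{\sim}$. (Since moreover $|W_d(P,Q)-W_{d'}(P,Q)|\le\lVert d-d'\rVert_\infty$, $F$ is a $\gamma$-contraction, so the fixed point is in fact unique and equal to $\lim_n F^n(\mathbf 0)$, which is also the pointwise supremum of the increasing chain $\mathbf 0\preceq F(\mathbf 0)\preceq F^2(\mathbf 0)\preceq\cdots$.)

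It remains to show $d^{\sim}(x,y)=0\iff x\sim y$. For the forward implication I would verify that $R:=\{(x,y):d^{\sim}(x,y)=0\}$ is a bisimulation relation: for $(x,y)\in R$, fixed-pointness gives $0=\max_{a}(|\rew^a_x-\rew^a_y|+\gamma W_{d^{\sim}}(\prob^a_x,\prob^a_y))$, hence $\rew^a_x=\rew^a_y$ and $W_{d^{\sim}}(\prob^a_x,\prob^a_y)=0$ for every $a$; invoking the standard fact that a vanishing Wasserstein distance is witnessed by a coupling supported on the zero set $\{d^{\sim}=0\}$, one gets $\prob^a_x(C)=\prob^a_y(C)$ for every measurable $C$ that is a union of $\{d^{\sim}=0\}$-classes, i.e.\ for all $C\in\Sigma(R)$, so $R$ is a bisimulation relation and therefore $R\subseteq\,\sim$. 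For the converse, set $\cM_{\sim}:=\{d\in\cM_b: d(x,y)=0\text{ whenever }x\sim y\}$; this contains $\mathbf 0$ and is closed under pointwise suprema and uniform limits. I would show $F(\cM_{\sim})\subseteq\cM_{\sim}$: for $d\in\cM_{\sim}$ and $x\sim y$ we have $\rew^a_x=\rew^a_y$, and since $\prob^a_x$ and $\prob^a_y$ agree on $\Sigma(\sim)$ they push forward to the same measure on the quotient $\sspace_{\sim}$, so gluing them along the diagonal of the quotient yields a coupling of $\prob^a_x$ and $\prob^a_y$ concentrated on $\{(s,t):s\sim t\}\subseteq\{d=0\}$, giving $W_d(\prob^a_x,\prob^a_y)=0$ and thus $F(d)(x,y)=0$. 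Consequently every $F^n(\mathbf 0)$ lies in $\cM_{\sim}$, and since $d^{\sim}=\lim_n F^n(\mathbf 0)$ (equivalently their pointwise supremum) we conclude $d^{\sim}\in\cM_{\sim}$, i.e.\ $x\sim y\implies d^{\sim}(x,y)=0$. Together with the forward implication this shows the kernel of $d^{\sim}$ equals $\sim$, so $d^{\sim}$ is a bisimulation metric.

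The routine part is the fixed-point existence; the genuinely delicate part is the kernel computation, and within it the two measure-theoretic facts about the Wasserstein distance — that $W_d(P,Q)=0$ is attained by a coupling concentrated on $\{d=0\}$, and that such couplings transfer agreement on $\Sigma(\sim)$-measurable sets in both directions. These rely on the existence/attainment of optimal couplings and on measurability of the quotient map to $\sspace_{\sim}$, which hold under the mild assumption that $\sspace$ is Polish (as is standard in this literature) but are where one must be careful.
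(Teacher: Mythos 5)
Your argument is correct and coincides with the standard proof of this result in the cited reference (the paper itself states the lemma as a known fact from \citeauthor{ferns05metrics} and gives no proof): monotonicity of $W$ in the ground metric plus Knaster--Tarski (or, via the $\gamma$-contraction, Banach) for existence of the least fixed point, and the two coupling arguments for identifying the kernel with $\sim$. The only delicate points are the ones you already flag --- attainment of an optimal coupling for the forward inclusion and the disintegration/gluing used for the converse --- which hold under the Polish-space assumptions standard in this literature, so there is nothing to fix.
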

In words, bisimulation metrics arise as the fixed points of an operator on the space of pseudo-metrics.
Lax bisimulation metrics $d^{\sim_{lax}}$ and a $\pi$-bisimulation metrics $d^{\sim_{\pi}}$ can be defined in an analogous fashion; for succinctness, their formal definitions are included in the appendix.

\section{Continuity relationships}
\label{sec:contRel}
Our first result characterizes the continuity relationships between key functions of the MDP. The theorem considers different forms of continuity and relates how the continuity of one function implies another. While the particular case of uniform Lipschitz continuity of $Q^*$ (resp. local continuity of $V^*$) from $\prob + \rew$ has been remarked on before by \citet{gelada2019deepmdp} (resp. \citet{norets2010continuity}) as well as the case of uniform Lipschitz continuity of $V^{\pi}$ given the uniform Lipschitz continuity of $Q^{\pi}$ for stochastic policies $\pi$ \citep{rachelson2010locality}, to the best of our knowledge this is the first comprehensive treatment of the topic, in particular providing counterexamples.
\begin{restatable}{theorem}{contRel}
  \label{thm:contRel}
  If we decompose the Cartesian product $\sspace \times \aspace$ as: $d_{S \times A}(s, a, s', a')=d_S(s, s')+d_A(a, a')$  with $d_{\aspace}$ the identity metric, the LC, UC and LLC relationships between $\prob$, $\rew$, $V^{\pi}$, $V^*$, $Q^{\pi}$ and $Q^*$ functions are given by diagram \ref{diag:contRel}.
A directed arrow $f\rightarrow g$ indicates that function $g$ is continuous whenever $f$ is continuous. Labels on arrows indicate conditions that are necessary for that implication to hold. $\prob + \rew$ is meant to stand for both $\prob$ and $\rew$ continuity; $\pi \text{-cont}$ indicates continuity of $\pi:\sspace \rightarrow \Delta(\aspace)$. An absence of a directed arrow indicates that there exists a counter-example proving that the implication does not exist. In the ULC case, the previous relationships also hold with the following additional assumptions: $\gamma L_{\prob}<1$ for $\prob + \rew \rightarrow Q^*$ and $\gamma L_{\prob}(1+L_{\pi})<1$ for $\prob + \rew \xrightarrow{\pi \text{-cont}} Q^{\pi}$ where $L_{\prob}$ and $L_{\pi}$ are the Lipschitz constants of $\prob$ and $\pi$, respectively.
\end{restatable}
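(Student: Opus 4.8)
The plan is to establish diagram \ref{diag:contRel} one arrow at a time, grouping the arrows by which part of the MDP they connect, and then to exhibit a counterexample for each missing arrow. I would organize the argument into three blocks: (i) $\prob + \rew \to Q^{\pi}$ (and the special cases $\to Q^*$), (ii) $Q^{\pi}\leftrightarrow V^{\pi}$ under $\pi$-continuity and $Q^* \to V^*$, and (iii) the counterexamples. For (i), the natural approach is a fixed-point / contraction argument on the Bellman operator. Writing $Q^{\pi}_{k+1}(s,a) = \rew_s^a + \gamma \rE_{s'\sim\prob_s^a}\rE_{a'\sim\pi(\cdot|s')}Q^{\pi}_k(s',a')$, I would show that each iterate inherits the relevant continuity modulus from $\rew$, $\prob$ and $\pi$, and that the modulus converges. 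The key technical tool is that the map $s \mapsto \rE_{s'\sim\prob_s^a}[g(s')]$ is continuous (resp.\ Lipschitz with constant $L_{\prob}$ times the modulus of $g$) whenever $g$ is bounded continuous and $\prob$ is continuous in the Wasserstein sense — this is exactly the Kantorovich duality estimate $|\rE_{P}g - \rE_{Q}g| \le (\mathrm{Lip}\,g)\, W_{d}(P,Q)$. For LC and UC the composition of moduli of continuity closes without any extra condition because the rewards are bounded by $R_{\text{max}}$ and $\gamma<1$ guarantees the geometric series of error terms converges; for ULC one needs the Lipschitz constants to actually contract, which is where the hypotheses $\gamma L_{\prob}<1$ and $\gamma L_{\prob}(1+L_{\pi})<1$ enter: the recursion on Lipschitz constants is $L_{k+1} \le L_{\rew} + \gamma L_{\prob} L_k$ (resp.\ with an extra $L_{\pi}$ factor coming from differentiating through the action average), and this has a finite fixed point precisely under those conditions.

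For block (ii), the arrow $Q^{\pi}\to V^{\pi}$ follows from $V^{\pi}(s) = \sum_a \pi(a|s) Q^{\pi}(s,a)$: this is a finite convex combination whose weights vary with $s$, so I would bound $|V^{\pi}(s)-V^{\pi}(s')|$ by splitting into a term controlled by the continuity of $Q^{\pi}$ (weights fixed) and a term $\sum_a |\pi(a|s)-\pi(a|s')|\,|Q^{\pi}(s',a)| \le V_{\text{max}}\sum_a|\pi(a|s)-\pi(a|s')|$ controlled by $\pi$-continuity — hence the $\pi\text{-cont}$ label, and the $(1+L_{\pi})$-type dependence in the ULC refinement. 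The arrow $Q^*\to V^*$ is cleaner since $V^*(s) = \max_a Q^*(s,a)$ and the max of finitely many functions sharing a modulus of continuity has that same modulus (no extra condition needed, which is why that arrow is unlabeled). For the reverse direction $V^{\pi}\to Q^{\pi}$ (and $V^*\to Q^*$) I would again use the Bellman backup: $Q^{\pi}(s,a)=\rew_s^a+\gamma\rE_{s'\sim\prob_s^a}V^{\pi}(s')$ transfers continuity from $V^{\pi}$ to $Q^{\pi}$ provided $\prob+\rew$ are continuous, matching the labels in the diagram.

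The step I expect to be the main obstacle is (iii), the counterexamples proving that the missing arrows genuinely cannot be added — for instance that continuity of $V^{\pi}$ or $Q^{\pi}$ does not imply continuity of $\prob$ or $\rew$, that continuity of $V^*$ does not recover continuity of $Q^*$ without $\prob+\rew$, and that the ULC implications fail when the contraction conditions are dropped. Each of these requires constructing an explicit MDP where the hypothesized function is (say) Lipschitz but the target function has unbounded local slope or a genuine discontinuity; the delicate part is making the metric on $\sspace$, the transition kernel, and the reward simultaneously consistent with the MDP axioms while forcing exactly the desired (dis)continuity — a single badly chosen transition kernel can accidentally smooth out the pathology one is trying to exhibit. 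A clean recipe is to take $\sspace=[0,1]$ with the Euclidean metric, place all the irregularity in $\prob$ (e.g.\ a kernel whose Wasserstein modulus blows up near a point, or that is merely continuous but not Lipschitz), and then compute $V^{\pi}$, $Q^{\pi}$, $V^*$, $Q^*$ in closed form to verify the separation; for the ULC failures one takes a chain of states where the Lipschitz constant of the value function is amplified by a factor $\gamma L_{\prob}>1$ at each step, so it diverges along the chain. I would present one representative construction in detail and relegate the remaining, structurally similar ones to the appendix.
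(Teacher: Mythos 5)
Your overall skeleton matches the paper's: Bellman fixed-point iteration to push continuity of $\prob$ and $\rew$ through to $Q^{\pi}$ and $Q^*$, the non-expansiveness of $\max$ for $Q^*\rightarrow V^*$, the add-and-subtract splitting of $\sum_a \pi(a|s)Q^{\pi}(s,a)$ for $Q^{\pi}\rightarrow V^{\pi}$ (yielding the $V_{\max}$-weighted policy term and the $(1+L_{\pi})$ dependence), and explicit counterexamples for the absent arrows. Two points need attention, however. First, your ``key technical tool'' for block (i) — the Kantorovich duality bound $|\rE_P g - \rE_Q g|\le (\mathrm{Lip}\,g)\,W_d(P,Q)$ — only controls the expectation when the integrand is Lipschitz, so it carries the ULC and LLC cases but says nothing in the LC and UC cases, where the iterates $Q_k$ are merely continuous and have no finite Lipschitz constant; your fallback claim that ``the modulus converges'' is not justified either, since moduli of mere continuity need not stabilize under the recursion. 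The paper closes this gap differently: it assumes weak continuity of $s\mapsto\prob^a_s$ (or a bounded, continuous density, via dominated convergence) to get continuity of each iterate, and then invokes the $\gamma$-contraction of the Bellman operator in sup norm so that $Q^*$ is a \emph{uniform} limit of continuous functions, hence continuous — no convergence of moduli is needed. You should replace the duality estimate by this uniform-limit argument for LC/UC. Second, you assert reverse arrows $V^{\pi}\rightarrow Q^{\pi}$ and $V^*\rightarrow Q^*$ ``matching the labels in the diagram''; the diagram contains no such arrows, and indeed the paper's counterexample ($Q^*(\cdot,1)\equiv 1$ dominating a discontinuous $Q^*(\cdot,2)$) shows $V^*$ continuous does not imply $Q^*$ continuous. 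Relatedly, your counterexample programme is over-engineered: no pathological kernels on $[0,1]$ are needed, since placing a step discontinuity in a single non-maximizing action coordinate (for rewards, Q-functions, or the policy) already separates every missing implication. Finally, to make sense of continuity on $\sspace\times\aspace$ you should record, as the paper does, the elementary lemma that with the discrete metric on a finite $\aspace$ and the sum product metric, continuity of $Q$ on $\sspace\times\aspace$ is equivalent to continuity of each slice $Q(\cdot,a)$.
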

\begin{equation}
  \begin{tikzcd}
    & & Q^{\pi}  \arrow[rr, "\pi \text{-cont}"]& & V^{\pi} \\
    \prob + \rew \arrow[drr] \arrow[urr, "\pi \text{-cont } "]&  & & &  \\
    & & Q^* \arrow[rr]& & V^{*}
  \end{tikzcd}
    \label{diag:contRel}
\end{equation}
\begin{proof}
All proofs and counterexamples are provided in the appendix.
\end{proof}
The arrows are transitive and apply for all forms of continuity illustrated in Diagram~(\ref{diag:continuity}); for example, if we have ULC for ${Q^*}$, this implies we have LC for $V^*$. This diagram is useful when evaluating metrics as they clarify the strongest (or weakest) form of continuity one can demonstrate.
When considering deterministic policies, we can notice that the $\pi$-continuity mentioned in \autoref{thm:contRel} is very restrictive, as the following lemma shows.
\begin{lemma}
\label{lemma1}
If a deterministic policy $\pi: \sspace \rightarrow \aspace$ is continuous, $\sspace$ is connected\footnote{A connected space is topological space that cannot be represented as the union of two or more disjoint non-empty open subsets.} and $\aspace$ is discrete, then $\pi$ is globally constant.
\end{lemma}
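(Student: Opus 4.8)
The plan is to run the classical argument that a continuous map from a connected space into a discrete space is constant, instantiated in the paper's setting. The key observation is that $\aspace$ carries the identity metric $e^{\rI}$, so its induced topology $\mathcal{T}_{e^{\rI}}$ is the discrete topology: every subset of $\aspace$ is open, and in particular each singleton $\{a\}$ is open (take any $\epsilon < 1$ in the definition of an $\epsilon$-neighbourhood). This is what lets us convert continuity of $\pi$ into openness of level sets.

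First I would, for each action $a \in \aspace$, define the preimage $U_a := \pi^{-1}(\{a\}) = \{ s \in \sspace : \pi(s) = a \}$. Since $\{a\}$ is open in $\aspace$ and $\pi$ is (locally, hence) continuous, each $U_a$ is open in $\sspace$. These sets are pairwise disjoint, since $\pi$ is a single-valued function, and their union is all of $\sspace$. Now pick any $a_0$ in the image of $\pi$, so that $U_{a_0} \neq \emptyset$; its complement $\sspace \setminus U_{a_0} = \bigcup_{a \neq a_0} U_a$ is a union of open sets, hence open. Thus $\sspace = U_{a_0} \sqcup (\sspace \setminus U_{a_0})$ writes $\sspace$ as a disjoint union of two open sets with the first nonempty, so by the definition of connectedness recalled in the footnote the second must be empty. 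Hence $U_{a_0} = \sspace$, i.e. $\pi(s) = a_0$ for all $s \in \sspace$, so $\pi$ is globally constant.

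There is no substantive obstacle here — the statement is elementary — and the only points requiring care are: (i) verifying that the identity metric on $\aspace$ really does induce the discrete topology, so that singletons are open and continuity of $\pi$ can be applied to them; (ii) observing that local continuity already suffices, since topological continuity is exactly preservation of openness under preimages and no uniform or Lipschitz strengthening is needed; and (iii) recording the contrast that this argument fails for stochastic policies, because $\Delta(\aspace)$ is connected, which is precisely why \autoref{thm:contRel} must keep the $\pi\text{-cont}$ hypothesis in general.
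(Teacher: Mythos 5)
Your proof is correct and follows essentially the same route as the paper: both exploit the discreteness of $\aspace$ to show that $\pi$ is locally constant (equivalently, that each level set $\pi^{-1}(\{a\})$ is open) and then invoke connectedness of $\sspace$. The only difference is that you carry out the connectedness step explicitly via the disjoint open partition $\sspace = \bigsqcup_a U_a$, whereas the paper simply asserts that a locally constant function on a connected space is globally constant; your version is the more complete write-up of the same argument.
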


\section{Taxonomy of metrics}
\label{sec:metricRel}

We now study how different metrics support the continuity of functions relevant to reinforcement learning and the relationship between their induced topologies. While the taxonomy we present here is of independent interest, it also provides a clear theoretical foundation on which to build results regarding metric-respecting embeddings \citep{gelada2019deepmdp,zhang2020learning}.

\subsection{Continuity: Prior metrics}
\label{sec:discreteMetrics}
We begin the exposition by considering the continuity induced by discrete metrics. These enable us to analyze the properties of some representations found in the literature. The extremes of our metric hierarchy are the identity metric $e^{\rI}$ and trivial metric $e^{\rT}$, which respectively support all and one continuous functions, and were represented by $d_1$ and $d_3$ in the introductory example.
\label{sec:ContinuityProofs}
 \begin{restatable}[Identity metric]{lemma}{idmetric}
 \label{the:DiscreteMetricCont}
 $e^{\rI}$ induces the finest topology on $\sspace$, made of all possible subsets of $\sspace$.
 Let $(Y, d_Y)$ be any metric space.
 Any function $h$ (resp. Any bounded $h$) $:(\sspace, e^{\rI}) \rightarrow (Y, d_Y)$  is LC and UC (resp. ULC).
 \end{restatable}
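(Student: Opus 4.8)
The plan is to split the statement into three independent claims and verify each directly from the definitions. \textbf{Claim 1:} $e^{\rI}$ induces the discrete topology (all subsets of $\sspace$ are open). For this, it suffices to show every singleton $\{s\}$ is open, since arbitrary unions of open sets are open and every subset is a union of singletons. Given $s \in \sspace$, take $\epsilon = \tfrac{1}{2}$ (any value in $(0,1]$ works): then $B_{e^{\rI}}(s, \tfrac12) = \{t \in \sspace \mid e^{\rI}(s,t) < \tfrac12\} = \{s\}$, because $e^{\rI}(s,t)$ is either $0$ (iff $t = s$) or $1$. Hence $\{s\}$ satisfies the open-set condition of the topology definition, so $\mathcal{T}_{e^{\rI}}$ is the power set of $\sspace$, which is trivially the finest topology.

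\textbf{Claim 2:} any $h : (\sspace, e^{\rI}) \to (Y, d_Y)$ is LC and UC. The key observation is the same as above: for any $\epsilon > 0$, choosing $\delta = \tfrac12$ forces $e^{\rI}(s, s') < \delta$ to imply $s = s'$, hence $d_Y(h(s), h(s')) = d_Y(h(s), h(s)) = 0 < \epsilon$. Since this $\delta$ does not depend on $s$, the same argument gives uniform continuity; and a fortiori local continuity at every point. No hypothesis on $h$ or on $Y$ is needed here.

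\textbf{Claim 3:} if additionally $h$ is bounded, then $h$ is ULC. We must exhibit a single constant $K > 0$ with $d_Y(h(s), h(s')) \le K\, e^{\rI}(s, s')$ for all $s, s' \in \sspace$. When $s = s'$ both sides are $0$, so the inequality holds for any $K$. When $s \neq s'$ we have $e^{\rI}(s, s') = 1$, so we need $d_Y(h(s), h(s')) \le K$; boundedness of $h$ (say $\operatorname{diam}(h(\sspace)) \le M < \infty$, equivalently $d_Y(h(s), h(s')) \le M$ for all $s, s'$ using the triangle inequality from a bound on $h$) lets us take $K = M$. Taking $K = \max(1, M)$ covers both cases uniformly.

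The only subtlety — and it is minor — is in Claim 3: one must note that without a boundedness assumption no finite Lipschitz constant can exist in general (e.g. $\sspace = \mathbb{N}$, $Y = \mathbb{R}$, $h(n) = n$, where $d_Y(h(0), h(n)) = n \to \infty$ while $e^{\rI}$ stays bounded by $1$), which is exactly why the statement qualifies ULC by ``bounded $h$''; this also explains why no such qualification is needed for LC and UC. There is no real obstacle here: the whole proof rests on the single elementary fact that $e^{\rI}$-balls of radius less than $1$ are singletons.
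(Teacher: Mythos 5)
Your proof is correct and follows essentially the same route as the paper's: the single observation that $e^{\rI}$-balls of radius $\tfrac12$ are singletons drives all three claims, with the boundedness of $h$ supplying the Lipschitz constant in the ULC case exactly as in the paper. The added counterexample showing boundedness is necessary for ULC is a nice (optional) touch not present in the paper's argument.
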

 \begin{restatable}[Trivial metric]{lemma}{trivialmetric}
 \label{the:DiscreteMetricContt}
$e^{\rT}$ induces the coarsest topology on $\sspace$, consisting solely of $\{\emptyset, \sspace\}$.   Let $(Y, d_Y)$ be any metric space.
 Any function $h:(\sspace, e^{\rI}) \rightarrow (Y, d_Y)$  is LC, UC and ULC iff h is constant.
 \end{restatable}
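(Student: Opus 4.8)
The plan is to prove the two assertions of the lemma separately, and to cut the continuity characterization down to two implications by exploiting the chain $\text{ULC}\Rightarrow\text{UC}\Rightarrow\text{LC}$ from Diagram~(\ref{diag:continuity}): it then suffices to show that $h$ constant implies ULC, and that LC alone already forces $h$ to be constant. (Note the statement contains a typo: the domain should be $(\sspace, e^{\rT})$, not $(\sspace, e^{\rI})$.)

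For the topology claim, the key observation is that $e^{\rT}(s,t)=0$ for all $s,t\in\sspace$, so for every $x\in\sspace$ and every $\epsilon>0$ the ball $B_{e^{\rT}}(x,\epsilon)=\{y\in\sspace : e^{\rT}(y,x)<\epsilon\}$ equals all of $\sspace$. Hence if $U\subseteq\sspace$ is open and nonempty, choosing any $x\in U$ gives $\sspace = B_{e^{\rT}}(x,\epsilon)\subseteq U$, so $U=\sspace$. Thus $\mathcal{T}_{e^{\rT}}=\{\emptyset,\sspace\}$, which is contained in every topology on $\sspace$ and is therefore the coarsest; this is the dual of the identity-metric case in Lemma~\ref{the:DiscreteMetricCont}.

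For the continuity characterization, one direction is immediate: if $h$ is constant then $d_Y(h(x),h(x')) = 0 \le K\, e^{\rT}(x,x')$ for all $x,x'\in\sspace$ and any $K>0$, so $h$ is ULC, hence UC and LC. Conversely, suppose $h$ is LC and fix any $x\in\sspace$. For each $\epsilon>0$, local continuity at $x$ provides $\delta_{x,\epsilon}>0$ with $e^{\rT}(x,x')<\delta_{x,\epsilon}\Rightarrow d_Y(h(x),h(x'))<\epsilon$; but $e^{\rT}(x,x')=0<\delta_{x,\epsilon}$ for every $x'\in\sspace$, so $d_Y(h(x),h(x'))<\epsilon$ for all $x'$, and letting $\epsilon\to 0$ gives $d_Y(h(x),h(x'))=0$, i.e. $h(x')=h(x)$ since $d_Y$ is a metric. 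As $x$ was an arbitrary reference point, $h$ is constant. Because $\text{ULC}\Rightarrow\text{UC}\Rightarrow\text{LC}$, the same conclusion follows if $h$ is assumed only UC or ULC, so all three ``iff'' statements hold.

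There is no genuinely hard step here; the only thing worth flagging — and what makes this lemma the informative mirror of the identity-metric lemma — is the realization that under $e^{\rT}$ every $\epsilon$-ball is the whole space, so that even the weakest continuity notion, imposed at even a single point, already determines $h$ everywhere. The sole caveat is that if one allowed $(Y,d_Y)$ to be merely a pseudo-metric space, ``constant'' would have to be read as ``constant up to the kernel of $d_Y$''; for a true metric $d_Y$ the conclusion is literal constancy.
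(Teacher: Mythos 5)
Your proof is correct and follows essentially the same route as the paper's: the topology claim via the observation that every $\epsilon$-ball under $e^{\rT}$ is all of $\sspace$, the reverse implication by noting a constant function is trivially ULC, and the forward implication by using that $e^{\rT}(x,x')=0<\delta$ for every pair of points forces $d_Y(h(x),h(x'))<\epsilon$ for all $\epsilon$ (the paper phrases this as a contradiction, you argue it directly — an immaterial difference). Your flagging of the typo in the domain (it should be $(\sspace, e^{\rT})$, not $(\sspace, e^{\rI})$) and the caveat about $d_Y$ being a genuine metric rather than a pseudo-metric are both correct.
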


We can also construct a discrete metric from any state aggregation $\phi:\sspace\rightarrow\hat{\sspace}$ as $e^{\phi}(s, t) = e^{\rI}(\phi(s), \phi(t)) = 0$ if $\phi(s)=\phi(t)$, and $1$ otherwise.
However, as stated below, $\eta$-abstractions do not guarantee continuity except in the trivial case where $\eta = 0$.
\begin{restatable}{lemma}{agg}\label{the:AggCont2}
If $\eta=0$, then any function $f$ (resp. bounded function $f$)$: (\sspace, d_{\sspace}) \rightarrow (Y, d_Y)$ is LC and UC (resp. ULC) with respect to the pseudometric $e^{\phi_{f, \eta}}$. However, given a function $f$ and $\eta > 0$, there exists an $\eta$-abstraction $\phi_{f, \eta}$ such that $f$ is not continuous with respect to $e^{\phi_{f, \eta}}$.
\end{restatable}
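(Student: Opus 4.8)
The plan is to split the statement into its positive half ($\eta=0$) and its negative half ($\eta>0$), reducing the first to the discrete-metric analysis already done in \autoref{the:DiscreteMetricCont} and settling the second by an explicit construction built on the trivial metric of \autoref{the:DiscreteMetricContt}.

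For $\eta=0$, I would first observe that $e^{\phi_{f,0}}$ is exactly the discrete pseudo-metric $e^{E}$ of \autoref{def:discreteMetric} attached to the equivalence relation $E=\{(s,t)\in\sspace\times\sspace : \phi_{f,0}(s)=\phi_{f,0}(t)\}$, and that the defining property of an $\eta$-abstraction at $\eta=0$ says precisely that $(s,t)\in E$ implies $f(s,a)=f(t,a)$ for every $a\in\aspace$; equivalently, $f$ (viewed as a map into $(Y,d_Y)$) is constant on each $E$-class. Continuity then follows from the same ``$\delta=1$'' argument as for the identity metric: given $\epsilon>0$ pick $\delta=1$, so that $e^{\phi_{f,0}}(s,t)<\delta$ forces $e^{\phi_{f,0}}(s,t)=0$, hence $(s,t)\in E$, hence $f(s)=f(t)$ and $d_Y(f(s),f(t))=0<\epsilon$. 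As $\delta$ does not depend on the base point this is UC, and a fortiori LC. If moreover $f$ is bounded, set $K=\sup_{s,t}d_Y(f(s),f(t))<\infty$; then $d_Y(f(s),f(t))\le K\,e^{\phi_{f,0}}(s,t)$ holds trivially in each of the two cases $e^{\phi_{f,0}}(s,t)\in\{0,1\}$, so $f$ is ULC. In effect this is \autoref{the:DiscreteMetricCont} run on the coarser partition induced by $\phi_{f,0}$, legitimate because $f$ respects that partition.

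For $\eta>0$, I would exhibit a concrete witness. Take $\sspace=[0,1]$, a single action $a_0$, and $f(s,a_0)=\eta s$ (realizable, e.g., as the reward function once $R_{\text{max}}\ge\eta$). Since $|f(s,a_0)-f(t,a_0)|=\eta|s-t|\le\eta$ for all $s,t\in[0,1]$, collapsing all of $\sspace$ to one point is a legitimate $\eta$-abstraction $\phi_{f,\eta}$, and its induced pseudo-metric is the trivial metric $e^{\rT}$. Because $f$ is non-constant, \autoref{the:DiscreteMetricContt} shows it is not LC with respect to $e^{\rT}=e^{\phi_{f,\eta}}$, hence by Diagram~(\ref{diag:continuity}) it also fails UC, LLC and ULC. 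More generally, the same failure occurs whenever an $\eta$-abstraction places two states $s_0\ne s_1$ with $0<\max_a|f(s_0,a)-f(s_1,a)|\le\eta$ into one cell: then $e^{\phi_{f,\eta}}(s_0,s_1)=0$ while $d_Y(f(s_0),f(s_1))>0$, which kills local continuity at $s_0$.

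The computations are routine; the only point needing care is on the negative side, where one must (i) check the collapsing map genuinely is an $\eta$-abstraction --- precisely the step that spends the $\eta$ of slack --- and (ii) read ``there exists an $\eta$-abstraction'' in the intended sense: for a pathological $f$ whose values are pairwise more than $\eta$ apart, every admissible $\eta$-abstraction collapses to $e^{\rI}$ and $f$ remains continuous, so the real content is that $\eta>0$ \emph{permits} discontinuity, in contrast to the guarantee at $\eta=0$.
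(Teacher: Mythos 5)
Your $\eta=0$ argument is essentially the paper's: both identify $e^{\phi_{f,0}}$ with the discrete pseudo-metric of the partition induced by $\phi_{f,0}$, note that $f$ is constant on each cell, and run the ``small $\delta$'' argument (the paper uses $\delta=\tfrac12$, you use $\delta=1$; both work since the metric is $\{0,1\}$-valued), with the ULC case for bounded $f$ handled by the same reduction to the identity-metric lemma. Where you genuinely diverge --- and improve on the paper --- is the $\eta>0$ half. The paper's own proof there only shows that the $\delta=\tfrac12$ strategy \emph{fails to establish} continuity when $\epsilon<\eta$ (``we cannot conclude anything''), which does not prove the existential claim that some $\eta$-abstraction renders $f$ discontinuous. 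You supply the missing content: an explicit $f$ with $0<\max_a|f(s,a)-f(t,a)|\le\eta$ for some pair $s\ne t$, a legitimate $\eta$-abstraction collapsing that pair (indeed all of $\sspace$, giving $e^{\rT}$), and the conclusion via the trivial-metric lemma that a non-constant $f$ cannot be LC there. You also correctly flag that the statement's quantifier ``given a function $f$'' is too strong as written --- for a constant $f$, or one whose distinct values are all more than $\eta$ apart, every admissible $\eta$-abstraction leaves $f$ continuous --- so the honest content is that $\eta>0$ \emph{permits} discontinuity rather than guaranteeing it for every $f$. This is a sharper and more complete treatment than the one in the appendix.
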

Unlike the discrete metrics defined by $\eta$-abstractions, both bisimulation metrics and the metric induced by the bisimulation relation support continuity of the optimal value function.
\begin{restatable}{lemma}{bisimcont}\label{the:bisimCont}
$Q^*$ (resp. $Q^{\pi}$) is ULC with Lipschitz constant 1 with respect to $d^{\sim}$ (resp. $d^{\sim_{\pi}}$).
\end{restatable}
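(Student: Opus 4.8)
The plan is to first establish that $V^*$ (resp.\ $V^{\pi}$) is ULC with Lipschitz constant $1$ with respect to $d^{\sim}$ (resp.\ $d^{\sim_{\pi}}$), and then read off the bound on $Q^*$ (resp.\ $Q^{\pi}$) directly from the Bellman equation. The argument rests on two facts already in hand. First, by Lemma~\ref{lemmacontbisim}, $d^{\sim}$ is a \emph{fixed point} of $F$, i.e.\ $d^{\sim}(x,y)=\max_{a}\big(|\rew^a_x-\rew^a_y|+\gamma\,W_{d^{\sim}}(\prob^a_x,\prob^a_y)\big)$ for all $x,y$. Second, the Bellman optimality operator $T$, $(TV)(s)=\max_a\big(\rew^a_s+\gamma\,\mathbb{E}_{s'\sim\prob^a_s}V(s')\big)$, is a $\gamma$-contraction in the sup norm on bounded measurable functions, so the value-iteration iterates $V_n:=T^nV_0$ starting from $V_0\equiv 0$ converge uniformly to $V^*$. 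I will also use the elementary ``primal'' half of Kantorovich--Rubinstein duality: if $h:\sspace\to\rR$ is bounded, measurable, and $1$-Lipschitz w.r.t.\ $d^{\sim}$, then for any coupling $\lambda\in\Gamma(\prob^a_x,\prob^a_y)$ we have $|\mathbb{E}_{\prob^a_x}h-\mathbb{E}_{\prob^a_y}h|=|\mathbb{E}_{(s',t')\sim\lambda}[h(s')-h(t')]|\le\mathbb{E}_\lambda[d^{\sim}(s',t')]$, and taking the infimum over $\lambda$ gives $|\mathbb{E}_{\prob^a_x}h-\mathbb{E}_{\prob^a_y}h|\le W_{d^{\sim}}(\prob^a_x,\prob^a_y)$.

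With these in hand, I induct on $n$ to show that each $V_n$ is $1$-Lipschitz w.r.t.\ $d^{\sim}$. The base case $V_0\equiv 0$ is immediate. For the inductive step, assuming $V_n$ is $1$-Lipschitz, combine the elementary inequality $|\max_a u_a-\max_a v_a|\le\max_a|u_a-v_a|$ with the primal bound applied to $h=V_n$, and then the fixed-point identity for $d^{\sim}$, to obtain for all $x,y$
\[
|V_{n+1}(x)-V_{n+1}(y)| \;\le\; \max_{a}\big(|\rew^a_x-\rew^a_y|+\gamma\,W_{d^{\sim}}(\prob^a_x,\prob^a_y)\big) \;=\; d^{\sim}(x,y),
\]
so $V_{n+1}$ is again $1$-Lipschitz. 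Since a pointwise limit of $1$-Lipschitz functions is $1$-Lipschitz and $V_n\to V^*$ uniformly, $V^*$ is $1$-Lipschitz w.r.t.\ $d^{\sim}$. Finally, applying the primal bound once more with $h=V^*$ inside $Q^*(s,a)=\rew^a_s+\gamma\,\mathbb{E}_{s'\sim\prob^a_s}V^*(s')$ yields, for every action $a$, $|Q^*(x,a)-Q^*(y,a)|\le|\rew^a_x-\rew^a_y|+\gamma\,W_{d^{\sim}}(\prob^a_x,\prob^a_y)\le d^{\sim}(x,y)$, which is exactly ULC with constant $1$.

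The $\pi$-bisimulation case is entirely analogous: replace the Bellman optimality operator $T$ by the policy-evaluation operator $T^{\pi}$ (a $\gamma$-contraction with fixed point $V^{\pi}$), and replace $F$ by the operator $F^{\pi}$ from the appendix, of which $d^{\sim_{\pi}}$ is the least fixed point and which by construction controls exactly the reward and transition terms appearing in the Bellman recurrence; the same value-iteration induction then gives that $V^{\pi}$, and hence $Q^{\pi}$, is $1$-Lipschitz w.r.t.\ $d^{\sim_{\pi}}$. The step I expect to be the real obstacle is conceptual rather than computational: the desired Lipschitz bound on $V$ is self-referential, since the Wasserstein term $W_{d^{\sim}}$ that controls $\mathbb{E}_{\prob^a_x}V-\mathbb{E}_{\prob^a_y}V$ can only be tied back to $d^{\sim}$ once one already knows $V$ is $1$-Lipschitz; the value-iteration induction is precisely what breaks this circularity, bootstrapping the property from $V_n$ to $V_{n+1}$ and then to the limit. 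A secondary, purely technical point is measurability: $W_{d^{\sim}}$ must be well defined (so $d^{\sim}$ must be Borel measurable) and each $V_n$ must be bounded and measurable so the expectations make sense, both of which are inherited from the construction underlying Lemma~\ref{lemmacontbisim}.
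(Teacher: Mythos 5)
Your proof is correct and follows essentially the same route as the paper's: both rest on the Bellman equation, the coupling/Kantorovich--Rubinstein bound $|\mathbb{E}_{\prob^a_x}h-\mathbb{E}_{\prob^a_y}h|\le W_{d^{\sim}}(\prob^a_x,\prob^a_y)$ for $1$-Lipschitz $h$, and the fixed-point identity $d^{\sim}=F(d^{\sim})$. The only difference is that you explicitly bootstrap the needed fact that $V^*$ is $1$-Lipschitz w.r.t.\ $d^{\sim}$ via the value-iteration induction, whereas the paper's chain of inequalities uses that fact implicitly (its step bounding $\bigl|\sum_{s'}V^*(s')(\cP_s^a(s')-\cP_t^a(s'))\bigr|$ by the Wasserstein distance is exactly the dual bound for $1$-Lipschitz $V^*$, imported from \citet{ferns2004metrics}), so your version is simply more self-contained.
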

\begin{restatable}{corollary}{bisimcontbis}
\label{lemma:Qcont}
$Q^*$ (resp. $Q^{\pi}$) is ULC with Lipschitz constant $V_{\text{max}}$ with respect to $e^{\sim}$ (resp. $e^{\sim_{\pi}}$).
\end{restatable}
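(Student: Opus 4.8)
The plan is to obtain this as a direct corollary of Lemma~\ref{the:bisimCont}, using only two elementary facts: that $e^{\sim}$ is a $\{0,1\}$-valued pseudo-metric whose kernel is exactly the bisimulation relation $\sim$, and that $Q^*$ is uniformly bounded by $V_{\text{max}}$ (being an expected discounted sum of rewards in $[0,R_{\text{max}}]$). First I would record that, by construction of $d^{\sim}$ as a bisimulation metric in Lemma~\ref{lemmacontbisim}, its kernel coincides with $\sim$, i.e.\ $d^{\sim}(s,t)=0$ iff $s\sim t$; and that, by Definition~\ref{def:discreteMetric} applied to $\sim$, we have $e^{\sim}(s,t)=0$ when $s\sim t$ and $e^{\sim}(s,t)=1$ otherwise.

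Next, fix $s,t\in\sspace$ and $a,b\in\aspace$ and split into cases according to the decomposition $d_{\sspace\times\aspace}\big((s,a),(t,b)\big)=e^{\sim}(s,t)+d_{\aspace}(a,b)$ with $d_{\aspace}$ the identity metric. If $a=b$ and $s\sim t$, Lemma~\ref{the:bisimCont} gives $|Q^*(s,a)-Q^*(t,a)|\le d^{\sim}(s,t)=0=V_{\text{max}}\,e^{\sim}(s,t)$. If $a=b$ and $s\not\sim t$, then $e^{\sim}(s,t)=1$ and boundedness of $Q^*$ yields $|Q^*(s,a)-Q^*(t,a)|\le V_{\text{max}}=V_{\text{max}}\,e^{\sim}(s,t)$. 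Finally, if $a\neq b$, the identity metric contributes $d_{\aspace}(a,b)=1$, so $V_{\text{max}}\big(e^{\sim}(s,t)+d_{\aspace}(a,b)\big)\ge V_{\text{max}}\ge|Q^*(s,a)-Q^*(t,b)|$, again by boundedness. In every case $|Q^*(s,a)-Q^*(t,b)|\le V_{\text{max}}\big(e^{\sim}(s,t)+d_{\aspace}(a,b)\big)$, which is exactly uniform Lipschitz continuity with constant $V_{\text{max}}$. The statement for $Q^\pi$ follows verbatim, replacing $\sim$ by $\sim_\pi$ and $d^{\sim}$ by $d^{\sim_\pi}$, invoking the $Q^\pi$ half of Lemma~\ref{the:bisimCont} and the bound $Q^\pi\in[0,V_{\text{max}}]$.

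There is no real obstacle here; the only point requiring a little care is to avoid relying on the (true but separately-argued) bound $d^{\sim}\le V_{\text{max}}$ — instead I would use boundedness of $Q^*$ itself in the $s\not\sim t$ case, which keeps the argument fully self-contained given Lemma~\ref{the:bisimCont}. (Alternatively, one can first show $d^{\sim}(s,t)\le V_{\text{max}}\,e^{\sim}(s,t)$ pointwise, since the set of pseudo-metrics bounded by $V_{\text{max}}$ is closed under the operator $F$ of Lemma~\ref{lemmacontbisim}, and then compose with Lemma~\ref{the:bisimCont}; I would mention this only as a remark.)
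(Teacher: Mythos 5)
Your proof is correct and takes essentially the same route as the paper: the paper also derives this as a direct corollary of Lemma~\ref{the:bisimCont}, obtaining the constant $V_{\text{max}}$ from a normalization in the $s\not\sim t$ case (the paper bounds $d^{\sim}\le V_{\text{max}}$ there, whereas you bound $|Q^*(s,a)-Q^*(t,a)|\le V_{\text{max}}$ directly, a distinction without a difference). Your explicit handling of the $a\neq b$ case under the product metric is slightly more careful than the paper's one-line argument, but the underlying idea is identical.
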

We note that \citet{ferns2004metrics} proved a weaker statement involving $V^*$ (resp. \citet{castro09notions}, $V^{\pi}$).
To summarize, metrics that are too coarse may fail to provide the requisite continuity of reinforcement learning functions. Bisimulation metrics are particularly desirable as they achieve both a certain degree of coarseness, while preserving continuity. In practice, however, \citeauthor{ferns2004metrics}'s bisimulation metric is difficult to compute and estimate, and tends to be conservative -- as long as two states can be distinguished by action sequences, bisimulation will keep them apart.
\subsection{Value-based metrics}
\label{sec:contMetrics}

As an alternative to bisimulation metrics, we consider simple metrics constructed from value functions and study their continuity offerings. These metrics are simple in that they are defined in terms of differences between values, or functions of values, at the states being compared. The last metric, $d_{\Delta_{\forall}}$, is particularly appealing as it can be approximated, as we describe below. Under this metric, all $Q$-functions are Lipschitz continuous, supporting some of the more demanding continuous-state exploration algorithms \citep{pazis2013pac}.

\begin{restatable}{lemma}{Deltametrics}\label{the:Deltametrics}For a given MDP, let $Q^\pi$ be the Q-function of policy $\pi$, and $Q^*$ the optimal Q-function. The following are continuous pseudo-metrics:
\begin{enumerate}
\item $d_{{\Delta^*}}(s, s')=\max\limits_{a\in\aspace}|Q^*(s, a)-Q^*(s', a)|$\\
\item $d_{\Delta_{\pi}}(s, s')=\max\limits_{a\in\aspace}|Q^{\pi}(s, a)-Q^{\pi}(s', a)|$\\
\item $d_{\Delta_{\forall}}(s, s')=\max\limits_{\pi\in\Pi, a\in\aspace}|Q^{\pi}(s, a)-Q^{\pi}(s', a)|$\\
\end{enumerate}

$Q^*$ (resp. $Q^{\pi}$) is ULC with Lipschitz constant 1 wrt to $d_{\Delta^*}$ (resp. $d_{\Delta^{\pi}}$). $Q^{\pi}$ is  ULC with Lipschitz constant 1 wrt to $d_{{\Delta_{\forall}}}$ for any $\pi \in \Pi$.
\end{restatable}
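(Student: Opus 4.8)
The plan is to first establish that each of the three candidate functions is a genuine pseudo-metric, and then to read off the uniform Lipschitz continuity claims almost immediately from the definitions. For the pseudo-metric property, I would treat all three cases uniformly: each $d_{\Delta}$ has the form $\sup_{i \in I} |g_i(s) - g_i(s')|$ for some index set $I$ and real-valued functions $g_i$ (for $d_{\Delta^*}$, $I = \aspace$ and $g_a = Q^*(\cdot, a)$; for $d_{\Delta_\pi}$, $I = \aspace$ and $g_a = Q^\pi(\cdot, a)$; for $d_{\Delta_\forall}$, $I = \Pi \times \aspace$ and $g_{\pi,a} = Q^\pi(\cdot, a)$). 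For any fixed $i$, the map $(s,s') \mapsto |g_i(s) - g_i(s')|$ is non-negative, symmetric, vanishes when $s = s'$, and satisfies the triangle inequality because $|\cdot|$ is a norm on $\rR$. A pointwise supremum of pseudo-metrics is again a pseudo-metric (non-negativity, symmetry, and $d(s,s)=0$ are preserved trivially; the triangle inequality is preserved because $\sup_i (a_i + b_i) \le \sup_i a_i + \sup_i b_i$). The only genuine point to check is that the supremum is \emph{finite}: this follows from boundedness of the $g_i$, since $Q^\pi$ and $Q^*$ are bounded by $V_{\text{max}} = R_{\text{max}}/(1-\gamma)$, so every $d_\Delta$ is bounded above by $V_{\text{max}}$. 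For $d_{\Delta^*}$ and $d_{\Delta_\pi}$ the index set is finite (as $\aspace$ is discrete/finite), so the $\max$ is attained and finiteness is automatic; for $d_{\Delta_\forall}$ the supremum over the infinite set $\Pi$ still needs the uniform bound $V_{\text{max}}$, which is exactly what makes it well-defined.

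For the continuity of the pseudo-metric itself as a function on $\sspace$ (which is what the phrase "continuous pseudo-metrics" should mean here — continuity with respect to the topology it induces), this is a standard fact: any pseudo-metric $d$ is $1$-Lipschitz in each argument with respect to itself, by the reverse triangle inequality $|d(s, u) - d(s', u)| \le d(s, s')$. I would state this once and apply it to all three. Alternatively, if the intended reading is continuity with respect to some ambient metric on $\sspace$, one would invoke Lemma~\ref{the:Deltametrics}'s own Lipschitz conclusions together with the triangle inequality; but I expect the self-referential reading is what is meant, matching the usage of "continuous pseudo-metric" for $d^\sim$ in Lemma~\ref{lemmacontbisim}.

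For the uniform Lipschitz continuity statements, these are direct. For $Q^*$ with respect to $d_{\Delta^*}$: fix $a \in \aspace$; then $|Q^*(s,a) - Q^*(s',a)| \le \max_{b \in \aspace} |Q^*(s,b) - Q^*(s',b)| = d_{\Delta^*}(s, s')$, so $Q^*(\cdot, a)$ is $1$-Lipschitz, and since this holds for every $a$ with the same constant, $Q^*: (\sspace \times \aspace, d_{\sspace \times \aspace}) \to \rR$ is $1$-Lipschitz when $d_\aspace$ is the identity metric (the $d_A$ term only helps, as distinct actions are at distance $1 > 0$, so the inequality $d_Y(Q^*(s,a), Q^*(s',a')) \le d_{\sspace \times \aspace}(s,a,s',a')$ needs to be checked only for $a = a'$, where it reduces to the state case). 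The argument for $Q^\pi$ with respect to $d_{\Delta_\pi}$ is identical. For $Q^\pi$ with respect to $d_{\Delta_\forall}$: the same chain gives $|Q^\pi(s,a) - Q^\pi(s',a)| \le \max_{\pi' \in \Pi, b \in \aspace} |Q^{\pi'}(s,b) - Q^{\pi'}(s',b)| = d_{\Delta_\forall}(s,s')$ for every $\pi$ and $a$, which is exactly the claim that $Q^\pi$ is $1$-ULC with respect to $d_{\Delta_\forall}$ \emph{uniformly over} $\pi \in \Pi$.

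The main obstacle — such as it is — is entirely in the well-definedness of $d_{\Delta_\forall}$: one must be careful that the supremum over the (generally uncountable) policy space $\Pi$ is finite and that it still satisfies the triangle inequality. Both are handled by the uniform bound $\|Q^\pi\|_\infty \le V_{\text{max}}$ and by the fact that the triangle inequality for a supremum of pseudo-metrics follows termwise; there is no interchange-of-limits subtlety because we never need the supremum to be attained. Everything else is a one-line consequence of the reverse triangle inequality and the definition of the maximum.
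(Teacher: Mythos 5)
Your proposal is correct and follows essentially the same route as the paper: verify the pseudo-metric axioms via the termwise triangle inequality for the (sup of) absolute differences, then obtain each ULC claim by noting that the relevant $\max$ dominates the single-index term $|Q(s,a)-Q(s',a)|$. Your packaging of this as ``a supremum of pseudo-metrics is a pseudo-metric,'' together with the explicit remark that $d_{\Delta_\forall}$ is well-defined because $\|Q^\pi\|_\infty \le V_{\text{max}}$ uniformly over $\Pi$, is a small but welcome tightening of the paper's ``similarly'' for the third metric.
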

\begin{restatable}{remark}{avfmetrics}\label{the:avfmetric}
When $\sspace$ is finite, the number of policies to consider to compute $d_{\Delta_{\forall}}$ is finite: $d_{\Delta_{\forall}}(s, s')=\max\limits_{\pi\in\Pi, a\in\aspace}|Q^{\pi}(s, a)-Q^{\pi}(s', a)|
=\max\limits_{\pi\in \Pi_{\text{AVF}}, a\in\aspace}|Q^{\pi}(s, a)-Q^{\pi}(s', a)|$,
where ${\Pi}_{AVF}$ is the finite set of extremal policies corresponding to Adversarial Value Functions (AVFs) \citep{bellemare19geometric}.
\end{restatable}
$d_{\Delta_{\forall}}$ provides strong continuity of the value-function for all policies contrary to any other metric that has been used in the literature.  Since computing $d_{\Delta_{\forall}}$ is computationally expensive, we will approximate it by the pseudometric $d_{\widetilde{\text{AVF}}(n)}=\max\limits_{\pi\in \Pi_{\widetilde{\text{AVF}}(n)}, a\in\aspace}|Q^{\pi}(s, a)-Q^{\pi}(s', a)|$, where $\Pi_{\widetilde{\text{AVF}}(n)}$ are $n$ samples from the set of extremal policies $\Pi_{\text{AVF}}$.

\subsection{Categorizing metrics, continuity, and complexity}

\begin{table*}[!h]
  \vspace{-1em}
  \centering
  \caption{Categorization of state metrics, their continuity implications, and their complexity (when known).
  The notation $\{y\}^{\sspace}$ denotes any function $h: \sspace \rightarrow Y$ that is constant, $Y^{\sspace}$ refers to all functions $h: \sspace \rightarrow Y$. $\mathcal{B}(Y^{\sspace})$ (resp. $\mathcal{B}_L(Y^{\sspace})$ ) is a bounded (resp. locally bounded) function $h: \sspace \rightarrow Y$. ``-'' denotes an absence of LC, UC, ULC and LLC. In the complexity column, $\delta$ is the desired accuracy.} 
    \label{tab:properties}
  \begin{tabularx}{\textwidth}{@{}cccccc@{}} 
    \toprule
    \textbf{Metric} & \textbf{LC} & \textbf{UC} & \textbf{ULC} & \textbf{LLC} &\textbf{Complexity} \\
    \midrule
    Discrete metric $e^{\rI}$& $Y^{\sspace}$& $Y^{\sspace}$ & $\mathcal{B}(Y^{\sspace})$& $\mathcal{B}_L(Y^{\sspace})$& $O(|\sspace|)$\\ 
   Trivial metric $e^{\rT}$& $\{y\}^{\sspace}$& $\{y\}^{\sspace}$ & $\{y\}^{\sspace}$& $\{y\}^{\sspace}$ & $O(1)$\\ 
    Model-irrelevance  & $\prob$, $\rew$  & $\prob$, $\rew$ &$\prob$, $\rew$ &$\prob$, $\rew$ \\
    $Q^{\pi}$-irrelevance & $Q^{\pi}$ &$Q^{\pi}$ &$Q^{\pi}$ & $Q^{\pi}$\\
    $Q^*$-irrelevance  & $Q^*$ &$Q^*$ &$Q^*$ & $Q^*$ \\
    $a^*$-irrelevance  & $Q^*$ & $Q^*$&$Q^*$ & $Q^*$ \\
    Approx. abstraction & - &-  &- & -\\
    $e^{\sim}$& $Q^*$   &  $Q^*$  & $Q^*$ &$Q^*$ & $O(|\aspace||\sspace|^3)$   \\
    $d^{\sim}$ & $Q^*$  &$Q^*$  & $Q^*$ &$Q^*$ & $O\big(|\cA||\cS|^5\log |\cS|\frac{\ln\delta}{\ln \gamma}\big)$ \\
    $e^{\sim_{\pi}}$ &  $Q^{\pi}$& $Q^{\pi}$& $Q^{\pi}$ & $Q^{\pi}$ & $O(|\sspace|^3)$  \\
    $d^{\sim_{\pi}}$ & $Q^{\pi}$  &$Q^{\pi}$ & $Q^{\pi}$ & $Q^{\pi}$ & $O\big(|\cS|^5\log |\cS|\frac{\ln\delta}{\ln \gamma}\big)$\\
    $e^{\sim_{lax}}$ &$V^*$   &$V^*$  & $V^*$ &$V^*$ & $O(|\aspace|^2|\sspace|^3)$ \\
    $d^{\sim_{lax}}$& $V^*$  &$V^*$  & $V^*$ &$V^*$  &$O\big(|\cA|^2|\cS|^5\log |\cS|\frac{\ln\delta}{\ln \gamma}\big)$\\
    $d_{\Delta^*}$ & $Q^*$  &  $Q^*$ &$Q^*$&$Q^*$ &$O\big(|\sspace|^2|\aspace|\frac{\log(\rew_{\text{max}}^{-1}\delta(1-\gamma))}{\log(\gamma)}\big)$ \\
    $d_{\Delta_{\pi}}$ &$Q^{\pi}$&$Q^{\pi}$&$Q^{\pi}$&$Q^{\pi}$ &$O\big(|\sspace|^2|\aspace|\frac{\log(\rew_{\text{max}}^{-1}\delta(1-\gamma))}{\log(\gamma)}\big)$\\ 
    $d_{\Delta_{\forall}}$& $Q^{\pi}$, $\forall \pi \in \Pi$ & $Q^{\pi}$, $\forall \pi \in \Pi$ &$Q^{\pi}$, $\forall \pi \in \Pi$&$Q^{\pi}$, $\forall \pi \in \Pi$  & NP-hard? \citep{bellemare19geometric}   \\

    \bottomrule
  \end{tabularx}
    \vspace{-1em}
      \label{fig:largertable}
\end{table*}
We now formally present in \autoref{the:topologyships} the topological relationships between the different metrics. This
hierarchy is important for generalization purposes as it
provides a comparison between the shapes of different
neighbourhoods which serve as a basis for RL algorithms on continuous state spaces.
\begin{theorem}
\label{the:topologyships}
  The relationships between the topologies induced by the metrics in \autoref{fig:largertable} are given by the following diagram.  We denote by $d_1\rightarrow d_2$ when $\mathcal{T}_{d_1} \subset  \mathcal{T}_{d_2}$, that is, when $\cT_{d_1}$ is coarser than $\cT_{d_2}$. Here $d$ denotes any arbitrary metric.
  \begin{equation*}
    \begin{tikzcd}
      e^{\sim_{lax}} \arrow[d] &d^{\sim_{lax}} \arrow[l]  \arrow[d] & d_{\widetilde{\text{AVF}}(n)} \arrow{dr} & d  \arrow[r]  &e^{\rI} \\
      e^{\sim}  & d^{\sim}\arrow[l]& d_{\Delta^*}\arrow[l]\arrow[r] & d_{\Delta_{\forall}} &  d_{\Delta_{\pi}}\arrow[l] \arrow[d]&&& & & \\
      d& e^{\rT} \arrow[l] & & e^{\sim_{\pi}}& d^{\sim_{\pi}} \arrow[l]& 
    \end{tikzcd}
    \label{diag:metricRel}
  \end{equation*}
\end{theorem}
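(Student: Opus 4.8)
The plan is to verify each arrow in the diagram separately, using the basic fact that $\mathcal{T}_{d_1}\subset\mathcal{T}_{d_2}$ holds precisely when every $d_1$-open set is $d_2$-open, which for (pseudo-)metrics is equivalent to: for every $x$ and every $\epsilon>0$ there is a $\delta>0$ with $B_{d_2}(x,\delta)\subset B_{d_1}(x,\epsilon)$. So the whole theorem reduces to a collection of pointwise ``ball-domination'' statements, most of which follow from an inequality $d_1\le g(d_2)$ for some continuous nondecreasing $g$ with $g(0)=0$ (often $g=\mathrm{id}$ up to a constant). I would organize the arrows into three groups. First, the arrows into $e^{\rI}$ from an arbitrary metric $d$: the $e^{\rI}$-topology is discrete (Lemma~\ref{the:DiscreteMetricCont}), so \emph{every} topology is coarser, giving $d\rightarrow e^{\rI}$ for free; symmetrically $e^{\rT}\rightarrow d$ is immediate since $\mathcal{T}_{e^{\rT}}=\{\emptyset,\sspace\}$ is contained in every topology (Lemma~\ref{the:DiscreteMetricContt}). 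Second, the ``discrete refines continuous'' arrows $d^{\sim}\rightarrow e^{\sim}$, $d^{\sim_\pi}\rightarrow e^{\sim_\pi}$, $d^{\sim_{lax}}\rightarrow e^{\sim_{lax}}$: here the key point is that the kernel of the bisimulation metric equals the bisimulation relation (the defining property recalled before Lemma~\ref{lemmacontbisim}), so $d^{\sim}(s,t)=0\iff s\sim t\iff e^{\sim}(s,t)=0$; since $e^{\sim}$ takes only the values $0$ and $1$, any $e^{\sim}$-ball of radius $\le 1$ is just the kernel class of its center, which is $d^{\sim}$-open (it is the complement of the closed set of points at $d^{\sim}$-distance $\ge$ some positive separation — or more simply, $B_{d^\sim}(s,\delta)$ for small $\delta$ lands inside that class once one checks the class is $d^\sim$-open, which follows because points outside it are at strictly positive $d^\sim$-distance is \emph{not} automatic, so instead I argue directly that the kernel class is open as the preimage of $\{0\}$ under... ) — more robustly: $\mathcal{T}_{e^{\sim}}$ consists exactly of unions of kernel classes, and each kernel class is $d^{\sim}$-open because $d^{\sim}$ is constant ($=0$) on it and, being a pseudometric whose kernel is that relation, any point in a \emph{different} class has $d^\sim$-distance bounded below only if there are finitely many classes; in the finite-$\sspace$ setting of the table this holds, so pick $\delta$ smaller than the minimum positive $d^\sim$-value.

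Third, and most substantively, the arrows among the value-based metrics: $d^{\sim}\rightarrow d_{\Delta^*}$, $d_{\Delta^*}\rightarrow d_{\Delta_\forall}$, $d_{\Delta_\pi}\rightarrow d_{\Delta_\forall}$, $d_{\widetilde{\text{AVF}}(n)}\rightarrow d_{\Delta_\forall}$. Each of these I would get from a pointwise inequality. For $d_{\Delta^*}\le d^{\sim}$: by Lemma~\ref{the:bisimCont}, $Q^*$ is $1$-Lipschitz w.r.t.\ $d^{\sim}$, i.e.\ $|Q^*(s,a)-Q^*(s',a)|\le d^{\sim}(s,s')$ for all $a$; taking the max over $a$ gives $d_{\Delta^*}(s,s')\le d^{\sim}(s,s')$, hence $B_{d^\sim}(s,\epsilon)\subset B_{d_{\Delta^*}}(s,\epsilon)$ and the topology of $d^\sim$ is finer, matching the arrow direction $d^\sim\to d_{\Delta^*}$. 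For $d_{\Delta^*}\le d_{\Delta_\forall}$ and $d_{\Delta_\pi}\le d_{\Delta_\forall}$: these are trivial since $d_{\Delta_\forall}$ maximizes over a superset (all policies, including the optimal $\pi^*$ and the given $\pi$) of what $d_{\Delta^*}$ resp.\ $d_{\Delta_\pi}$ maximizes over — so again $d_{\Delta^*}\le d_{\Delta_\forall}$ pointwise, giving the coarser-to-finer arrow. Likewise $d_{\widetilde{\text{AVF}}(n)}\le d_{\Delta_\forall}$ since $\Pi_{\widetilde{\text{AVF}}(n)}\subset\Pi_{\text{AVF}}\subset\Pi$ and, by Remark~\ref{the:avfmetric}, the max over $\Pi_{\text{AVF}}$ equals the max over $\Pi$. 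Finally the lax-bisimulation arrows $d^{\sim_{lax}}\rightarrow e^{\sim_{lax}}$ (as in the discrete-refines-continuous group) and the single vertical arrow $e^{\sim_{lax}}\rightarrow e^{\sim}$ and $d^{\sim_{lax}}\rightarrow d^{\sim}$: for the latter two I would show $\sim\;\subseteq\;\sim_{lax}$ (equal actions is a special case of matching actions), hence a kernel class of $e^{\sim}$ is a union of kernel classes of $e^{\sim_{lax}}$, so $\mathcal{T}_{e^{\sim}}\subset\mathcal{T}_{e^{\sim_{lax}}}$; for the metric version I would use the analogous pointwise bound $d^{\sim_{lax}}\le d^{\sim}$ coming from the functional definitions (the lax operator infimizes over action matchings, the plain one fixes $a=b$), and conclude as before.

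The main obstacle I anticipate is \emph{not} the direction of any single arrow but rather establishing the \emph{absence} of the arrows that the diagram omits — i.e.\ showing these topological containments are strict or incomparable, which the theorem implicitly asserts by drawing no edge. For those I expect to need explicit small-MDP counterexamples: e.g.\ an MDP where two states have identical $Q^*$ (so they are identified by $d_{\Delta^*}$) but are separated by $d^{\sim}$ because some suboptimal action sequence distinguishes them, witnessing that $d^\sim\to d_{\Delta^*}$ is not an equivalence; and an MDP distinguishing $d_{\Delta_\forall}$ from $e^{\sim}$/$d^\sim$ in both directions. The secondary subtlety is the lower-bound-on-positive-distance argument used to show the discrete metrics' topologies are generated by their kernel classes: this is clean when $\sspace$ is finite (as in Table~\ref{fig:largertable}) but needs care in general pseudometric generality, so I would either restrict the statement to finite $\sspace$ or note that $e^{\sim}$ being $\{0,1\}$-valued already makes every kernel class both open and closed. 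Modulo those points, the proof is a finite enumeration of the pointwise inequalities above plus the handful of counterexamples, and I would present it as a table-by-table walk through the diagram.
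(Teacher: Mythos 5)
Your overall strategy --- reduce each arrow to a pointwise inequality $d_1\le \alpha d_2$ and invoke the ball-domination criterion $B_{d_2}(x,\epsilon/\alpha)\subset B_{d_1}(x,\epsilon)$ --- is exactly the paper's; its proof isolates this as a standalone lemma and then checks one inequality per arrow. Your handling of $e^{\rT}\rightarrow d$, $d\rightarrow e^{\rI}$, $d_{\Delta^*}\rightarrow d_{\Delta_\forall}$, $d_{\Delta_\pi}\rightarrow d_{\Delta_\forall}$, $d_{\widetilde{\text{AVF}}(n)}\rightarrow d_{\Delta_\forall}$, and $d_{\Delta^*}\rightarrow d^{\sim}$ matches the paper (for the last one you write the arrow as $d^{\sim}\rightarrow d_{\Delta^*}$ but draw the correct conclusion that $\mathcal{T}_{d^{\sim}}$ is finer, so that is only a labelling slip). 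The genuine gap is that for the remaining two groups you prove the \emph{converse} of the asserted containment. The arrow $d^{\sim}\rightarrow e^{\sim}$ claims $\mathcal{T}_{d^{\sim}}\subset\mathcal{T}_{e^{\sim}}$; by your own criterion this follows at once from $d^{\sim}\le \frac{R_{\text{max}}}{1-\gamma}\,e^{\sim}$, which holds because $d^{\sim}$ is bounded by $\frac{R_{\text{max}}}{1-\gamma}$ and vanishes exactly on $\sim$ (this is the paper's argument, and the same one-liner handles the lax and $\pi$ variants). Instead you set out to show that each kernel class is $d^{\sim}$-open, which --- if it held --- would give $\mathcal{T}_{e^{\sim}}\subset\mathcal{T}_{d^{\sim}}$, the wrong inclusion, and which moreover forces you into the minimum-positive-distance / finiteness detour you flag; none of that is needed for the stated arrow. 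The same inversion occurs for $e^{\sim_{lax}}\rightarrow e^{\sim}$: since $\sim\,\subseteq\,\sim_{lax}$, each $\sim_{lax}$-class is a union of $\sim$-classes (not the other way around, as you wrote), hence every $e^{\sim_{lax}}$-open set is $e^{\sim}$-open and $\mathcal{T}_{e^{\sim_{lax}}}\subset\mathcal{T}_{e^{\sim}}$; you conclude $\mathcal{T}_{e^{\sim}}\subset\mathcal{T}_{e^{\sim_{lax}}}$, which is the opposite of what the diagram asserts. Equivalently, the paper simply verifies $e^{\sim_{lax}}\le e^{\sim}$ and $d^{\sim_{lax}}\le d^{\sim}$ pointwise (the latter citing Taylor et al.) and applies the lemma.

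On your anticipated main obstacle: the paper's proof does not attempt to certify the \emph{absence} of arrows with counterexamples; it only establishes the drawn containments, so that burden is not part of the argument you need to reproduce. Once the two inverted containments are flipped and every edge is routed through the single inequality lemma, your proof coincides with the paper's.
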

\begin{proof}
All proofs can be found in the appendix. The relation $d^{\sim_{lax}} \rightarrow d^{\sim}$ was shown by \citet{taylor2009bounding} but not expressed in topological terms.
\end{proof}
We summarize in \autoref{fig:largertable} our continuity results mentioned throughout this section and supplement them with the continuity of the lax-bisimulation metric proven in \cite{taylor2009bounding}. To avoid over-cluttering the table, we only specify the strongest form of functional continuity according to \autoref{thm:contRel}.
As an additional key differentiator, we also note the complexity of computing these metrics from a full model of the environment, which gives some indication about the difficulty of performing state abstraction. Proofs are provided in the appendix.

From a computational point of view, all continuous metrics can be approximated using deep learning techniques which makes them even more attractive to build representations. Atari 2600 experiments by \citet{castro2020scalable} show that $\pi$-bisimulation metrics do perform well in larger domains. This is also supported by \citep{zhang2020learning} who use an encoder architecture to learn a representation that respects the bisimulation metric.

\section{Empirical evaluation}
\label{sec:empirical}
\begin{figure*}[h!]
  \centering
  \includegraphics[width=0.3\textwidth]{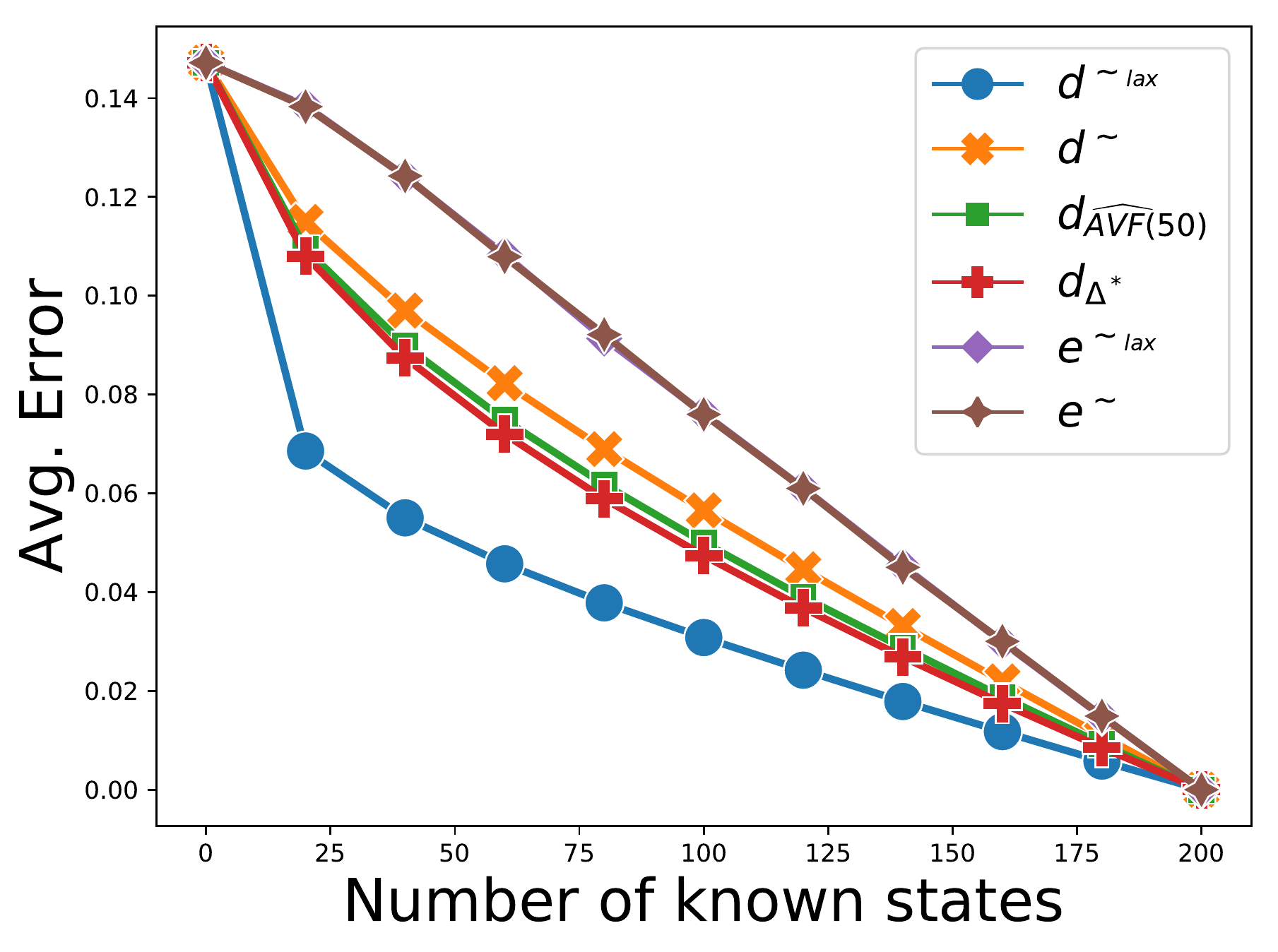}
    \includegraphics[width=0.3\textwidth]{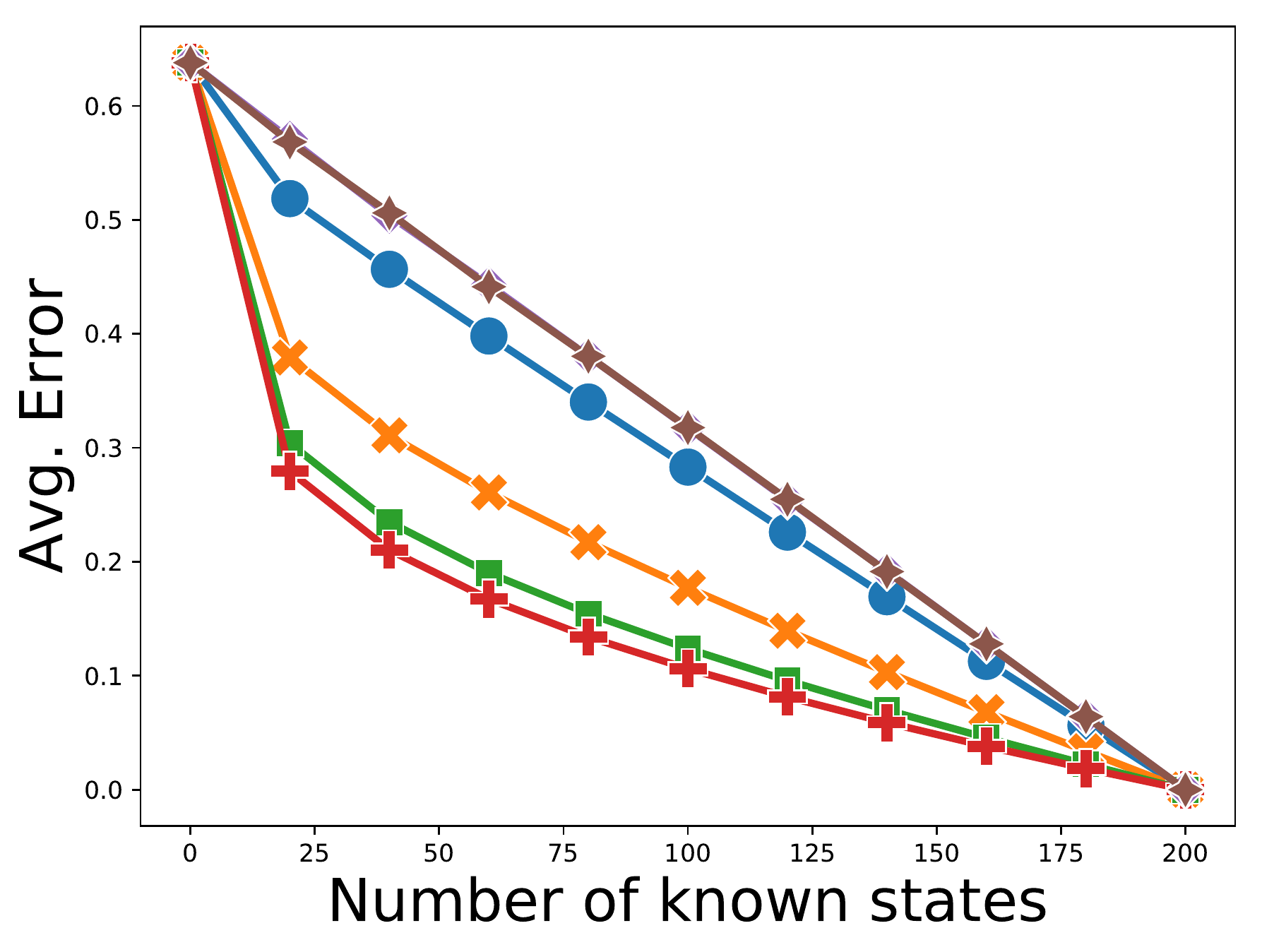}
  \includegraphics[width=0.3\textwidth]{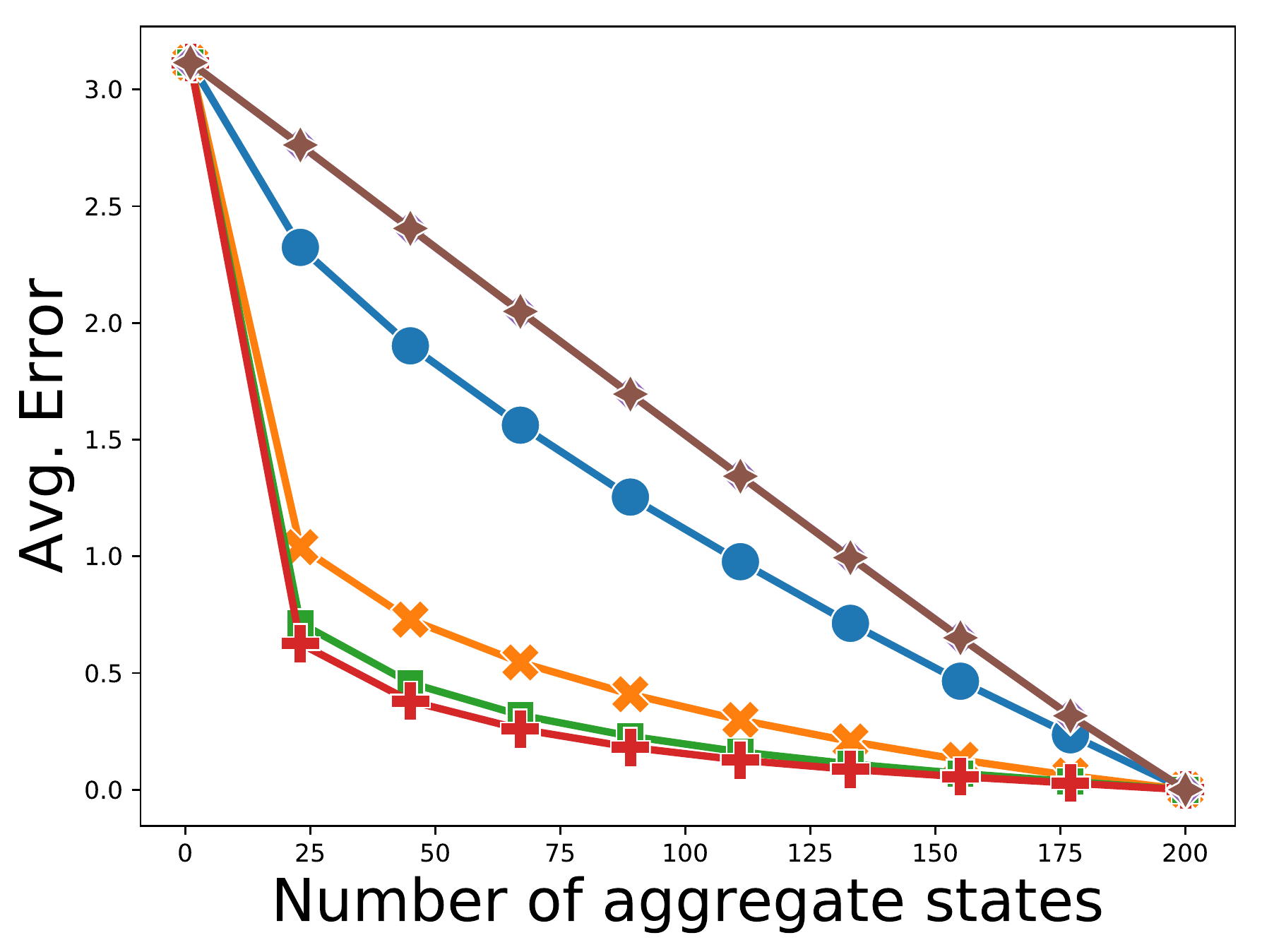}
  \caption{Errors when approximating the optimal value function (left) and optimal Q-function (center) via nearest-neighbours and errors when performing value iteration on aggregated states (right). Curves for $e^{\sim}$ and $e^{\sim_{lax}}$ are covering each other on all of the plots. Averaged over 100 Garnet MDPs with 200 states and 5 actions, with 50 independent runs for each (to account for subsampling differences). Confidence intervals were very tiny due to the large number of runs so were not included.} 
  \label{fig:subsamplingAggregation}
\end{figure*}
We now conduct an empirical evaluation to quantify the magnitude of the effects studied in the previous sections.
Specifically, we are interested in how approximations derived from different metrics impact the performance of basic reinforcement learning procedures. We consider two kinds of approximations: state aggregation and nearest neighbour, which we combine with six representative metrics: $e^{\sim}$,  $e^{\sim_{lax}}$, $d^{\sim}$, $d^{\sim_{lax}}$, $d_{\Delta^*}$, and $d_{\widetilde{\text{AVF}}(50)}$.

We conduct our experiments on Garnet MDPs, which are a class of randomly generated MDPs \citep{archibald95generation,piot14difference}. Specifically, a Garnet MDP $Garnet(n_{\cS}, n_{\cA})$ is parameterized by two values: the number of states $n_{\cS}$ and the number of actions $n_{\cA}$, and is generated as follows: {\bf 1.} The branching factor $b_{s,a}$ of each transition $\cP_s^a$ is sampled uniformly from $[1:n_{\cS}]$. {\bf 2.} $b_{s, a}$ states are picked uniformly randomly from $\cS$ and assigned a random value in $[0, 1]$; these values are then normalized to produce a proper distribution $\cP_s^a$. {\bf 3.} Each $\cR_s^a$ is sampled uniformly in $[0, 1]$. The use of Garnet MDPs grants us a less-biased comparison of the different metrics than if we were to pick a few specific MDPs. Nonetheless, we do provide extra experiments on a set of GridWorld tasks in the appendix.
The code used to produce all these experiments is open-sourced \footnote{Code available at \url{https://github.com/google-research/google-research/tree/master/rl_metrics_aaai2021}}.

\subsection{Generalizing the value function $V^*$}
We begin by studying the approximation error that arises when extrapolating the optimal value function $V^*$ from a subset of states.
Specifically, given a subsampling fraction $f \in [0, 1]$, we sample $\lceil |\sspace| \times f \rceil$ states and call this set $\kappa$. For each unknown state $s\in\sspace\setminus \kappa$, we find its nearest known neighbour according to metric $d$: $NN(s) = \arg\min_{t\in \kappa}d(s, t)$.
We then define the optimal value function as $\hat{V}^*(s) = V^*(NN(s))$, and report the approximation error in \autoref{fig:subsamplingAggregation} (left). This experiment gives us insights into how amenable the different metrics are for transferring value estimates across states; effectively, their generalization capabilities.

According to \autoref{the:topologyships}, the two discrete metrics  $e^{\sim}$ and $e^{\sim_{lax}}$ induce finer topologies than their four continuous counterparts. Most of the states being isolated from each other in these two representations, $e^{\sim}$ and $e^{\sim_{lax}}$ perform poorly.
The three continuous metrics $d^{\sim}$, $d^{\sim_{lax}}$ and $d_{\Delta^*}$ all guarantee Lipschitz continuity of $V^*$ while $d_{\widetilde{\text{AVF}}(50)}$ is approximately $V^*$ Lipschitz continuous. However, $d^{\sim_{lax}}$ (resp. $d_{\Delta^*}$) produce coarser (resp. approximately coarser) topologies than $d^{\sim}$ (resp. $d_{\widetilde{\text{AVF}}(50)}$) (see Theorem \ref{the:topologyships}). This is reflected in their better generalization error compared to the latter two metrics. Additionally, the lax bisimulation metric $d^{\sim_{lax}}$ outperforms $d_{\Delta^*}$ substantially, which can be explained by noting that $d^{\sim_{lax}}$ measures distances between two states under independent action choices, contrary to all other metrics.

\subsection{Generalizing the Q-function $Q^*$}
We now illustrate the continuity (or absence thereof) of $Q^*$ with respect to the different metrics.
In \autoref{fig:subsamplingAggregation} (center), we perform a similar experiment as the previous one, still using a 1-nearest neighbour scheme but now extrapolating $Q^*$. 

As expected, we find that metrics that do not support $Q^*$ continuity, including $d^{\sim_{lax}}$, cannot generalize from a subset of states, and their average error decreases linearly. In contrast, the three other metrics are able to generalize. Naturally, $d_{\Delta^*}$, which aggregates states based on $Q^*$, performs particularly well. However, we note that $d_{\Delta_{\forall}}$ also outperforms the bisimulation metric $d^{\sim}$, highlighting the latter's conservativeness, which tends to separate states more. By our earlier argument regarding $d^{\sim_{lax}}$, this suggests there may be a class of functions, not represented in Table 2, which is continuous under $d^{\sim}$ but not $d_{\widetilde{\text{AVF}}(50)}$.

\subsection{Approximate value iteration}
As a final experiment, we perform approximate value iteration using a state aggregation $\phi$ derived from one of the metrics. For each metric, we perform 10 different aggregations using a $k$-median algorithm, ranging from one aggregate state to 200 aggregate states.
For a given aggregate state $c$, let $Q(c,a)$ stand for its associated Q-value. The approximation value iteration update is
\[ \hat Q_k(c, a) \gets \frac{1}{|c|}\sum_{s | \phi(s) = c}\left[ \rew^a_s + \gamma \rE_{s'\sim\cP_s^a} \max_{a\in\aspace}\hat Q_k(\phi(s')) \right] \]
We can then measure the error induced by our aggregation via $\max_{a\in\aspace}\frac{1}{|\sspace|}\sum_{s\in\sspace}|Q^*(s, a) - \hat Q_k(\phi(s), a)|$, which we display in the rightmost panel of \autoref{fig:subsamplingAggregation}. 

As in our second experiment, the metrics that do not support $Q^*$-continuity well fail to give good abstractions for approximate value iteration.
As for $e^{\sim}$, the topology induced by this metric is too fine (\autoref{the:topologyships}) leading to poor generalisation results.
The performance of $d_{\Delta^*}$ is consistent with \autoref{the:topologyships}, which states that it induces the coarsest topology. However, although it is known that $Q^*$-continuity is sufficient for approximate value iteration \citep{li2006towards}, it is somewhat surprising that it outperforms $d_{\widetilde{\text{AVF}}(50)}$,
since $d_{\widetilde{\text{AVF}}(50)}$ is an approximant of $d_{\Delta_{\forall}}$ that is designed to provide continuity with respect to all policies, so it may be expected to yield better approximations at intermediate iterations. Despite this, $d_{\Delta_{\forall}}$ still serves as an interesting and tractable surrogate metric to $d_{\Delta^*}$.

\section{Discussion}
\label{sec:conclusion}

Behavioral metrics are important both to evaluate the goodness of a given state representation and to learn such a representation.
We saw that approximate abstractions and equivalence relations are insufficient for continuous-state RL problems, because they do not support the continuity of common RL functions or induce very fine representations on the state space leading to poor generalization.

Continuous behavioural metrics go one step further by considering the structure of the MDP in their construction and inducing coarser topologies than their discrete counterparts; however, within that class we still find that not all metrics are equally useful.
The original bisimulation metric of \citet{ferns2004metrics}, for example, is too conservative and has a rather fine topology.
This is confirmed by our experiments in \autoref{fig:subsamplingAggregation}, where it performs poorly overall. The lax bisimulation metric guarantees the continuity of $V^*$ which makes it suitable for transferring optimal values between states but fails to preserve continuity of $Q^*$.
Together with our analysis, the $d_{\Delta^*}$ and $d_{\Delta_{\forall}}$ metrics seem interesting candidates when generalising within a neighbourhood.

$d_{\Delta_{\forall}}$ is useful when we do not know the value improvement path the algorithm will be following \citep{dabney2020value}. Despite being approximated from a finite number of policies, the performance of $d_{\widetilde{\text{AVF}}(n)}$, reflects the fact that it respects, in some sense, the entire space of policies that are spanned by policy iteration and makes it useful in practice. One advantage of this metric is that it is built from value functions, which are defined on a per-state basis; this makes it amenable to online approximations. In contrast, bisimulation metrics are only defined for pairs of states, which makes it difficult to approximate in an online fashion, specifically due to the difficulty of estimating the Wasserstein metric on every update.

Finally, continuing our analysis on partially observable systems is an interesting area for future work. Although \citet{castro09notions} proposed various equivalence relations for partially observable systems, there has been little work in defining proper metrics for these systems.

\section{Broader Impact}
This work lies in the realm of ``foundational RL'' in that it contributes to the fundamental understanding and development of reinforcement learning algorithms and theory. As such, despite us agreeing in the importance of this discussion, our work is quite far removed from ethical issues and potential societal consequences.

\section*{Acknowledgements}
The authors would like to thank Sheheryar Zaidi, Adam Foster and Abe Ng for insightful discussions about topology and functional analysis, Carles Gelada, John D. Martin, Dibya Ghosh, Ahmed Touati, Rishabh Agarwal, Marlos Machado and the whole Google Brain team in Montreal for helpful discussions, and Robert Dadashi for a conversation about polytopes.  We also thank Lihong Li and the anonymous reviewers for useful feedback on this paper.

We would also like to thank
the Python community~\citep{van1995python,oliphant2007python} and in particular
{NumPy}~\citep{oliphant2006guide,walt2011numpy, harris2020array}, Tensorflow \citep{abadi2016tensorflow},
{SciPy}~\citep{jones2001scipy},
{Matplotlib}~\citep{hunter2007matplotlib} and {\em
\href{https://github.com/google/gin-config}{Gin-Config}}.

\bibliography{metrics}
\clearpage
{
\appendix
\onecolumn
{\Large \bf Appendix}
\pdfoutput=1
\section{Proofs of the section Continuity relationships}
\subsection{Continuity relationships}
\begin{table}
  \centering
  \begin{tabular}{|c|c|c|}
    \textbf{Function(s)} & \textbf{Domain} & \textbf{Range} \\
    \hline
    $\prob$ & $\sspace\times\aspace$ & $\Sigma\rightarrow [0, 1]$ \\
    $\rew$ & $\sspace\times\aspace$ & $[0, R_{\text{max}}]\subset\rR$ \\
    $V^{\pi}, V^*$ & $\sspace$ & $[0, V_{\text{max}}]\subset\rR$ \\
    $Q^{\pi}, Q^*$ & $\sspace\times\aspace$ & $[0, V_{\text{max}}]\subset\rR$ \\
    $\pi$ & $\sspace$ & $\Delta(\aspace)$ \\
    \hline
  \end{tabular}
  \caption{RL functions with their respective domains and ranges.}
  \label{tbl:contRel}
\end{table}
We begin by proving the first main theorem in the paper, Theorem~1. We report in \autoref{tbl:contRel} the domains and ranges of the different RL functions mentioned in Theorem 1 that will be used throughout the proof. Before proving this result, we introduce the following necessary lemma.

 \begin{lemmaap}\label{the:Qcont}
 Choosing the discrete topology on the finite space $\aspace$ and assuming the product metric $d_{\sspace \times \aspace}=d_{\sspace} + d_{\aspace}$, the function $Q^{\pi}:(\sspace \times \aspace, d_{\sspace \times \aspace}) \rightarrow \rR$ is continuous if and only if
 \begin{enumerate}
   \item The function $Q^{\pi}:(\sspace \times \{a\}, d_{\sspace \times \aspace})\rightarrow \rR$ is continuous for all $a \in \aspace$.
   \item The function $Q^{\pi}(\cdot, a):(\sspace, d_{\sspace})\rightarrow \rR$ is continuous for all $a \in \aspace$.
 \end{enumerate}
 \end{lemmaap}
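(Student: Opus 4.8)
The plan is to prove the two characterising conditions together, after first noting that they are the same condition stated in two equivalent ways. The map $s \mapsto (s,a)$ is an isometry from $(\sspace, d_\sspace)$ onto the metric subspace $(\sspace\times\{a\}, d_{\sspace\times\aspace})$, since $d_{\sspace\times\aspace}((s,a),(s',a)) = d_\sspace(s,s') + d_\aspace(a,a) = d_\sspace(s,s')$. Hence conditions (1) and (2) are literally equivalent, and the lemma reduces to the single statement: $Q^\pi$ is continuous on $(\sspace\times\aspace, d_{\sspace\times\aspace})$ if and only if every restriction $Q^\pi(\cdot,a)\colon(\sspace,d_\sspace)\to\rR$ is continuous.

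For the forward implication, I would simply invoke the standard fact that the restriction of a continuous function to a subspace is continuous: if $Q^\pi$ is continuous on $\sspace\times\aspace$, then $Q^\pi|_{\sspace\times\{a\}}$ is continuous, and precomposing with the isometry above gives continuity of $Q^\pi(\cdot,a)$ for each $a$. The substantive direction is the converse. Assume each $Q^\pi(\cdot,a)$ is continuous, fix a point $(s_0,a_0)\in\sspace\times\aspace$ and $\epsilon>0$. The key structural observation is that $d_\aspace$ is the identity metric, so any point within $d_{\sspace\times\aspace}$-distance strictly less than $1$ of $(s_0,a_0)$ has action coordinate exactly $a_0$: indeed $d_\aspace(a,a_0)\le d_{\sspace\times\aspace}((s,a),(s_0,a_0))<1$ forces $d_\aspace(a,a_0)=0$. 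By continuity of $Q^\pi(\cdot,a_0)$ at $s_0$, choose $\delta_0>0$ with $d_\sspace(s,s_0)<\delta_0 \implies |Q^\pi(s,a_0)-Q^\pi(s_0,a_0)|<\epsilon$, and set $\delta:=\min\{\delta_0,1\}$. Then $d_{\sspace\times\aspace}((s,a),(s_0,a_0))<\delta$ forces $a=a_0$ and $d_\sspace(s,s_0)<\delta_0$, so $|Q^\pi(s,a)-Q^\pi(s_0,a_0)|<\epsilon$; since $(s_0,a_0)$ was arbitrary, $Q^\pi$ is continuous.

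I do not expect a genuine obstacle here: the only points requiring care are capping $\delta$ at $1$ so that $d_{\sspace\times\aspace}$-balls collapse onto a single action slice, and explicitly recording the (routine but necessary) isometry identifying conditions (1) and (2). I would also remark that the same $\delta$-capping argument, applied to the corresponding uniform or Lipschitz hypothesis, yields the analogous equivalences for UC, LLC, and ULC, which is what lets this lemma be used freely in the proof of \autoref{thm:contRel}.
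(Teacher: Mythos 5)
Your argument for the local-continuity case is correct and is essentially the paper's: the forward direction is restriction to a slice, the backward direction exploits the fact that a $d_{\sspace\times\aspace}$-ball of radius less than $1$ is contained in a single action slice (the paper phrases this with sequences, you with $\epsilon$--$\delta$, which is equivalent), and the identification of conditions (1) and (2) via the isometry $s\mapsto(s,a)$ is exactly the paper's closing remark. The one place where your proposal is too quick is the final sentence claiming that ``the same $\delta$-capping argument'' handles ULC: a uniform Lipschitz bound is a global statement over \emph{all} pairs $(s,a),(s',a')$, including those with $a\neq a'$, so there is no $\delta$ to cap. For such pairs one has $d_{\sspace\times\aspace}((s,a),(s',a'))\geq 1$, and the inequality $|Q^{\pi}(s,a)-Q^{\pi}(s',a')|\leq K\,d_{\sspace\times\aspace}((s,a),(s',a'))$ must instead be secured by the boundedness of $Q^{\pi}$ by $V_{\max}$; the paper does this explicitly by taking $K=\max\{\max_{a\in\aspace}L_a,\;V_{\max}\}$, and the same device is needed for the LLC case. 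With that one addition your proof matches the paper's in full.
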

\begin{proof}
To understand better the notion of continuity on the space $\aspace$ endowed with the discrete topology, we refer the reader to ~\autoref{the:DiscreteMetricCont}.

We begin with the first equivalence.

$(\implies):$ For LC, UC, LLC and ULC, this result follows from the fact that the function on $\sspace \times \{a\}$ is a restriction of the function on $\sspace \times \aspace$. For instance in the case of LC, suppose $Q^{\pi}$ is LC on $\sspace \times \aspace$. Let $\epsilon>0$. Then, for all $(s, a), (s',a') \in \sspace \times \aspace$ there exists $\delta>0$ such that, $d_{\sspace \times\aspace}((s, a), (s', a'))\leq \delta \implies |Q^{\pi}(s, a)-Q^{\pi}(s', a')|\leq \epsilon$. In particular, this is true for $a=a'$ so $Q^{\pi}$ is LC on $\sspace \times \{a\}$ for all $a \in \aspace$.

$(\impliedby):$ $Q^{\pi}$ is LC on $\sspace \times \{a\}$ for all $a \in \aspace$. So for all $a \in \aspace$, the limit of $Q(s_n, a)$ as the sequence $s_n\in \sspace$ converges to $s \in \sspace$ exists and is equal to $Q(s, a)$, that is  $s_n \rightarrow s \implies Q(s_n, a) \rightarrow Q(s, a)$. Moreover, $(s_n, a_n) \rightarrow (s, a_0)$ implies that $a_n= a_0$ for all $n$ big enough because $\aspace$ has the discrete metric. So for $n>N\in \mathbb{N}, Q(s_n, a_n)=Q(s_n, a_0)\rightarrow Q(s, a_0)$. Hence, $Q^{\pi}:(\sspace \times \{a\}, d_{\sspace \times \aspace}) \rightarrow \rR \text{ is LC for all a} \in \aspace \implies Q^{\pi}:(\sspace \times \aspace, d_{\sspace \times \aspace}) \rightarrow \rR$ is LC.

  Now, in the UC case, $Q^{\pi}$ is UC on $\sspace \times \{a\}$ for all $a \in \aspace$, so we have: $\forall a \in \aspace, \forall \epsilon >0, \exists \delta_{\epsilon, a}>0,$ such that for all $s, s' \in \sspace, d_{\sspace \times \aspace}((s, a), (s', a))\leq \delta_{\epsilon, a} \implies |Q^{\pi}(s, a)-Q^{\pi}(s', a)|<\epsilon.$   The space $\aspace$ being finite, $\min\limits_{a \in \aspace}\delta_{\epsilon, a}$ exists and is positive. Hence,  $\forall \epsilon >0,$ there exists $\delta_1=\min{ \{\min\limits_{a \in \aspace}\delta_{\epsilon, a}, 1/2\}} >0,$ such that for all $(s, a), (s', a') \in \sspace \times \aspace$, if $d_{\sspace \times \aspace}((s, a), (s', a'))\leq \delta_1 < 1/2$, then $a = a'$ since $d_{\aspace}(a, a')$ can only take values $0$ or $1$. Applying UC of $Q^{\pi}( \cdot, a)$, we get $d_{\sspace \times \aspace}((s, a), (s', a'))\leq \delta_1 \leq \delta_{\epsilon, a} \implies |Q^{\pi}(s, a)-Q^{\pi}(s', a')|\leq\epsilon.$ We can conclude that $Q^{\pi}$ is UC on $\sspace \times \aspace$.\\ \\
In the ULC case, $Q^{\pi}$ is ULC on $\sspace \times \{a\}$ for all $a \in \aspace$, so we have: $\forall a \in \aspace, \exists L_{a}>0,$ such that for all $s, s' \in \sspace, |Q^{\pi}(s, a)-Q^{\pi}(s', a)| \leq L_{a} d_{\sspace \times \aspace}((s, a), (s', a)).$ So, as $Q^{\pi}$ is bounded by $V_{\max}$, there exists $L=\max\{\max\limits_{a \in \aspace}L_{a}, V_{\max}\} \geq 0,$ such that for all $(s, a), (s', a') \in \sspace \times \aspace, |Q^{\pi}(s, a)-Q^{\pi}(s', a')| \leq L d_{\sspace \times \aspace}((s, a), (s', a'))=L(d_{\sspace}(s, s') + d_{\aspace}(a, a')).$

In the LLC case, $Q^{\pi}$ is LLC on $\sspace \times \{a\}$ for all $a \in \aspace$. So for all $(s, a) \in \sspace \times \aspace$, there exists a neighbourhood $U$ of $\sspace \times \aspace$ induced by $d_{\sspace \times \aspace}$ such that $Q^{\pi}$ restricted to $U$ is ULC. We conclude by the same argument as in the ULC case above.

The second equivalence is true because, for any $a \in \aspace$, the relabelling map $s \mapsto (s, a)$ is an isometry of metric spaces $(\sspace \times \{a\}, d_{\sspace \times \aspace})$ and $(\sspace, d_{\sspace})$. Isometric metric spaces are "equivalent" and hence have the same properties.
\end{proof}

\contRel*

\begin{equation}
  \begin{tikzcd}
    & & Q^{\pi}  \arrow[rr, "\pi \text{-cont}"]& & V^{\pi} \\
    \prob + \rew \arrow[drr] \arrow[urr, "\pi \text{-cont } "]&  & & &  \\
    & & Q^* \arrow[rr]& & V^{*}
  \end{tikzcd}
    \label{diag:contRel}
\end{equation}

The proof itself will be made up of a series of lemmas for each of the arrows (or lack thereof) in the diagram.

\begin{lemmaap}
If $Q^*:(\sspace \times \aspace, d_{\sspace \times \aspace}) \rightarrow \rR \text{ is continuous}$, then $V^*:\sspace  \rightarrow \rR$ is continuous.
\end{lemmaap}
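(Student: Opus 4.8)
The plan is to show that $V^*(s) = \max_{a \in \aspace} Q^*(s,a)$ inherits continuity from $Q^*$ via the fact that a finite maximum of continuous real-valued functions is continuous. First I would invoke Lemma~B (the preceding lemma in the appendix) to reduce continuity of $Q^*$ on $(\sspace \times \aspace, d_{\sspace \times \aspace})$ to continuity of each slice $Q^*(\cdot, a) : (\sspace, d_{\sspace}) \to \rR$ for every $a \in \aspace$; since $\aspace$ carries the discrete (identity) metric, this is exactly the content of that lemma. Then, using the Bellman optimality equation, write $V^*(s) = \max_{a \in \aspace} Q^*(s,a)$, so $V^*$ is the pointwise maximum over the finite set $\aspace$ of the continuous functions $Q^*(\cdot, a)$.

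The key step is then the elementary estimate: for any $s, s' \in \sspace$,
\[
  |V^*(s) - V^*(s')| = \left| \max_{a} Q^*(s,a) - \max_{a} Q^*(s',a) \right| \le \max_{a \in \aspace} |Q^*(s,a) - Q^*(s',a)|.
\]
I would prove this by the standard argument: if $a_0$ attains the max at $s$, then $V^*(s) - V^*(s') \le Q^*(s,a_0) - Q^*(s',a_0) \le \max_a |Q^*(s,a) - Q^*(s',a)|$, and symmetrically for $s'$. From here each continuity flavour transfers: for LC at $s$, given $\epsilon > 0$ pick $\delta = \min_{a} \delta_{s,\epsilon,a} > 0$ (finite min over $\aspace$) witnessing LC of each $Q^*(\cdot,a)$; for UC take $\delta = \min_a \delta_{\epsilon,a}$; for ULC take $K = \max_a K_a$; for LLC take a common neighbourhood radius $\min_a \delta_{s,a}$ and constant $\max_a K_{s,a}$. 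In each case the displayed inequality converts the per-action bound into the bound on $|V^*(s) - V^*(s')|$. (Strictly, the arrow in the diagram is asserted for LC, UC, LLC; the ULC case falls under the composite chain since $Q^*$ ULC $\Rightarrow$ $V^*$ LC, etc.)

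There is no serious obstacle here — the only thing to be slightly careful about is that finiteness of $\aspace$ is genuinely used (to take the minimum of finitely many $\delta$'s as a positive number, and the maximum of finitely many Lipschitz constants as a finite number); this is why the discreteness/finiteness of $\aspace$ was built into the hypotheses via Lemma~B. The direction of this particular lemma is the "easy" half of the $Q^* \to V^*$ arrow in Diagram~\eqref{diag:contRel}: continuity of the maximizer is automatic, whereas the converse (recovering $Q^*$ continuity from $V^*$) is false in general and is handled by the counterexample lemmas elsewhere in the appendix.
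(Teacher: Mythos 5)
Your proposal is correct and follows essentially the same route as the paper: both reduce to the slices $Q^*(\cdot,a)$, apply the non-expansiveness of the finite maximum via $|\max_a Q^*(s,a)-\max_a Q^*(s',a)|\le\max_{a\in\aspace}|Q^*(s,a)-Q^*(s',a)|$, and transfer each continuity flavour by taking the minimum of finitely many $\delta$'s (or maximum of finitely many Lipschitz constants). Your explicit handling of the uniform choice of $\delta=\min_a\delta_{s,\epsilon,a}$ is in fact slightly more careful than the paper's write-up, which leaves that step implicit.
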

By definition of the optimal value function,
\begin{equation*}
 V^*(s)=\max_{a \in \aspace}Q^*(s, a).
 \end{equation*}
$\aspace$ being a finite set of discrete actions and the \textit{max} function being a non-expanding (that is, 1-Lipschitz) map, it results that if $Q^*$ is LC (resp. UC, resp. ULC, resp. LLC) then $V^{*}$ is LC (resp. UC, resp. ULC, resp. LLC).\\

In more details, let's assume $Q^*$ is LC. Let $a \in \aspace$ and $\epsilon > 0$. By definition of LC of $Q^*$, there exists $\delta_{s, \epsilon}$ such that for all $s' \in \sspace$, $d_{\sspace}(s, s')\leq \delta_{s, \epsilon} \implies |Q^*(s, a)- Q^*(s', a)|\leq \epsilon$. 
\begin{align*}
  |V^*(s_1) - V^*(s_2)|& = |\max_{a \in \aspace}Q^*(s_1, a) - \max_{a \in \aspace}Q^*(s_2, a)| \\
  &\leq \max_{a \in \aspace} |Q^*(s_1, a)- Q^*(s_2, a)| \text{ as max is a non expansion.}\\
  &\leq \max_{a \in \aspace} \epsilon \text{ as $Q^*$ is LC.}\\
  &\leq \epsilon 
  \end{align*}
Hence, $V^*$ is LC. The proof for the UC case is similar.\\
Now, in the ULC case: Let $s_1, s_2 \in \sspace$,
\begin{align*}
|V^*(s_1) - V^*(s_2)|& = |\max_{a \in \aspace}Q^*(s_1, a) - \max_{a \in \aspace}Q^*(s_2, a)| \\
&\leq \max_{a \in \aspace} |Q^*(s_1, a)- Q^*(s_2, a)| \text{ as max is a non expansion.}\\
&\leq \max_{a \in \aspace} L_a d(s_1, s_2) \text{ as $Q^*$ is ULC.}\\
& \leq L d(s_1, s_2).
\end{align*}
We can thus conclude that $V^*$ is also ULC. The LLC case is similar to the ULC proof.\\ 
The reverse implication is not true as shows the following counter-example. Suppose $\sspace=\rR$, $\aspace = \{1, 2\}$ and $Q(s, 1) = 1$ for all s. And suppose Q(s, 2) is some discontinuous function that is always less than 1. Let's for instance choose: 
\begin{align}
Q^{*}(s, 2) & = \left\{
    \begin{array}{ll}{}
        0 & \mbox{if } s\leq s_0, s_0 \in \rR  \\
        0.5 & \mbox{if } s>s_0.
    \end{array}
\right.
\end{align}

Then $V^*(s) = Q^*(s, 1)$ which is continuous but $Q^*(s, a)$ is not continuous at $s_0$ for $a = 2$.
\begin{lemmaap}
\label{lemmac}
If $Q^{\pi}:(\sspace \times \aspace, d_{\sspace \times \aspace}) \rightarrow \rR$ and $\pi: \sspace \rightarrow \Delta(\aspace)$ are continuous, then $V^{\pi}$ is continuous.
\end{lemmaap}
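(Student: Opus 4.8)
The plan is to write $V^{\pi}$ as the $\pi$-average of $Q^{\pi}$ and then split the variation in $s$ into a ``$Q^{\pi}$ part'' and a ``$\pi$ part''. Concretely, combining Bellman's equation with the definition of $Q^{\pi}$ gives $V^{\pi}(s) = \sum_{a\in\aspace}\pi(a\cbar s)\,Q^{\pi}(s,a)$, a \emph{finite} convex combination because $\aspace$ is discrete. The key step is the triangle-inequality decomposition: inserting the cross term $\sum_a \pi(a\cbar s)\,Q^{\pi}(s',a)$, using $\sum_a\pi(a\cbar s)=1$, and bounding $Q^{\pi}\le V_{\text{max}}$ (see \autoref{tbl:contRel}),
\begin{align*}
  |V^{\pi}(s)-V^{\pi}(s')| &\le \sum_{a}\pi(a\cbar s)\,|Q^{\pi}(s,a)-Q^{\pi}(s',a)| + \sum_{a}|\pi(a\cbar s)-\pi(a\cbar s')|\,|Q^{\pi}(s',a)| \\
  &\le \max_{a\in\aspace}|Q^{\pi}(s,a)-Q^{\pi}(s',a)| + V_{\text{max}}\sum_{a}|\pi(a\cbar s)-\pi(a\cbar s')|.
\end{align*}
By \autoref{the:Qcont}, continuity of $Q^{\pi}$ on $\sspace\times\aspace$ is equivalent to continuity of each $Q^{\pi}(\cdot,a)$ on $\sspace$, so the first term is controlled; finiteness of $\aspace$ lets us pass to the $\max$. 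Since $\Delta(\aspace)$ is a finite-dimensional simplex, the $\ell_1$ quantity $\sum_a|\pi(a\cbar s)-\pi(a\cbar s')|$ is bi-Lipschitz equivalent to whatever metric $d_{\Delta(\aspace)}$ is fixed on it (with some constant $c$), so the second term is controlled by continuity of $\pi$.

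\textbf{Passing through the four regimes.} Each case is then a routine combination. For LC at a point $s$ (resp. UC), given $\epsilon>0$ choose $\delta_1$ from the continuity of the finitely many maps $Q^{\pi}(\cdot,a)$ so that $\max_a|Q^{\pi}(s,a)-Q^{\pi}(s',a)|<\epsilon/2$, and $\delta_2$ from the continuity of $\pi$ so that $c\,V_{\text{max}}\,d_{\Delta(\aspace)}(\pi(\cdot\cbar s),\pi(\cdot\cbar s'))<\epsilon/2$; take $\delta=\min(\delta_1,\delta_2)$. For ULC, if $Q^{\pi}$ has Lipschitz constant $L_Q$ and $\pi$ has Lipschitz constant $L_{\pi}$, the display above yields $|V^{\pi}(s)-V^{\pi}(s')|\le (L_Q + c\,V_{\text{max}}L_{\pi})\,d_{\sspace}(s,s')$, so $V^{\pi}$ is ULC. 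For LLC, restrict to a ball around $s$ small enough that both $Q^{\pi}(\cdot,a)$ (all $a$) and $\pi$ are Lipschitz there, and apply the ULC computation on that ball.

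\textbf{Main obstacle.} The argument is conceptually light; the only care needed is bookkeeping. One must (i) keep track of the equivalence constant $c$ between $d_{\Delta(\aspace)}$ and the $\ell_1$ norm on the simplex, and note this is legitimate precisely because $\aspace$ is finite; (ii) in the LC and LLC cases, make sure that the $\delta$'s coming from $\pi$-continuity and from the finitely many $Q^{\pi}(\cdot,a)$ can be taken simultaneously by a finite minimum (again using $|\aspace|<\infty$); and (iii) make explicit that the bound $Q^{\pi}\le V_{\text{max}}$ is what turns mere continuity of $\pi$ into a usable estimate for the second term. Beyond these points there is nothing subtle, so I expect no genuine difficulty — the finiteness of $\aspace$ does all the heavy lifting.
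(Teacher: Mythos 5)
Your proof is correct and takes essentially the same route as the paper: the same cross-term decomposition of $V^{\pi}(s)=\sum_{a}\pi(a\cbar s)Q^{\pi}(s,a)$, with the policy term bounded by $V_{\text{max}}$ times a distance between $\pi(\cdot\cbar s)$ and $\pi(\cdot\cbar s')$ --- you use the $\ell_1$/total-variation quantity where the paper uses $W_{d_{\aspace}}$, but since $d_{\aspace}$ is the discrete metric these coincide up to a factor of $2$, so the difference is cosmetic. The only caveat worth noting is that your claim that $\ell_1$ is bi-Lipschitz equivalent to ``whatever metric'' is placed on $\Delta(\aspace)$ is not true for an arbitrary metric, but it holds for the Wasserstein/TV metric the paper actually uses, so nothing breaks.
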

The value function $V^{\pi}$ is defined as follows: 
\begin{equation*}
V^{\pi}(s) = \sum_{a \in \aspace}\pi(a|s)Q^{\pi}(s, a).
\end{equation*}
In the LC case: let's assume $Q^{\pi}$ is LC. Let $a \in \aspace$ and $\epsilon > 0$. By definition of LC of $Q^{\pi}$, there exists $\delta_{s, \epsilon}$ such that for all $s' \in \sspace$, $d_{\sspace}(s, s')\leq \delta_{s, \epsilon} \implies |Q^{\pi}(s, a)- Q^{\pi}(s', a)|\leq \epsilon$. We also assume the policy is LC, that is, there exists $\delta_{s, \epsilon}^{'}$ such that for all $s' \in \sspace$, $d_{\sspace}(s, s')\leq \delta^{'}_{s, \epsilon} \implies W_{d_{\aspace}}(\pi(\cdot|s)-\pi(\cdot|s'))|\leq \epsilon$. For all $s'\in \sspace$ such that $d_{\sspace}(s, s')\leq \min(\delta^{'}_{s, \epsilon}, \delta_{s, \epsilon})$, we have
\begin{align*}
  |V^{\pi}(s)- V^{\pi}(s')| &= |\mathbb{E}_{\substack{a \sim \pi(\cdot|s)}}Q^{\pi}(s, a) - \mathbb{E}_{\substack{a \sim \pi(\cdot|s')}}Q^{\pi}(s', a)|\\
  &= |\mathbb{E}_{\substack{a \sim \pi(\cdot|s)}}Q^{\pi}(s, a) -\mathbb{E}_{\substack{a \sim \pi(\cdot|s')}}Q^{\pi}(s, a) + \mathbb{E}_{\substack{a \sim \pi(\cdot|s')}}Q^{\pi}(s, a) - \mathbb{E}_{\substack{a \sim \pi(\cdot|s')}}Q^{\pi}(s', a)|\\
  & \leq|\mathbb{E}_{\substack{a \sim \pi(\cdot|s)}}Q^{\pi}(s, a) -\mathbb{E}_{\substack{a \sim \pi(\cdot|s')}}Q^{\pi}(s, a)| + |\mathbb{E}_{\substack{a \sim \pi(\cdot|s')}}Q^{\pi}(s, a) - \mathbb{E}_{\substack{a \sim \pi(\cdot|s')}}Q^{\pi}(s', a)|\\
  & \leq |V_{\text{max}} W_{d_{\aspace}}(\pi(\cdot|s)-\pi(\cdot|s'))| + \mathbb{E}_{\substack{a \sim \pi(\cdot|s')}}|Q^{\pi}(s, a) - Q^{\pi}(s', a)| \text{ by definition of the Wasserstein}\\
  & \leq (V_{\text{max}}+1) \epsilon
\end{align*}
So $V^{\pi}$ is LC. The proof is similar in the UC case.

In the ULC case: let $s, s' \in \sspace$,

\begin{align*}
  |V^{\pi}(s)- V^{\pi}(s')| &= |\mathbb{E}_{\substack{a \sim \pi(\cdot|s)}}Q^{\pi}(s, a) - \mathbb{E}_{\substack{a \sim \pi(\cdot|s')}}Q^{\pi}(s', a)|\\
  &= |\mathbb{E}_{\substack{a \sim \pi(\cdot|s)}}Q^{\pi}(s, a) -\mathbb{E}_{\substack{a \sim \pi(\cdot|s')}}Q^{\pi}(s, a) + \mathbb{E}_{\substack{a \sim \pi(\cdot|s')}}Q^{\pi}(s, a) - \mathbb{E}_{\substack{a \sim \pi(\cdot|s')}}Q^{\pi}(s', a)|\\
  & \leq|\mathbb{E}_{\substack{a \sim \pi(\cdot|s)}}Q^{\pi}(s, a) -\mathbb{E}_{\substack{a \sim \pi(\cdot|s')}}Q^{\pi}(s, a)| + |\mathbb{E}_{\substack{a \sim \pi(\cdot|s')}}Q^{\pi}(s, a) - \mathbb{E}_{\substack{a \sim \pi(\cdot|s')}}Q^{\pi}(s', a)|\\
  & \leq |V_{\text{max}} W_{d_{\aspace}}(\pi(\cdot|s)-\pi(\cdot|s'))| + \mathbb{E}_{\substack{a \sim \pi(\cdot|s')}}|Q^{\pi}(s, a) - Q^{\pi}(s', a)| \text{ by definition of the Wasserstein}\\
  & \leq V_{\text{max}} L_{\pi} d(s, s')+ \mathbb{E}_{\substack{a \sim \pi(\cdot|s')}}\max_{a \in \aspace}L_a d(s, s') \text{ as the policy and Q-functions are ULC}\\
  & \leq (V_{\text{max}} L_{\pi} + \max_{a \in \aspace}L_a)d(s, s')\
\end{align*}
We can thus conclude that $V^{\pi}$ is also ULC with Lipschitz constant $(V_{\text{max}} L_{\pi} + \max_{a \in \aspace}L_a)$. The same reasoning applies in the LLC case. \\ \\
We emphasize that the continuity assumption of $\pi$ is important, as this example shows. Let's assume $\sspace = \rR$ and $\aspace=\{1, 2\}$. Imagine we have the following discontinuous policy:
\begin{align}
\pi(a|s) & = \left\{
    \begin{array}{ll}{}
        \delta_0 & \mbox{if } s < 0 \\
        \delta_1 & \mbox{if } s \geqslant 0.
    \end{array}
\right.
\end{align}
where:
\begin{align}
\delta_{x_0}(A)& = \left\{
    \begin{array}{ll}{}
        1& \mbox{if } x_0 \in A \\
        0& \text{ else}.
    \end{array}
\right.
\end{align}
and the following value function:
\begin{align}
Q^{\pi}(s, a) & = \left\{
    \begin{array}{ll}{}
        s & \mbox{if } a = 0 \\
        s + 1 & \mbox{if } a = 1.
    \end{array}
\right.
\end{align}
Then, $V^{\pi}$ is discontinuous at $0$.
\begin{align}
V^{\pi}(s) & = \left\{
    \begin{array}{ll}{}
        s & \mbox{if } s < 0 \\
        s + 1 & \mbox{if } \mbox{if } s \geqslant 0.
    \end{array}
\right.
\end{align}
It is clear that continuity of $V^{\pi}$ does not imply continuity of $Q^{\pi}$: take the deterministic constant constant policy $\pi(a|s)={1}$ so that $V^{\pi}(s) = Q^{\pi}(s, 1).$ As previously, we can have any discontinuous function for $ Q^{\pi}(s, 1)$.
\begin{corollary}
\label{lemma1}
In the deterministic case, if $\pi: \sspace \rightarrow \aspace$ is continuous, $\sspace$ is connected\footnote{A connected space is topological space that cannot be represented as the union of two or more disjoint non-empty open subsets} and $\aspace$ is discrete, then $\pi$ is globally constant.
\end{corollary}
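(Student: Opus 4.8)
The plan is to invoke the elementary topological fact that a continuous map from a connected space into a discrete space is constant, specialized to the metric setting at hand. First I would record that in $\aspace$ equipped with the identity (discrete) metric $d_{\aspace}$, every singleton is open: since $d_{\aspace}$ takes only the values $0$ and $1$, we have $B_{d_{\aspace}}(a,\tfrac12)=\{a\}$ for each $a\in\aspace$, so indeed each $\{a\}$ is an $\epsilon$-neighbourhood of itself.

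Next, for each $a\in\aspace$ set $U_a:=\pi^{-1}(\{a\})=\{s\in\sspace:\pi(s)=a\}$, and I would check that each $U_a$ is open in $(\sspace,d_{\sspace})$. This is immediate from continuity of $\pi$: given $s\in U_a$, applying local continuity at $s$ with $\epsilon=\tfrac12$ yields $\delta>0$ such that $d_{\sspace}(s,s')<\delta\implies d_{\aspace}(\pi(s),\pi(s'))<\tfrac12\implies \pi(s')=\pi(s)=a$, hence $B_{d_{\sspace}}(s,\delta)\subseteq U_a$. (Equivalently: $\{a\}$ is open and the preimage of an open set under a continuous map is open.)

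The family $\{U_a\}_{a\in\aspace}$ is pairwise disjoint, because $\pi$ is single-valued, and covers $\sspace$. Now suppose, for contradiction, that $\pi$ is not globally constant; then there are distinct $a_0,a_1$ in the image of $\pi$. Put $V:=U_{a_0}$ and $W:=\bigcup_{a\neq a_0}U_a$. Both are open (the latter as a union of open sets), both are nonempty (since $W\supseteq U_{a_1}\neq\emptyset$), they are disjoint, and $V\cup W=\sspace$. This exhibits $\sspace$ as the union of two disjoint nonempty open subsets, contradicting connectedness. Therefore $\pi$ attains a single value on $\sspace$, i.e. it is globally constant.

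I do not expect a genuine obstacle here, as the statement follows directly from the definitions; the only mild point of care is the translation between the $\epsilon$--$\delta$ formulation of continuity used in the paper and the open-cover formulation of connectedness, and the role of the discreteness of $\aspace$ is precisely to ensure that every fibre $U_a$ is open, which is what makes the partition argument go through.
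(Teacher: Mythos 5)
Your proof is correct and follows essentially the same route as the paper's: both exploit the discreteness of $\aspace$ by taking $\epsilon=\tfrac12$ to conclude that $\pi$ is locally constant, and then invoke connectedness of $\sspace$ to upgrade this to global constancy. The only difference is that you spell out the final step explicitly via the open partition $\{U_a\}_{a\in\aspace}$, whereas the paper simply asserts that a locally constant map on a connected space is constant; your version fills in that standard detail but is not a different argument.
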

\begin{proof}
$\pi$ is continuous at $s$ iff for all $\epsilon >0$, there exists $\delta_{\epsilon, s} >0$ such that for all $s' \in \sspace$, $d(s, s')\leq \delta_{\epsilon, s}$ implies $d_{\aspace}(\pi(s)-\pi(s'))\leq \epsilon$.
In particular, choosing $\epsilon=\frac{1}{2}$ implies that $\pi(s)=\pi(s').$ $\pi$ it thus locally constant. Supposing $\sspace$ is connected implies $\pi$ is globally constant.
\end{proof}
While our proof above is valid for stochastic policies, we note that the proof of \autoref{lemmac} in the ULC case with deterministic policies is provided by \citet{rachelson2010locality}:
\begin{corollary}[\citeauthor{rachelson2010locality}, \citeyear{rachelson2010locality}]
If $Q^{\pi}$ is ULC with Lipschitz constant $L_{Q}$ and the policy $\pi$ is ULC with Lipschitz constant $L_{\pi}$, then $V^{\pi}$ is ULC with Lipschitz constant $L_Q(1+L_{\pi})$.
\end{corollary}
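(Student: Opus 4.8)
The plan is to read this corollary as the deterministic-policy specialization of the ULC half of \autoref{lemmac}, in which the chaining of Lipschitz constants becomes completely transparent. For a deterministic policy $\pi:\sspace\to\aspace$ one has $V^{\pi}(s)=Q^{\pi}(s,\pi(s))$, so for arbitrary $s,s'\in\sspace$ the only quantity to estimate is how far $Q^{\pi}$ moves when we perturb \emph{both} of its arguments simultaneously, from $(s,\pi(s))$ to $(s',\pi(s'))$. Since $Q^{\pi}$ is assumed ULC with constant $L_Q$ with respect to the product metric $d_{\sspace\times\aspace}((s,a),(s',a'))=d_{\sspace}(s,s')+d_{\aspace}(a,a')$, I would write
\[
|V^{\pi}(s)-V^{\pi}(s')|=|Q^{\pi}(s,\pi(s))-Q^{\pi}(s',\pi(s'))|\leq L_Q\big(d_{\sspace}(s,s')+d_{\aspace}(\pi(s),\pi(s'))\big),
\]
and then apply the ULC hypothesis on the policy, $d_{\aspace}(\pi(s),\pi(s'))\leq L_{\pi}\,d_{\sspace}(s,s')$, to obtain $|V^{\pi}(s)-V^{\pi}(s')|\leq L_Q(1+L_{\pi})\,d_{\sspace}(s,s')$. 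Since $s,s'$ were arbitrary, this shows $V^{\pi}$ is ULC with constant $L_Q(1+L_{\pi})$.

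There is no real obstacle here: the statement is exactly the deterministic reading of \autoref{lemmac}, and the display above is pure bookkeeping. The two points worth flagging are (i) that routing the estimate through the product metric is what yields the sharp constant $L_Q(1+L_{\pi})$, as opposed to the looser constant one would get by specializing the stochastic argument of \autoref{lemmac} to a Dirac policy (in that argument the crude bound $\lVert Q^{\pi}\rVert_\infty\le V_{\text{max}}$ enters, rather than the Lipschitz constant of $Q^{\pi}$); and (ii) that $L_Q$ must be taken as the Lipschitz constant of $Q^{\pi}$ with respect to the same product metric $d_{\sspace}+d_{\aspace}$ that is used to measure continuity of $\pi$. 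It is also worth noting that the statement is non-vacuous only when $\sspace$ is disconnected, since by \autoref{lemma1} a continuous (hence ULC) deterministic policy on a connected space is globally constant, in which case $V^{\pi}(s)=Q^{\pi}(s,a_0)$ for a fixed $a_0$ and $V^{\pi}$ is trivially ULC with constant $L_Q$.
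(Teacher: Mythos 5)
Your proof is correct, and it is in fact more than the paper offers: the paper states this corollary without proof, deferring entirely to the cited reference, so your two-line chaining argument supplies the missing derivation. The argument is the standard one and gives exactly the claimed constant: for a deterministic policy, $V^{\pi}(s)=Q^{\pi}(s,\pi(s))$, ULC of $Q^{\pi}$ in the product metric $d_{\sspace}+d_{\aspace}$ bounds $|V^{\pi}(s)-V^{\pi}(s')|$ by $L_Q\big(d_{\sspace}(s,s')+d_{\aspace}(\pi(s),\pi(s'))\big)$, and ULC of $\pi$ converts the second term into $L_\pi\, d_{\sspace}(s,s')$. Your two flagged caveats are both apt and worth recording: the deterministic reading is essential (the paper's own stochastic-policy argument for the $Q^{\pi}\rightarrow V^{\pi}$ arrow routes through the Wasserstein distance and the crude bound $\|Q^{\pi}\|_\infty\le V_{\text{max}}$, yielding the looser constant $V_{\text{max}}L_{\pi}+\max_a L_a$ rather than $L_Q(1+L_{\pi})$), and $L_Q$ must be measured in the same product metric used for $\pi$. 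The observation that, with $d_{\aspace}$ the identity metric, the hypothesis forces $\pi$ to be locally constant — so the result is only non-trivially sharp on disconnected state spaces — is consistent with the paper's own lemma on continuous deterministic policies and is a useful sanity check, though it is worth noting that the cited source works with a genuinely continuous action metric, where this degeneracy does not arise.
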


\begin{lemmaap}
We assume that the next state probability measure $\prob^a_s$ admits a density $p^a_s: \sspace \rightarrow [0, \infty)$ with respect to the Lebesgue measure.
If $s' \mapsto p^a_s(s')$ is bounded and $\rew: \sspace \times \aspace \rightarrow \rR$ and $s \mapsto p^a_s(s')$ are LC, then $Q^{*}: \sspace \times \aspace \rightarrow \rR$ is LC. 
\end{lemmaap}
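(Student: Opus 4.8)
The plan is to reduce to a one–dimensional statement via \autoref{the:Qcont} and then exploit the Bellman optimality equation together with an $L^1$-convergence argument for the transition densities. First I would note that \autoref{the:Qcont} was proved using only that $\aspace$ is finite with the discrete metric, so it applies verbatim to $Q^*$: it therefore suffices to show that $Q^*(\cdot, a):(\sspace, d_\sspace)\to\rR$ is LC for each fixed $a\in\aspace$. The same restriction argument applied to $\rew$ shows that $s\mapsto\rew^a_s$ is LC for each $a$. Fix $a$ and a point $s_0\in\sspace$. Writing $V^* = \max_{b\in\aspace}Q^*(\cdot, b)$, which is measurable with $0\le V^*\le V_{\text{max}}$, the Bellman optimality equation gives $Q^*(s, a) = \rew^a_s + \gamma\int_{\sspace} p^a_s(s')V^*(s')\,ds'$, whence for any $s\in\sspace$
\begin{align*}
|Q^*(s_0, a) - Q^*(s, a)| &\le |\rew^a_{s_0} - \rew^a_s| + \gamma\int_{\sspace}|p^a_{s_0}(s') - p^a_s(s')|\,V^*(s')\,ds' \\
&\le |\rew^a_{s_0} - \rew^a_s| + \gamma V_{\text{max}}\int_{\sspace}|p^a_{s_0}(s') - p^a_s(s')|\,ds'.
\end{align*}
Crucially, bounding $V^*$ by the constant $V_{\text{max}}$ decouples the estimate from the (as yet unknown) continuity of $V^*$, so the argument is not circular.

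Next I would control the two terms as $s\to s_0$. The first tends to $0$ by LC of $s\mapsto\rew^a_s$ at $s_0$. For the second, using that LC in metric spaces is equivalent to sequential continuity, I would take an arbitrary sequence $s_n\to s_0$; by LC of $s\mapsto p^a_s(s')$ we get $p^a_{s_n}(s')\to p^a_{s_0}(s')$ for every $s'$, and each of $p^a_{s_n}$ and $p^a_{s_0}$ integrates to $1$. Scheff\'e's lemma (pointwise convergence of densities with equal finite integrals implies $L^1$ convergence) then yields $\int_{\sspace}|p^a_{s_0}(s') - p^a_{s_n}(s')|\,ds'\to 0$, so $|Q^*(s_0, a) - Q^*(s_n, a)|\to 0$, which is exactly LC of $Q^*(\cdot, a)$ at $s_0$; as $s_0$ and $a$ were arbitrary, \autoref{the:Qcont} lifts this to continuity of $Q^*$ on $\sspace\times\aspace$. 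An essentially identical computation shows the Bellman optimality operator $T$ maps the (sup-norm closed, hence complete) space of bounded locally continuous functions into itself, so one could alternatively locate $Q^*$ in that space by the Banach fixed point theorem; I would use whichever framing is shortest given the surrounding lemmas.

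The step I expect to be the main obstacle is the passage from pointwise to $L^1$ convergence of the densities: without normalization one only gets a one-sided bound via Fatou. Scheff\'e's lemma resolves this cleanly precisely because the $p^a_s$ are probability densities, and the hypothesis that $s'\mapsto p^a_s(s')$ is bounded is what guarantees $p^a_s\in L^\infty$, making the integral $\int p^a_s V^*$ manifestly finite and legitimizing the manipulations above (it also supplies a dominating function if one prefers dominated convergence on a finite-measure state space). A minor point to check is measurability of $Q^*$ and $V^*$ so that these integrals are well defined, which follows from measurability of $\rew$ and standard properties of the Bellman operator.
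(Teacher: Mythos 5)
Your proof is correct, but it takes a genuinely different route from the paper's. The paper proceeds by induction along the value-iteration sequence $Q_0=0$, $Q_{n+1}=TQ_n$: it shows each $Q_n$ is LC by applying the dominated convergence theorem to $s\mapsto\int \max_{a'}Q_n(s',a')\,p^a_s(s')\,ds'$ (this is where the boundedness of $s'\mapsto p^a_s(s')$ is used, to manufacture a dominating function), and then passes to the limit by noting that $T$ is a $\gamma$-contraction in sup norm on the space of continuous bounded functions, so the uniform limit $Q^*$ is LC. You instead work directly with the fixed-point equation for $Q^*$, bound the unknown $V^*$ crudely by the constant $V_{\text{max}}$ so that continuity of $Q^*(\cdot,a)$ reduces to $L^1$-continuity of $s\mapsto p^a_s$, and obtain the latter from pointwise continuity via Scheff\'e's lemma. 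What your approach buys: it eliminates both the induction and the completeness/contraction argument, and it shows that the boundedness hypothesis on $s'\mapsto p^a_s(s')$ is not actually needed for this implication (Scheff\'e only requires that the $p^a_s$ be probability densities), whereas the paper leans on that hypothesis in a step that is itself somewhat delicate (extracting a single integrable dominating function $h_a$ valid for all $s$ requires more than pointwise boundedness in $s'$). What the paper's approach buys: the operator-theoretic framing transfers with almost no change to the weak-continuity generalization it states as a corollary, and the same induction template is reused verbatim for the $Q^\pi$ analogue. Amusingly, the paper itself invokes Scheff\'e's lemma only in a remark, to note that density continuity implies weak continuity of $\prob^a_s$; your proof shows that using Scheff\'e head-on (which gives total-variation convergence, hence convergence of integrals of the merely measurable $V^*$) yields the theorem in one step. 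The only points to make explicit are measurability of $V^*$ (so the integrals are defined) and the validity of the Bellman optimality equation in this setting, both of which you flag and both of which are standard.
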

We start by recalling the dominated convergence theorem:
\begin{theorem} (Lebesgue's Dominated Convergence Theorem)\\
 Let $\{f_n\}$ be a sequence of complex-valued measurable functions on a measure space $(\sspace,\Sigma, \mu)$. Suppose that:
\begin{enumerate}
  \item the sequence $\{f_n\}$ converges pointwise to a function $f$
  \item the sequence $\{f_n\}$ is dominated by some integrable function $g$, that is, $\forall n \in \mathbb{N}, \forall x \in \sspace$, $|f_{n}(x)|\leq g(x)$ 
\end{enumerate}
Then, $f$ is integrable and $\lim_{n \to \infty}\int_{\sspace}f_n(d\mu)=\int_{\sspace}f d\mu$.
\end{theorem}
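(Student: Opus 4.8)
The plan is to exploit the Bellman optimality equation to write $Q^*$ as the reward plus a discounted expectation of $V^*$, and then to transfer the continuity of the reward and of the transition density to $Q^*$ through the dominated convergence theorem just recalled. First I would reduce the claim to fixed actions: since $\aspace$ is finite and carries the discrete (identity) metric and we use the product metric $d_{\sspace\times\aspace}=d_{\sspace}+d_{\aspace}$, the preceding lemma (the equivalence reducing continuity on $\sspace\times\aspace$ to continuity on each $\sspace\times\{a\}$) lets me prove instead that $s\mapsto Q^*(s,a)$ is LC on $\sspace$ for each fixed $a\in\aspace$. Writing the optimality equation with densities,
\begin{equation*}
Q^*(s, a) = \rew_s^a + \gamma \int_{\sspace} V^*(s')\, p_s^a(s')\, ds',
\end{equation*}
the first term is LC by hypothesis, so everything hinges on the integral term.

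Next I would fix $s_0\in\sspace$ and an arbitrary sequence $s_n\to s_0$ (working with sequences is legitimate because $\sspace$ is a metric space, so continuity at a point coincides with sequential continuity). Setting $f_n(s')=V^*(s')\,p_{s_n}^a(s')$ and $f(s')=V^*(s')\,p_{s_0}^a(s')$, pointwise convergence $f_n\to f$ follows directly from the LC hypothesis on $s\mapsto p_s^a(s')$. For the dominating function I would take $g(s')=V_{\text{max}}\,M$, where $M$ is the assumed uniform bound on the density and $V_{\text{max}}$ bounds $V^*$; then $|f_n(s')|\le V_{\text{max}}\,M$ for every $n$ and $g$ is integrable over $\sspace$, so the dominated convergence theorem gives $\int f_n\to\int f$. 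Combining this with $\rew_{s_n}^a\to\rew_{s_0}^a$ yields $Q^*(s_n,a)\to Q^*(s_0,a)$, and since the sequence was arbitrary, $s\mapsto Q^*(s,a)$ is continuous at $s_0$, hence LC; reassembling over the finitely many actions recovers LC of $Q^*$ on $\sspace\times\aspace$.

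The step I expect to be the main obstacle is producing a legitimate integrable dominating function, which is exactly what the boundedness hypothesis on the density is for: it makes $V_{\text{max}}\,M$ a valid constant envelope, integrable precisely because the state space has finite Lebesgue measure. If one wished to drop the boundedness assumption or allow an infinite-measure $\sspace$, this constant bound would no longer be integrable, and I would instead route the argument through Scheff\'e's lemma, using that each $p_{s_n}^a$ integrates to $1$ and converges pointwise to $p_{s_0}^a$ to obtain $L^1$ convergence of the densities, and then bounding $\bigl|\int V^*(p_{s_n}^a-p_{s_0}^a)\bigr|\le V_{\text{max}}\int|p_{s_n}^a-p_{s_0}^a|\to 0$. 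A secondary point to verify is that $V^*$ is measurable and bounded by $V_{\text{max}}$ so that the integrals are well defined; this is immediate since $V^*=\max_{a}Q^*(\cdot,a)$ and the reward is bounded, and crucially I only use boundedness of $V^*$ here, not its continuity, so there is no circularity.
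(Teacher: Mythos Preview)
Your proposal does not prove the stated theorem (Lebesgue's dominated convergence theorem), which the paper merely \emph{recalls} as a classical tool and does not prove; what you actually argue is the lemma immediately preceding it in the paper, namely that $Q^*$ is locally continuous when $\rew$ and the transition density are. Taking that as the intended target, your argument is correct but follows a genuinely different route from the paper's.

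The paper proceeds iteratively: it sets $Q_0\equiv 0$, $Q_{n+1}=TQ_n$, shows by induction (applying the DCT with the continuous integrand $\max_{a'}Q_n(\cdot,a')$) that each $Q_n$ is continuous, and then invokes the contraction-mapping theorem on $(C(\sspace\times\aspace),\|\cdot\|_\infty)$ to conclude that $Q^*=\lim_n Q_n$ is a uniform limit of continuous functions, hence continuous. You instead work directly with the Bellman fixed-point identity $Q^*(s,a)=\rew_s^a+\gamma\int V^*(s')\,p_s^a(s')\,ds'$ and apply the DCT once, using only that $V^*$ is bounded and measurable rather than continuous, so there is no circularity and no uniform-limit step. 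Your route is shorter and avoids the completeness machinery; the paper's route has the mild advantage that measurability of the integrand is automatic from continuity at each stage, whereas your justification of the measurability of $V^*$ via $V^*=\max_a Q^*(\cdot,a)$ is itself slightly circular and is better replaced by the value-iteration characterization of $V^*$. Both arguments share the same soft spot at the dominating step: the constant envelope $V_{\max}M$ is integrable only when $\sspace$ has finite Lebesgue measure, and the paper's assertion that boundedness of $p_s^a$ yields an $L^1$ dominator $h_a$ hides the same assumption. Your Scheff\'e alternative removes this restriction cleanly and in fact coincides with the paper's subsequent corollary on weak continuity of $\prob_s^a$.
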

Let's define the following sequence:
$Q_0(s, a) = 0$ and $Q_{n+1}(s, a) = \rew(s, a) + \gamma \mathop\mathbb{E}_{\substack{s' \sim\prob(\cdot | s, a)}}[\max_{a' \in \aspace} Q_n(s', a')]$ for all $s \in \sspace, a \in \aspace$. \\
$Q_0$ is constant and thus continuous on $\sspace \times \aspace$.\\
To show that the continuity of $Q_{n}$ implies the continuity of $Q_{n+1}$, let's apply the dominated convergence theorem. 
\begin{enumerate}
\item  $s \mapsto p^a_s(s')$ being continuous, $s_m$ tends to $s \in \sspace$ implies that $\max\limits_{a' \in \aspace} [Q_n(s', a')]p^a_{s_m}(s')$ tends to $\max\limits_{a' \in \aspace} [Q_n(s', a')]p^a_{s}(s')$.
\item For all $s' \in \sspace, |\max\limits_{a'}Q_n(s', a')|<V_{max}$ by assumption.\\
We fix $a \in \aspace$. $p^a_s$ is bounded so there exists a function $h_a \in L^1(\sspace)$ such that $p^a_{s}(s') \leq h_a(s')$ for all $s', s.$ 
\end{enumerate}
By the dominated convergence theorem, $\lim_{m \to \infty}\bigints_{\sspace} \max\limits_{a' \in \aspace} [Q_n(s', a')]p^a_{s_m}(s')ds'=\bigints_{\sspace}\max\limits_{a' \in \aspace} [Q_n(s', a')]p^a_{s}(s') ds'$.
That is, if $s_m \rightarrow s$, then $\mathop\mathbb{E}_{\substack{s' \sim\prob(\cdot | s_m, a)}}[\max_{a' \in \aspace} Q_n(s', a')] \rightarrow \mathop\mathbb{E}_{\substack{s' \sim\prob(\cdot | s, a)}}[\max_{a' \in \aspace} Q_n(s', a')]$.\\
The reward function being LC by assumption, $Q_{n+1}$ is LC.\\ \\
Let's now show that $Q^* = \text{lim}_{n \rightarrow \infty}Q_n$ is LC.\\
Let $T: \left(C(\sspace \times \aspace), ||\cdot||_{\infty}\right) \rightarrow \left(C(\sspace \times \aspace), ||\cdot||_{\infty}\right)$, where $C(\sspace \times \aspace) =\mathcal{X}$ is the space of LC functions on $\sspace \times \aspace$ and $||\cdot||_{\infty} = \text{sup}_{(s, a) \in \sspace \times \aspace}$, be defined by:\\
$(Tf)(s, a)=  \rew(s, a) + \gamma \mathop\mathbb{E}_{\substack{s' \sim\prob(\cdot | s, a)}}[\max_{a' \in \aspace} f(s', a')]$. It is known that $||Tf - Tg||_{\infty} \leq \gamma ||f - g||_{\infty}$. The contraction mapping theorem implies that $TQ_{n}=Q_{n+1} \rightarrow Q^*$ in $\mathcal{X}$ (sup norm), so $Q^* \in \mathcal{X}$, that is $Q^*$ is LC.\\ \\
The previous result can be stated more generally as follows:
\begin{corollary}
We assume that for each $a \in \aspace$, $\prob^a_s$ is (weakly) continuous as a function of $s$, that is, if $s_n$ converges to $s\in \sspace$ then for every bounded continuous function $f:\sspace \rightarrow \rR$, $\bigintssss f d\prob^a_{s_n}$ tends to $\bigintssss f d\prob^a_s$. If $\rew: \sspace \times \aspace \rightarrow \rR$ is LC then $Q^{*}: \sspace \times \aspace \rightarrow \rR$ is LC. 
\label{general}
\end{corollary}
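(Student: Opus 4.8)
The plan is to adapt the proof of the preceding lemma, replacing its appeal to the dominated convergence theorem — which relied on the existence of transition densities — by a direct application of the weak-continuity hypothesis on the map $s \mapsto \prob^a_s$.

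First I would introduce the value-iteration iterates $Q_0(s,a) = 0$ and $Q_{n+1}(s,a) = \rew(s,a) + \gamma\,\mathbb{E}_{s' \sim \prob^a_s}[\max_{a' \in \aspace} Q_n(s',a')]$, and prove by induction that each $Q_n$ is LC on $\sspace \times \aspace$ and satisfies $0 \le Q_n \le V_{\text{max}}$. The base case is immediate since $Q_0 \equiv 0$. For the inductive step, assume $Q_n$ is LC and bounded by $V_{\text{max}}$; then $g_n(s') := \max_{a' \in \aspace} Q_n(s',a')$ is a finite maximum of continuous functions, hence continuous, with $|g_n| \le V_{\text{max}}$, so $g_n$ is a \emph{bounded continuous} function on $\sspace$. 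The weak-continuity hypothesis says precisely that, for each fixed $a$, whenever $s_m \to s$ in $\sspace$ one has $\int g_n\, d\prob^a_{s_m} \to \int g_n\, d\prob^a_s$; since $\sspace$ is a metric space, sequential continuity coincides with continuity, so $s \mapsto \mathbb{E}_{s' \sim \prob^a_s}[g_n(s')]$ is continuous on $\sspace$. Adding the LC function $\rew(\cdot,a)$ and scaling by $\gamma$ preserves local continuity, so $Q_{n+1}(\cdot,a)$ is LC on $\sspace$ for every $a \in \aspace$; because $\aspace$ carries the discrete metric and $d_{\sspace \times \aspace} = d_\sspace + d_\aspace$, Lemma~\ref{the:Qcont} (whose statement and proof apply verbatim with $Q^*$ in place of $Q^{\pi}$) upgrades this to LC of $Q_{n+1}$ on all of $\sspace \times \aspace$. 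Boundedness is preserved since $\rew \le R_{\text{max}}$ and $\gamma V_{\text{max}} + R_{\text{max}} = V_{\text{max}}$.

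Second I would pass to the limit. The Bellman optimality operator $Tf(s,a) = \rew(s,a) + \gamma\,\mathbb{E}_{s'\sim\prob^a_s}[\max_{a'} f(s',a')]$ is a $\gamma$-contraction on the complete space of bounded real-valued functions on $\sspace \times \aspace$ with the sup norm, so $Q_n = T^n Q_0$ converges in sup norm to its unique fixed point $Q^*$. Each $Q_n$ is LC, and a uniform limit of continuous functions is continuous, so $Q^*$ is LC, which is the claim.

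The only genuinely delicate point is the inductive step: one must check that $g_n$ is \emph{bounded} and \emph{continuous} — not merely measurable — for the weak-convergence hypothesis to apply, which is exactly why both properties are tracked through the induction, and why finiteness of $\aspace$ is essential (it keeps the pointwise supremum over actions continuous). Everything else is the standard contraction/value-iteration argument already used in the density case.
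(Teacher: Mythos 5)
Your proof is correct and follows essentially the same route as the paper's: the same value-iteration sequence, with the dominated-convergence step of the density case replaced by a direct appeal to weak continuity applied to the bounded continuous function $\max_{a'} Q_n(\cdot,a')$, followed by the standard sup-norm contraction argument to pass to the limit. You actually spell out more detail than the paper does (tracking boundedness through the induction and invoking the product-metric lemma to lift per-action continuity to $\sspace\times\aspace$), but the underlying argument is the same.
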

The proof of this result is similar as above but does not involve the Dominated Convergence Theorem as the continuity of $\mathop\mathbb{E}_{\substack{s' \sim\prob(\cdot | s, a)}}[\max_{a' \in \aspace} Q_n(s', a')]$ as a function of $s$ is ensured by the assumption of weakly continuity of $\prob^a_s$.\\
We note that the assumption of weakly continuity of $\prob^a_s$ is weaker than the one on the existence of a density $p^a_s$ as above. Indeed, if $s \mapsto p^a_s$ is continuous, then $s \mapsto \prob^a_s$ is weakly continuous by Scheffé's lemma. 
\\
For the ULC case, the conditions and proof under which the implication "$\prob + \rew \implies Q^*$" hold are stated by in \citet{gelada2019deepmdp}:
\begin{corollary}[\citeauthor{gelada2019deepmdp}, \citeyear{gelada2019deepmdp}]
If $\rew$ and $\prob$ are ULC with Lipschitz constant $L_{\rew}$ and $L_{\prob}$ and $\gamma L_{\prob}<1$, then $Q^{*}$ is ULC with Lipschitz constant $\frac{L_{\rew}}{1-\gamma L_{\prob}}$.
\end{corollary}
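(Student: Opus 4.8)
The statement to prove is the corollary attributed to \citet{gelada2019deepmdp}: if $\rew$ and $\prob$ are ULC with constants $L_{\rew}, L_{\prob}$ and $\gamma L_{\prob} < 1$, then $Q^*$ is ULC with constant $\frac{L_{\rew}}{1 - \gamma L_{\prob}}$. The plan is to use the same value-iteration scheme already set up in the preceding lemma: define $Q_0 \equiv 0$ and $Q_{n+1} = TQ_n$ where $(Tf)(s,a) = \rew_s^a + \gamma \mathbb{E}_{s' \sim \prob_s^a}[\max_{a'} f(s', a')]$, and establish a uniform Lipschitz bound on each $Q_n$ that is stable under the iteration, then pass to the limit using that $T$ is a $\gamma$-contraction in sup-norm (so $Q_n \to Q^*$ uniformly) and that a uniform limit of $K$-Lipschitz functions is $K$-Lipschitz.

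First I would show the inductive claim: if $Q_n$ is ULC (on $\sspace \times \aspace$ with the product metric $d_{\sspace} + d_{\aspace}$, $d_{\aspace}$ the identity metric) with constant $K_n$, then $Q_{n+1}$ is ULC with constant $K_{n+1} = L_{\rew} + \gamma L_{\prob} K_n$. By Lemma~A (the $Q^{\pi}$-continuity decomposition lemma), it suffices to bound $|Q_{n+1}(s,a) - Q_{n+1}(s',a)|$ for fixed $a$. Write
\begin{align*}
|Q_{n+1}(s,a) - Q_{n+1}(s',a)| &\leq |\rew_s^a - \rew_{s'}^a| + \gamma\left| \mathbb{E}_{x \sim \prob_s^a}[\max_{a'} Q_n(x,a')] - \mathbb{E}_{x \sim \prob_{s'}^a}[\max_{a'} Q_n(x,a')]\right|.
\end{align*}
The first term is at most $L_{\rew} d_{\sspace}(s,s')$ by ULC of $\rew$. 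For the second, note $g(x) := \max_{a'} Q_n(x,a')$ is $K_n$-Lipschitz in $x$ (the max over a finite action set of $K_n$-Lipschitz functions is $K_n$-Lipschitz), so by the Kantorovich--Rubinstein dual characterization of $W_1$ the difference of expectations is at most $K_n \, W_{d_{\sspace}}(\prob_s^a, \prob_{s'}^a) \leq K_n L_{\prob} d_{\sspace}(s,s')$, using ULC of $\prob$ (here ULC of $\prob$ means Lipschitzness in the Wasserstein metric on $\Delta(\sspace)$, matching the paper's convention). Hence $K_{n+1} = L_{\rew} + \gamma L_{\prob} K_n$. Since $K_0 = 0$, the recursion gives $K_n = L_{\rew} \sum_{j=0}^{n-1} (\gamma L_{\prob})^j$, which is increasing and, because $\gamma L_{\prob} < 1$, converges to $\frac{L_{\rew}}{1 - \gamma L_{\prob}}$; in particular every $Q_n$ is $\frac{L_{\rew}}{1-\gamma L_{\prob}}$-Lipschitz.

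Finally, $T$ is a $\gamma$-contraction in $\|\cdot\|_\infty$ (standard, and invoked in the preceding lemma), so $Q_n \to Q^*$ uniformly on $\sspace \times \aspace$; a pointwise (indeed uniform) limit of functions all sharing Lipschitz constant $\frac{L_{\rew}}{1-\gamma L_{\prob}}$ is itself $\frac{L_{\rew}}{1-\gamma L_{\prob}}$-Lipschitz, and finiteness of the constant gives ULC. I expect the main technical point — though it is routine — to be the $W_1$/Kantorovich--Rubinstein step: one must be careful that the "ULC of $\prob$" hypothesis is stated with respect to the Wasserstein-$1$ distance on distributions (as is implicit in the bisimulation-metric machinery of Lemma~\ref{lemmacontbisim}), since it is exactly this that lets the Lipschitz constant of $g$ pass through the expectation. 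The condition $\gamma L_{\prob} < 1$ is precisely what is needed for the geometric series of Lipschitz constants to converge, mirroring how $\gamma < 1$ controls the contraction; without it the $Q_n$ could have blowing-up Lipschitz constants even though they converge in sup-norm.
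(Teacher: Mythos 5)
Your argument is correct, and it is worth noting that the paper itself does not actually prove this corollary: it simply defers to \citet{gelada2019deepmdp}, stating that ``the conditions and proof \ldots are stated by'' that reference. What you have written is a complete, self-contained proof of the cited fact, and it is the standard one: it reuses the paper's own value-iteration scheme from the LC lemma ($Q_0\equiv 0$, $Q_{n+1}=TQ_n$), but replaces the dominated-convergence step with the quantitative Kantorovich--Rubinstein bound $|\mathbb{E}_{\prob^a_s}g-\mathbb{E}_{\prob^a_{s'}}g|\le K_n\,W_{d_{\sspace}}(\prob^a_s,\prob^a_{s'})$, which is exactly where the hypothesis that $\prob$ is Lipschitz \emph{into the Wasserstein metric} is consumed and where $\gamma L_{\prob}<1$ is needed to keep the recursion $K_{n+1}=L_{\rew}+\gamma L_{\prob}K_n$ bounded. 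The closure steps (max over a finite action set preserves the Lipschitz constant; a uniform --- indeed pointwise --- limit of $K$-Lipschitz functions is $K$-Lipschitz) are all sound. One small caveat: your bound gives the constant $\frac{L_{\rew}}{1-\gamma L_{\prob}}$ for $Q^*(\cdot,a)$ at each fixed $a$; to get a single Lipschitz constant on $\sspace\times\aspace$ under the product metric with the identity metric on $\aspace$, the paper's own Lemma~A passes to $\max\{\max_a L_a, V_{\max}\}$, so the constant as stated in the corollary should be read in the per-action sense (which matches how \citet{gelada2019deepmdp} state it).
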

The reverse implication "$Q^* \implies \rew + \prob$" is not true as shows the following counter-example. Let's suppose $\sspace=\rR$ and $\aspace=\{1, 2\}$. Let $\rew(s, 1)=1$ and let $\rew(s, 2)$ be any discontinuous function, for instance:
 \begin{align}
\rew^{*}(s, 2) & = \left\{
    \begin{array}{ll}{}
        0 & \mbox{if } s\leq s_0, s_0 \in \rR  \\
        0.5 & \mbox{if } s>s_0.
    \end{array}
\right.
\end{align}
This leads to $Q^*(s, a)= \frac{1}{1-\gamma}$ which is continuous but $\rew(s, 2)$ is discontinuous.

\begin{lemmaap}
We assume that the next state probability measure $\prob^a_s$ admits a density $p^a_s: \sspace \rightarrow [0, \infty)$ with respect to the Lebesgue measure.
If $s' \mapsto p^a_s(s')$ is bounded and $\rew: \sspace \times \aspace \rightarrow \rR$, $\pi: \sspace \rightarrow \Delta(\aspace)$ and $s \mapsto p^a_s(s')$ are LC, then $Q^{\pi}: \sspace \times \aspace \rightarrow \rR$ is LC. 
\end{lemmaap}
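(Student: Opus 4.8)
The plan is to mirror the proof just given for $Q^*$, but starting the value-iteration recursion from the policy-evaluation operator rather than the Bellman optimality operator. First I would define the sequence $Q_0^{\pi}(s, a) = 0$ and
\begin{equation*}
Q_{n+1}^{\pi}(s, a) = \rew(s, a) + \gamma \mathop\mathbb{E}_{s' \sim \prob(\cdot \cbar s, a)}\Big[\sum_{a' \in \aspace}\pi(a' \cbar s')\, Q_n^{\pi}(s', a')\Big],
\end{equation*}
and observe that $Q_0^{\pi}$ is constant, hence LC on $\sspace \times \aspace$.

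Next I would show by induction that LC of $Q_n^{\pi}$ implies LC of $Q_{n+1}^{\pi}$. The new ingredient compared to the $Q^*$-lemma is the inner sum $g_n(s') := \sum_{a' \in \aspace}\pi(a' \cbar s')\, Q_n^{\pi}(s', a')$: since $\aspace$ is finite, $g_n$ is a finite sum of products of LC functions (each $\pi(a' \cbar \cdot)$ is LC by assumption, each $Q_n^{\pi}(\cdot, a')$ is LC by the induction hypothesis and \autoref{the:Qcont}), and products and finite sums of bounded LC functions are LC; moreover $g_n$ is bounded by $V_{\text{max}}$. Then, exactly as in the $Q^*$ case, I would apply Lebesgue's Dominated Convergence Theorem: if $s_m \to s$, continuity of $s \mapsto p^a_s(s')$ gives $g_n(s') p^a_{s_m}(s') \to g_n(s') p^a_s(s')$ pointwise, and the boundedness of $p^a_s$ provides a dominating function $h_a \in L^1(\sspace)$; hence $s \mapsto \mathop\mathbb{E}_{s'\sim\prob(\cdot\cbar s, a)}[g_n(s')]$ is continuous. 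Combining with LC of $\rew$ yields LC of $Q_{n+1}^{\pi}$.

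Finally I would pass to the limit. The policy evaluation operator $T^{\pi}$ defined by $(T^{\pi}f)(s, a) = \rew(s, a) + \gamma \mathop\mathbb{E}_{s'\sim\prob(\cdot\cbar s, a)}[\sum_{a'}\pi(a'\cbar s') f(s', a')]$ is a $\gamma$-contraction in the sup norm on the space of LC (equivalently, bounded continuous) functions on $\sspace \times \aspace$, which is complete; so $Q_n^{\pi} = (T^{\pi})^n Q_0^{\pi}$ converges in sup norm to $Q^{\pi}$ and the limit remains in that space, i.e. $Q^{\pi}$ is LC. I expect the main obstacle to be the bookkeeping around the finite sum over actions — verifying that products $\pi(a'\cbar \cdot) Q_n^{\pi}(\cdot, a')$ stay LC (using boundedness to control the product) and that the relevant function space is closed under $T^{\pi}$ so that the contraction-mapping argument applies; as in the preceding lemmas, one could also state the more general version replacing the density assumption with weak continuity of $\prob^a_s$, in which case the Dominated Convergence step is replaced directly by the definition of weak convergence applied to the bounded continuous function $g_n$.
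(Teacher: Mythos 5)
Your proposal matches the paper's proof essentially step for step: the same value-iteration sequence $Q^{\pi}_{n+1} = \rew + \gamma\,\mathbb{E}_{s'\sim\prob^a_s}[V^{\pi}_n(s')]$, the same induction using dominated convergence with the bounded density as dominating function, and the same sup-norm contraction argument to pass to the limit (the paper even states your suggested weak-continuity variant as a corollary). The only cosmetic difference is that you verify continuity of the inner sum $\sum_{a'}\pi(a'\cbar s')Q^{\pi}_n(s',a')$ directly as a finite sum of products of bounded LC functions, whereas the paper invokes its earlier lemma on continuity of $V^{\pi}$ given continuity of $Q^{\pi}$ and $\pi$.
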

Similarly, we proceed by induction and consider the following sequence:
\begin{equation*}
Q^{\pi}_{0}(s, a) = 0
\end{equation*}
\begin{equation*}
Q^{\pi}_{n+1}(s, a) = \rew(s, a) + \gamma \mathop\mathbb{E}_{\substack{s' \sim \prob^a_s}}V^{\pi}_n(s')
\end{equation*}
As above, $Q^{\pi}_{0}$ is continuous and we then proceed by induction and apply the dominated convergence theorem. \\
We assume that $Q^{\pi}_n$ is continuous. Assuming $\pi$-continuous, we have shown that this also implies that $V^{\pi}_n$ is continuous.
\begin{enumerate}
\item $s \mapsto p^a_s(s')$ being continuous, $s_m$ tends to $s \in \sspace$ implies that $p^a_{s_m}(s')V_n^{\pi}(s')$ tends to $p^a_{s}(s')V_n^{\pi}(s')$.
\item For all $s' \in \sspace, |V^{\pi}(s')|<V_{max}$ by assumption.\\
We fix $a \in \aspace$. $p^a_s$ is bounded so there exists a function $h_a \in L^1(\sspace)$ such that $p^a_{s}(s') \leq h_a(s')$ for all $s', s.$ 
\end{enumerate}
By the dominated convergence theorem, $\lim_{m \to \infty}\bigints_{\sspace}p^a_{s_m}(s')V_n^{\pi}(s')ds'=\bigints_{\sspace}p^a_{s}(s') V_n^{\pi}(s')ds'$.
Hence, if $s_m \rightarrow s$, then $\mathop\mathbb{E}_{\substack{s' \sim\prob(\cdot | s_m, a)}}V_n^{\pi}(s') \rightarrow \mathop\mathbb{E}_{\substack{s' \sim\prob(\cdot | s, a)}} V_n^{\pi}(s')$.\\
The reward function being LC by assumption, $Q^{\pi}_{n+1}$ is LC.\\ \\
As shown above, $Q^{\pi}_n$ converges to $Q^{\pi}$ in sup norm, so $Q^{\pi}$ is continuous.
\begin{corollary}
We assume that for each $a \in \aspace$, $\prob^a_s$ is (weakly) continuous as a function of $s$, that is, if $s_n$ converges to $s\in \sspace$ then for every bounded continuous function $f:\sspace \rightarrow \rR$, $\bigintssss f d\prob^a_{s_n}$ tends to $\bigintssss f d\prob^a_s$. If $\rew: \sspace \times \aspace \rightarrow \rR$ and $\pi:\sspace \rightarrow \Delta(\aspace)$ are LC then $Q^{\pi}: \sspace \times \aspace \rightarrow \rR$ is LC. 
\label{general}
\end{corollary}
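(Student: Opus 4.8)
The plan is to mirror the proof of the preceding corollary (the weakly-continuous-transition version for $Q^*$), replacing the optimal Bellman operator with the policy-evaluation operator and invoking \autoref{lemmac} in place of the elementary non-expansion bound on $\max$. Concretely, I would define the iterates $Q^{\pi}_0 \equiv 0$ and $Q^{\pi}_{n+1}(s,a) = \rew(s,a) + \gamma\,\rE_{s'\sim\prob^a_s}V^{\pi}_n(s')$, where $V^{\pi}_n(s) = \sum_{a\in\aspace}\pi(a\cbar s)Q^{\pi}_n(s,a)$, and show by induction that every $Q^{\pi}_n$ is LC (equivalently, continuous) on $\sspace\times\aspace$. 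By \autoref{the:Qcont} it suffices to check continuity in $s$ for each fixed $a\in\aspace$, and the base case is immediate since $Q^{\pi}_0$ is constant.

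For the inductive step, assume $Q^{\pi}_n$ is LC. Since rewards lie in $[0,R_{\text{max}}]$, a telescoping bound gives $\|Q^{\pi}_n\|_{\infty}\le V_{\text{max}}$, so $V^{\pi}_n$ is bounded; and since $Q^{\pi}_n$ is LC and $\pi$ is LC, \autoref{lemmac} shows $V^{\pi}_n$ is LC, hence bounded and continuous. Now fix $a\in\aspace$ and let $s_m\to s$ in $\sspace$: applying the assumed weak continuity of $\prob^a_{\cdot}$ to the bounded continuous test function $V^{\pi}_n$ gives $\rE_{s'\sim\prob^a_{s_m}}V^{\pi}_n(s')\to\rE_{s'\sim\prob^a_{s}}V^{\pi}_n(s')$. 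This is precisely the step that replaces the Dominated Convergence argument of the density version, and it is the only place the weak-continuity hypothesis (rather than a bounded density) is used. Hence $s\mapsto\gamma\,\rE_{s'\sim\prob^a_s}V^{\pi}_n(s')$ is continuous, and adding the LC reward term $\rew(\cdot,a)$ shows $Q^{\pi}_{n+1}$ is LC.

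Finally, I would pass to the limit via a contraction argument, exactly as in the $Q^*$ case. The policy-evaluation operator $(T^{\pi}f)(s,a) = \rew(s,a) + \gamma\,\rE_{s'\sim\prob^a_s}\sum_{a'\in\aspace}\pi(a'\cbar s')f(s',a')$ is a $\gamma$-contraction in $\|\cdot\|_{\infty}$, the argument above shows it maps bounded LC functions to bounded LC functions, and the space of bounded LC functions on $\sspace\times\aspace$ is complete (a uniform limit of continuous functions is continuous). Thus $Q^{\pi}_n = (T^{\pi})^n Q^{\pi}_0$ converges in sup norm to the unique fixed point $Q^{\pi}$, which is therefore LC. The only genuine subtlety — and the step to state carefully — is the verification that $V^{\pi}_n$ is an admissible (bounded, continuous) test function at each stage, since this is what lets the single weak-continuity hypothesis propagate through the recursion; the remaining ingredients are the standard contraction/completeness facts already invoked twice in this section.
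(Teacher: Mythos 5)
Your proposal is correct and follows essentially the same route as the paper: the paper's own proof is just the observation that the inductive argument for the density version goes through verbatim once the Dominated Convergence step is replaced by applying the weak-continuity hypothesis to the bounded continuous test function $V^{\pi}_n$, followed by the same sup-norm contraction argument to pass to the limit. Your write-up makes explicit the one point the paper leaves implicit — that $V^{\pi}_n$ is an admissible test function because it is bounded by $V_{\text{max}}$ and LC via the $\pi$-continuity hypothesis — but this is a matter of detail, not of approach.
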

The proof of this result is similar as above but does not involve the Dominated Convergence Theorem as the continuity of $\mathop\mathbb{E}_{\substack{s' \sim\prob(\cdot | s, a)}}V^{\pi}(s')$ as a function of $s$ is ensured by the assumption of weakly continuity of $\prob^a_s$.\\
For the ULC case, the conditions and proof under which the implication "$\prob + \rew \implies Q^{\pi}$" hold are stated by in \citet{rachelson2010locality}:
\begin{lemmaap}[\citeauthor{rachelson2010locality}, \citeyear{rachelson2010locality}]
If $\rew$, $\prob$ and $\pi$ are ULC with Lipschitz constant $L_{\rew}$, $L_{\prob}$ and $L_{\pi}$, and if $\gamma L_{\prob}(1+L_{\pi})<1$, then $Q^{\pi}$ is ULC with Lipschitz constant $\frac{L_{\rew}}{1-\gamma L_{\prob}(1+L_{\pi})}$.
\end{lemmaap}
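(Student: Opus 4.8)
The plan is to realize $Q^\pi$ as the uniform limit of the policy-evaluation iterates and to propagate a Lipschitz bound along the iteration, just as in the LC argument above. Set $Q^\pi_0\equiv 0$ and $Q^\pi_{n+1}(s,a)=\rew(s,a)+\gamma\,\mathbb{E}_{s'\sim\prob^a_s}[V^\pi_n(s')]$ with $V^\pi_n(s)=Q^\pi_n(s,\pi(s))$. Since the policy-evaluation Bellman operator $T$ is a $\gamma$-contraction in $\|\cdot\|_\infty$ on $C(\sspace\times\aspace)$, the contraction mapping theorem gives $Q^\pi_n\to Q^\pi$ uniformly, so it suffices to control the Lipschitz constants $L_n$ of the iterates and pass to the limit.

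Next I would show by induction that $Q^\pi_n$ is ULC on $(\sspace\times\aspace,d_{\sspace\times\aspace})$ with constant $L_n:=L_{\rew}\sum_{k=0}^{n-1}\big(\gamma L_{\prob}(1+L_{\pi})\big)^k$. The base case holds since $Q^\pi_0$ is constant. For the step, assume $Q^\pi_n$ is $L_n$-Lipschitz; by the deterministic-policy corollary of \citet{rachelson2010locality} stated above, $V^\pi_n$ is then ULC with constant $L_n(1+L_{\pi})$. Writing
\[
 \big|Q^\pi_{n+1}(s,a) - Q^\pi_{n+1}(s',a')\big| \le \big|\rew(s,a)-\rew(s',a')\big| + \gamma\Big|\mathbb{E}_{s''\sim\prob^a_s}[V^\pi_n(s'')] - \mathbb{E}_{s''\sim\prob^{a'}_{s'}}[V^\pi_n(s'')]\Big|,
\]
the first term is at most $L_{\rew}\,d_{\sspace\times\aspace}\big((s,a),(s',a')\big)$. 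For the second term, the coupling definition of the Wasserstein distance gives $\big|\mathbb{E}_P f-\mathbb{E}_Q f\big|\le \mathrm{Lip}(f)\,W_{d_{\sspace}}(P,Q)$ for Lipschitz $f$, so it is bounded by $\gamma L_n(1+L_{\pi})\,W_{d_{\sspace}}(\prob^a_s,\prob^{a'}_{s'})\le \gamma L_n(1+L_{\pi})L_{\prob}\,d_{\sspace\times\aspace}\big((s,a),(s',a')\big)$, using that $\prob$ is $L_{\prob}$-Lipschitz into $(\Delta(\sspace),W_{d_{\sspace}})$. Collecting terms yields the recursion $L_{n+1}=L_{\rew}+\gamma L_{\prob}(1+L_{\pi})L_n$, which unrolls to the claimed $L_n$.

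Finally, because $\gamma L_{\prob}(1+L_{\pi})<1$ the geometric series converges and $L_n\uparrow L:=\frac{L_{\rew}}{1-\gamma L_{\prob}(1+L_{\pi})}$; since $Q^\pi_n\to Q^\pi$ pointwise, for any $(s,a),(s',a')$ we get $|Q^\pi(s,a)-Q^\pi(s',a')|=\lim_n|Q^\pi_n(s,a)-Q^\pi_n(s',a')|\le \lim_n L_n\,d_{\sspace\times\aspace}\big((s,a),(s',a')\big)=L\,d_{\sspace\times\aspace}\big((s,a),(s',a')\big)$, so $Q^\pi$ is ULC with constant $L$. (Equivalently, one can observe that $T$ maps the closed, $T$-invariant set of $L$-Lipschitz functions bounded by $V_{\max}$ into itself, since $L_{\rew}+\gamma L_{\prob}(1+L_{\pi})L=L$, and hence its fixed point $Q^\pi$ lies there.) The step needing care — and the main obstacle — is the $V^\pi_n$ bound: one must use the deterministic-policy estimate $\mathrm{Lip}(V^\pi_n)\le L_n(1+L_{\pi})$ rather than the stochastic-policy estimate $V_{\max}L_{\pi}+L_n$ proved earlier, since the non-decaying $V_{\max}L_{\pi}$ term would destroy the geometric recursion; everything else is routine bookkeeping with the Wasserstein inequality and the contraction.
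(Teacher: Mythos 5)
Your proof is correct, but note that the paper does not actually prove this lemma itself: it is imported verbatim from \citet{rachelson2010locality}, with the surrounding text explicitly deferring to that reference. What you have written is essentially the quantitative refinement of the paper's own LC argument for $\prob+\rew\Rightarrow Q^{\pi}$ (the same iteration $Q^{\pi}_{n+1}=\rew+\gamma\,\mathbb{E}_{\prob^a_s}V^{\pi}_n$, convergence via the $\gamma$-contraction, and the Wasserstein coupling bound $|\mathbb{E}_P f-\mathbb{E}_Q f|\le \mathrm{Lip}(f)\,W(P,Q)$), now tracking Lipschitz constants through the recursion $L_{n+1}=L_{\rew}+\gamma L_{\prob}(1+L_{\pi})L_n$ and passing to the limit. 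The induction, the unrolled geometric series, and the limit argument (or the cleaner invariant-set formulation) are all sound, and they recover exactly the constant $L_{\rew}/(1-\gamma L_{\prob}(1+L_{\pi}))$ claimed in the statement. So you have supplied a self-contained derivation where the paper offers only a citation.

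Two small caveats. First, by writing $V^{\pi}_n(s)=Q^{\pi}_n(s,\pi(s))$ you have silently restricted to deterministic policies; this matches the setting of the Rachelson--Lagoudakis corollary quoted just above in the appendix, but the paper's framework (and its Theorem~1) treats stochastic $\pi:\sspace\to\Delta(\aspace)$, so you should either say so or give the stochastic version. Second, your closing remark that the stochastic case only yields the estimate $V_{\max}L_{\pi}+L_n$ and would ``destroy the geometric recursion'' is not quite right: that additive $V_{\max}L_{\pi}$ term comes from the paper's Lemma~C proof, which bounds $Q^{\pi}_n$ in the action argument crudely by $V_{\max}$. If instead you use the inductive hypothesis that $Q^{\pi}_n$ is $L_n$-Lipschitz \emph{jointly} on $(\sspace\times\aspace,\,d_{\sspace}+d_{\aspace})$ --- hence $L_n$-Lipschitz in $a$ for fixed $s$ --- the decomposition $|\mathbb{E}_{a\sim\pi(\cdot|s)}Q^{\pi}_n(s,a)-\mathbb{E}_{a\sim\pi(\cdot|s')}Q^{\pi}_n(s',a)|\le L_n\,W_{d_{\aspace}}(\pi(\cdot|s),\pi(\cdot|s'))+L_n\,d_{\sspace}(s,s')\le L_n(1+L_{\pi})\,d_{\sspace}(s,s')$ gives the same multiplicative bound for stochastic policies, and the recursion (and the final constant) survives unchanged. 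With that adjustment your argument covers the general case.
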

We note that the reverse implication "$Q^{\pi} \implies \rew + \prob$" is not true as there exists a class of policies (the optimal policy is one element of this class) for which the implication does not hold.

\section{Proofs of the section Taxonomy of metrics}
\idmetric*
\begin{proof}
Recall that for any (pseudo-)metric space $(X, d)$, a set $U \subseteq X$ is open if for any $x \in U$, there exists $r > 0$ such that the open ball $B_d(x, r)$ of radius $r$ centered at $x$ is a subset of $U$.\\
Suppose $U \subseteq \sspace$ is a non-empty open set of $\mathcal{S}$. Then for any $x \in U$, there exists $r > 0$ such that $B_{e^{\rI}}(x, r)= \{y \in \sspace| e^{\rI}(x, y)<r\} \subseteq U$. If $r > 1$, $B_{e^{\rI}}(x, r)=\sspace$ and $U = \sspace$. Hence, $\sspace \subset \mathcal{T}_{e^{\rI}}$. Else, $B_{e^{\rI}}(x, r)=\{x\}$ and $\{x\} \subset U$. This is true for all $x \in U$ so $\cup_{x \in U}\{x\} \subset \mathcal{T}_{e^{\rI}}$. Hence, $\mathcal{T}_{e^{\rI}}$ is the collection of all open subsets of $\sspace$, that is, it is the discrete topology on $\sspace$.
\\ \\
Let $(Y, d_Y)$ be any metric space.
\begin{itemize}
 \item We first show that any function $h:(\sspace, e^{\rI}) \rightarrow (Y, d_Y)$  is LC and UC.
Let $\epsilon > 0$.\\
We choose $\delta=\frac{1}{2}.$ Then, for all $x, y \in \sspace$,
\begin{align*}
e^{\rI}(x, y)<\delta &\implies e^{\rI}(x, y) =0  \text{ as } e^{\rI}: \sspace \times \sspace \rightarrow \{0, 1\}.\\
 &\implies x=y \text{ as } e^{\rI} \text{ is a proper metric}.\\
 & \implies d_{Y}(h(x), h(y))=0 \text{ as } d_Y \text{ is a pseudometric.} \\
  & \implies d_{Y}(h(x), h(y))< \epsilon.
  \end{align*}
This shows that any $h$ is UC and thus LC.
\item We now show that any  bounded function $h:(\sspace, e^{\rI}) \rightarrow (Y, d_Y)$  is ULC.\\
$h$ is ULC if there exist $K>0$ such that for all $x, x' \in X$ we have $d_Y(h(x), h(x')) \leq Kd_{\sspace}(s, s')$.\\
If $s=s'$, then $d_{\sspace}(s, s')=0$ by definition of $e^{\rI}$ and $h(s)=h(s')$ which implies $d_Y(h(s), h(s'))=0$. Hence, $h$ is ULC.\\
Else, $e^{\rI}(s, s')=1$. $h$ is bounded so $h(\sspace)$ is a bounded subset of $Y$, that is for all $s, s' \in \sspace$, $d_Y(h(s), h(s'))\leq c$ for some $c>0$. Hence, $h$ is lipschitz.
\end{itemize}
\end{proof}
\trivialmetric*
\begin{proof}
  For $e^{\rT}$, suppose $U \subseteq \mathcal{S}$ is a non-empty open set of $\mathcal{S}$. Then for any $x \in U$, there exists $r > 0$ such that $B_{e^{\rT}}(x, r) \subseteq U$. But observe that, for all $r > 0, x\in \mathcal{S}$, we have $B_{e^{\rT}}(x, r) = \mathcal{S}$ by definition of the trivial pseudo-metric $e^{\rT}$. Hence $U = \mathcal{S}$ and $(\mathcal{S}, e^{\rT})$ has the trivial topology. 
\\ \\
 Let $(Y, d_Y)$ be any metric space. Any function $h:(\sspace, e_{\rT}) \rightarrow (Y, d_Y)$ is LC (resp UC, resp ULC) iff $h$ is constant.\\
$(\impliedby):$ Suppose $h$ is constant taking value $y \in Y$. Recall that $ e_{\rT}: \sspace \times \sspace \rightarrow \{0, 1\}.$ It is clear that $h$ must be Lipschitz continuous, and thus uniformally and locally continuous, because any $K>0$ satisfies for all $s, s' \in \sspace,$ $d_Y(h(s), h(s')) = d_Y(y, y)= 0\leq K e_{t}(x, y)=0$\\
$(\implies):$ Suppose for sake of contradiction that $h: \mathcal{S} \rightarrow Y$ is LC but not constant. Then there exist $s_1, s_2 \in \mathcal{S}$ such that $h(s_1) \neq h(s_2)$. This means that there exists $\epsilon_0>0$ such that $d_Y(h(x)-h(y))>\epsilon_0$. Because we are using the trivial metric on $\sspace$, $d(x, y)=0<\delta$ for all $\delta>0$. This contradicts the LC assumption of $h$. Hence, $h$ LC implies that $h$ is constant. This reasoning also holds for UC (resp ULC) as a function that cannot be LC cannot be UC (resp. ULC) (since ULC $\implies$ UC $\implies$ LC).
 \end{proof}

\agg*
 As a byproduct, we can note that the metric $e^{\phi}$ induces the finest topology on $\hat{\sspace}$. Indeed, by definition, $e_{\phi}:\sspace \rightarrow \{0, 1\}$ is equal to the discrete pseudometric $e^{\rI}:\hat{\sspace} \rightarrow  \{0, 1\}$. Thanks to ~\autoref{the:DiscreteMetricCont}, we can deduce that $e_{\phi}$ induces the discrete topology on $\hat{\sspace}$.
\begin{proof}
\begin{itemize}
\item We first show that any function $f: (\sspace, d_{\sspace}) \rightarrow (Y, d_Y)$ is UC (and thus LC) with respect to the metric $e^{\phi_{f, 0}}$.

Let $\epsilon >0$. We choose $\delta = \frac{1}{2}$. Then, for all $s, t \in \sspace,$ $e^{{\phi}_{f, 0}}(s, t)(s, t)<\delta \implies e^{\phi_{f, 0}}(s, t)=0 \implies \phi(s)=\phi(t) \implies f(s)=f(t) \implies |f(s)-f(t)|\leq \epsilon.$
\item Then, the fact that any function $f: (\sspace, d_{\sspace}) \rightarrow (Y, d_Y)$ is ULC with respect to the pseudometric $e^{\phi_{f, 0}}$ iff $f$ is bounded is a consequence of ~\autoref{the:DiscreteMetricCont}
\item Finally, if $\eta>0$, there is no continuity garantee about $f: (\sspace, d_{\sspace}) \rightarrow (Y, d_Y)$ with respect to the metric $e^{\phi_{f, \eta}}$.\\
Let $\epsilon >0$. We choose $\delta = \frac{1}{2}$. Then, $e^{\phi_{f, \eta}}(s, t)<\delta \implies e^{\phi_{f, \eta}}(s, t)=0 \implies \phi(s)=\phi(s') \implies |f(s)-f(s')|\leq \eta.$\\
If $\epsilon \geq \eta$, then the continuity definition is respected.\\
But when $\epsilon < \eta$, we cannot conclude anything.
\end{itemize}
\end{proof}

\bisimcont*
\begin{proof}
  Take any $s,t\in\cS$ and $a\in\cA$.
  \begin{align*}
    |Q^*(s, a) - Q^*(t, a)| & = \left|\cR_s^a + \gamma\sum_{s'\in\cS}\cP_s^a(s')V^*(s') - \left(\cR_t^a + \gamma\sum_{t'\in\cS}\cP_t^a(t')V^*(t')\right)\right| \\
    & = \left|\cR_s^a -\cR_t^a + \gamma\sum_{s'\in\cS}V^*(s')(\cP_s^a(s') - \cP_t^a(s'))\right| \\
    & \leq |\cR_s^a -\cR_t^a| + \gamma\left|\sum_{s'\in\cS}V^*(s')(\cP_s^a(s') - \cP_t^a(s'))\right| \\
    & \leq |\cR_s^a -\cR_t^a| + \gamma\cW(d^{\sim})(\cP_s^a(s'), \cP_t^a(s')) \\
    & \leq \max_{a\in\cA}\left\{ |\cR_s^a -\cR_t^a| + \gamma\cW(d^{\sim})(\cP_s^a(s'), \cP_t^a(s')) \right\} \\
    & = d^{\sim}(s, t)
  \end{align*}
  We can show the result for $Q^{\pi}$ similarly.
\end{proof}

\bisimcontbis*
\begin{proof}
This is a consequence of \autoref{the:bisimCont}, the normalization coming from the fact that the bisimulation metric $d^{\sim}$ is bounded by $\frac{R_{\text{max}}}{1-\gamma}$.
\end{proof}

\Deltametrics*
\begin{proof}
Let's show that these functions are pseudometrics.\\
First, $d_{\Delta^*}(s, s)=0$.\\
Second, $d_{\Delta^*}(s, s')=d_{\Delta^*}(s', s)$ by symmetry of the graph of the absolute value function.
\begin{align*}
\text{Finally}, d_{\Delta^*}(s_1, s_2) &=\max\limits_{\aspace}|Q^*(s_1, a) - Q^*(s_3, a) + Q^*(s_3, a) - Q^*(s_2, a)|\\ 
&\leq \max\limits_{\aspace}\left(|Q^*(s_1, a) - Q^*(s_3, a)| + |Q^*(s_3, a) - Q^*(s_2, a)|\right), \text{by the triangular inequality.}\\
&\leq \max\limits_{\aspace}|Q^*(s_1, a) - Q^*(s_3, a)|) + \max\limits_{\aspace}|Q^*(s_3, a) - Q^*(s_2, a)|\\
& = d_{\Delta^*}(s_1, s_2) + d_{\Delta^*}(s_1, s_2).
\end{align*}
Hence, $d_{\Delta^*}$ satisfies the triangular inequality. We can thus conclude that $d_{\Delta^*}$ is a pseudometric.\\ \\
Similarly, we prove that $d_{\Delta^{\pi}}$ and $d_{\Delta_{\forall}}$ are pseudometrics.\\
We now prove the continuity properties given by these three metrics.

\begin{itemize}
\item 
Let $s, t \in \sspace$.
As above, we fix $a \in \aspace$. Then,
$|Q^*(s, a)-Q^*(t, a)|\leq \max\limits_{a \in \aspace}|Q^*(s, a)-Q^*(t, a)|=d_{\Delta^*}(s, t)$. Thus, $Q^*$ is lipschtiz continuous with respect to $d_{\Delta^*}$.

\item 
Let $s, t \in \sspace$ and let $\pi \in \Pi$. As before, let's fix $a \in \aspace$.
$|Q^{\pi}(s, a)-Q^{\pi}(t, a)|\leq \max\limits_{a \in \aspace}|Q^{\pi}(s, a)-Q^{\pi}(t, a)|=d_{\Delta_{\pi}}(s, t)$. Thus, $Q^{\pi}$ (resp.$Q^{*}$) is lipschtiz continuous with respect to $d_{\Delta_{\pi}}$ (resp. $d_{\Delta_{\pi^*}}$).\\

\item
Let $s, t \in \sspace$ and let $\pi \in \Pi$. As before, let's fix $a \in \aspace$.\\
$|Q^{\pi}(s, a)-Q^{\pi}(t, a)|\leq \max\limits_{\pi \in \Pi}|Q^{\pi}(s, a)-Q^{\pi}(t, a)| \leq \max\limits_{a \in \aspace, \pi \in \Pi}|Q^{\pi}(s, a)-Q^{\pi}(t, a)| = d_{\Delta_{\pi}}(s, t)$.\\
This in particular true for any $\pi \in \Pi$, hence for any policy $\pi$, $Q^{\pi}$ is lipschtiz continuous with respect to $d_{\Delta_{\forall}}$.
\end{itemize}
\end{proof}
We now formalize in \autoref{lemma:laxlip} and \autoref{lemma:laxlipbis} the results from \citet{taylor2009bounding} that we added to \autoref{fig:largertable}.
\begin{lemmaap}\label{lemma:laxlip}
$\forall s, s' \in \sspace, |V^*(s)-V^*(s')| \leq d^{\sim_{lax}}(s, s') $
\end{lemmaap}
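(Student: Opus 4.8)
The plan is to prove the bound by a simultaneous induction on the value-iteration sequence for $V^*$ and on the metric-iteration sequence whose limit is $d^{\sim_{lax}}$. First I would recall the two iterative constructions. On one hand, $V^*$ is the limit of $V_0\equiv 0$, $V_{n+1}=\mathcal{T}V_n$ with $(\mathcal{T}V)(s)=\max_{a\in\aspace}\big[\rew_s^a+\gamma\,\mathbb{E}_{s'\sim\prob_s^a}V(s')\big]$; since $\mathcal{T}$ is a $\gamma$-contraction in sup-norm, $V_n\to V^*$ uniformly. On the other hand, in exact analogy with \autoref{lemmacontbisim}, $d^{\sim_{lax}}$ is the least fixed point of the operator $F_{lax}$ obtained from $F$ by replacing the per-action term $\max_a(\,\cdots)$ by the Hausdorff-style combination of $|\rew_s^a-\rew_t^b|+\gamma W_d(\prob_s^a,\prob_t^b)$ over \emph{matching} actions (this operator, written out in the appendix, is again a $\gamma$-contraction). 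Hence the iterates $d_0\equiv 0$, $d_{n+1}=F_{lax}(d_n)$ converge to $d^{\sim_{lax}}$, and one checks easily that $(d_n)$ is nondecreasing, so the convergence is monotone and pointwise.

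Next I would prove by induction on $n$ that $V_n$ is $1$-Lipschitz with respect to $d_n$, i.e. $|V_n(s)-V_n(t)|\le d_n(s,t)$ for all $s,t\in\sspace$. The base case is immediate since both sides vanish. For the inductive step, fix $s,t$ and assume WLOG $V_{n+1}(s)\ge V_{n+1}(t)$. Choose an action $a$ attaining the maximum defining $V_{n+1}(s)$, so $V_{n+1}(s)=\rew_s^a+\gamma\,\mathbb{E}_{s'\sim\prob_s^a}V_n(s')$. The matching-action structure of $F_{lax}$ guarantees an action $b$ at $t$ with $|\rew_s^a-\rew_t^b|+\gamma W_{d_n}(\prob_s^a,\prob_t^b)\le d_{n+1}(s,t)$, and since $V_{n+1}(t)\ge \rew_t^b+\gamma\,\mathbb{E}_{t'\sim\prob_t^b}V_n(t')$, subtracting gives
\begin{align*}
 V_{n+1}(s)-V_{n+1}(t) &\le (\rew_s^a-\rew_t^b)+\gamma\Big(\mathbb{E}_{s'\sim\prob_s^a}V_n(s')-\mathbb{E}_{t'\sim\prob_t^b}V_n(t')\Big)\\
 &\le |\rew_s^a-\rew_t^b|+\gamma\, W_{d_n}(\prob_s^a,\prob_t^b)\ \le\ d_{n+1}(s,t),
\end{align*}
where the middle inequality uses Kantorovich--Rubinstein duality together with the induction hypothesis that $V_n$ is $1$-Lipschitz for $d_n$. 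The symmetric choice handles the case $V_{n+1}(t)\ge V_{n+1}(s)$, so $|V_{n+1}(s)-V_{n+1}(t)|\le d_{n+1}(s,t)$.

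Finally I would pass to the limit: $|V^*(s)-V^*(t)|=\lim_n|V_n(s)-V_n(t)|\le \lim_n d_n(s,t)=d^{\sim_{lax}}(s,t)$, which is the claim. The main obstacle is getting the lax operator $F_{lax}$ stated so that ``the maximising action $a$ at $s$ admits a matching action $b$ at $t$ with the stated inequality'' is literally a consequence of its definition — this is exactly the asymmetric ($\max_a\min_b$) part of the Hausdorff metric over actions, and one must invoke the correct one of the two directions depending on the sign of $V_{n+1}(s)-V_{n+1}(t)$. A secondary technical point is justifying the Kantorovich--Rubinstein duality step at the level of Borel probability measures on $\sspace$ under the pseudometric $d_n$, and checking the monotone/uniform convergence of both iterations so the limit passage is valid; neither of these is deep, but both must be stated carefully since $d^{\sim_{lax}}$ is only a pseudometric.
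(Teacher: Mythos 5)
Your proposal is correct. Note, however, that the paper does not actually prove this lemma: its ``proof'' is a one-line deferral to \citet{taylor2009bounding}, so there is no in-paper argument to compare against. Your coupled induction --- iterating $V_{n+1}=\mathcal{T}V_n$ alongside $d_{n+1}=F(d_n)$ (with $F$ the Hausdorff-over-actions operator from the appendix), showing $|V_n(s)-V_n(t)|\le d_n(s,t)$ by picking the maximising action $a$ at the state with the larger value and extracting a matching $b$ from the $\sup_a\inf_b$ half of the Hausdorff metric, then passing to the limit --- is exactly the standard argument underlying that citation, and every step checks out: monotonicity of the iterates $d_n$ follows from $d_0$ being the bottom of the lattice and $F$ being monotone, and the finiteness of $\aspace$ lets you attain the infimum over $b$. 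One simplification worth noting: the Kantorovich--Rubinstein duality you flag as a technical worry is not needed in its full strength. Since the paper defines $W_{d_n}$ in its primal (coupling) form, the inequality $\mathbb{E}_{s'\sim\prob_s^a}V_n(s')-\mathbb{E}_{t'\sim\prob_t^b}V_n(t')\le W_{d_n}(\prob_s^a,\prob_t^b)$ follows directly from the induction hypothesis by integrating $V_n(x)-V_n(y)\le d_n(x,y)$ against an arbitrary coupling and taking the infimum; only the easy direction of duality is used, so no measure-theoretic care beyond measurability of $V_n$ is required.
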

\begin{proof}
The proof of this result can be found in \citep{taylor2009bounding}.
\end{proof}
\begin{lemmaap}\label{lemma:laxlipbis}
$\forall s, s' \in \sspace, \frac{1 - \gamma}{R_{max}}|V^*(s)-V^*(s')| \leq e^{\sim_{lax}}(s, s') $
\end{lemmaap}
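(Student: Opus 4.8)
The plan is to deduce this from \autoref{lemma:laxlip} together with the boundedness of the lax-bisimulation metric $d^{\sim_{lax}}$, exactly as \autoref{lemma:Qcont} is deduced from \autoref{the:bisimCont}. The bridge I need is the inequality $d^{\sim_{lax}}(s, s') \leq \frac{R_{\text{max}}}{1-\gamma}\, e^{\sim_{lax}}(s, s')$ for all $s, s' \in \sspace$, after which the claim follows immediately by chaining: $|V^*(s) - V^*(s')| \leq d^{\sim_{lax}}(s, s') \leq \frac{R_{\text{max}}}{1-\gamma}\, e^{\sim_{lax}}(s, s')$ and then dividing through by $\frac{R_{\text{max}}}{1-\gamma}$.

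To establish the bridge inequality I would argue by cases on the value of the discrete metric $e^{\sim_{lax}}$. If $e^{\sim_{lax}}(s, s') = 0$, then $s \sim_{lax} s'$ by definition of the discrete pseudo-metric induced by the largest lax-bisimulation relation; since $d^{\sim_{lax}}$ is a lax-bisimulation metric, its kernel coincides with $\sim_{lax}$, so $d^{\sim_{lax}}(s, s') = 0$ and the inequality holds with equality. If instead $e^{\sim_{lax}}(s, s') = 1$, I use that $d^{\sim_{lax}}$, being the least fixed point of the lax analogue of the operator $F$ from \autoref{lemmacontbisim}, is bounded above by $\frac{R_{\text{max}}}{1-\gamma}$ (the same geometric-series bound that gives the $\frac{R_{\text{max}}}{1-\gamma}$ bound on $d^{\sim}$, because each application of the operator contributes a reward difference of at most $R_{\text{max}}$ scaled by $\gamma^t$), so $d^{\sim_{lax}}(s, s') \leq \frac{R_{\text{max}}}{1-\gamma} = \frac{R_{\text{max}}}{1-\gamma} \cdot e^{\sim_{lax}}(s, s')$.

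The only mildly delicate point is that the excerpt defers the formal definition of $d^{\sim_{lax}}$ and the statement that it is bounded by $V_{\text{max}} = \frac{R_{\text{max}}}{1-\gamma}$ to the appendix, so I would either cite that appendix definition or give the one-line fixed-point boundedness argument explicitly; neither is hard, but the proof should not leave the $V_{\text{max}}$ bound on $d^{\sim_{lax}}$ unjustified. Everything else is a routine two-case check plus the invocation of \autoref{lemma:laxlip}, so I expect no real obstacle.

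\begin{proof}
If $e^{\sim_{lax}}(s, s') = 0$ then $s \sim_{lax} s'$, so $d^{\sim_{lax}}(s, s') = 0$ since $d^{\sim_{lax}}$ is a lax-bisimulation metric, and by \autoref{lemma:laxlip} we get $|V^*(s) - V^*(s')| \leq d^{\sim_{lax}}(s, s') = 0$, so both sides of the claimed inequality are $0$. If $e^{\sim_{lax}}(s, s') = 1$, then using that $d^{\sim_{lax}}$ is bounded above by $\frac{R_{\text{max}}}{1-\gamma}$ together with \autoref{lemma:laxlip},
\[
  |V^*(s) - V^*(s')| \leq d^{\sim_{lax}}(s, s') \leq \frac{R_{\text{max}}}{1-\gamma} = \frac{R_{\text{max}}}{1-\gamma}\, e^{\sim_{lax}}(s, s').
\]
In both cases, dividing by $\frac{R_{\text{max}}}{1-\gamma}$ yields $\frac{1-\gamma}{R_{\text{max}}}|V^*(s) - V^*(s')| \leq e^{\sim_{lax}}(s, s')$.
\end{proof}
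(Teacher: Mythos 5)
Your proof is correct and follows essentially the same route as the paper's: invoke Lemma~\ref{lemma:laxlip} and then normalize using the bound $d^{\sim_{lax}} \leq \frac{R_{\text{max}}}{1-\gamma}$, with your two-case check on $e^{\sim_{lax}}$ simply making explicit the bridge inequality the paper proves separately for the discrete versus continuous bisimulation metrics. Nothing is missing.
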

\begin{proof}
This result is a consequence of Lemma \ref{lemma:laxlip}. The normalization comes from the fact that the lax bisimulation metric $d^{\sim_{lax}}$ is bounded by $\frac{R_{\text{max}}}{1-\gamma}$.
\end{proof}

\avfmetrics*
\begin{proof}
The space of value functions $\{V^{\pi}|\pi \in \Pi\}$ is a polytope \citep{dadashi2019value} and \cite{bellemare19geometric} considered the finite set of policies $\Pi_{\text{AVF}}$ corresponding to extremal verticies of this polytope $\{V^{\pi}|\pi \in \Pi_{\text{AVF}}\}$

As noted by \cite{dabney2020value}, the space of action value functions $\{Q^{\pi}|\pi \in \Pi\}$ is also polytope since polytopes are invariant by translations (reward $\rew_s^a$ term) and linear transformations ($\gamma \trans_s^a$ term). Additionally, extremal vertices of $\{Q^{\pi}|\pi \in \Pi\}$ and $\{V^{\pi}|\pi \in \Pi\}$ are reached for the same policies as extremal points of a polytope are invariant by affine transformations. Hence, the set of extremal vertices of $\{Q^{\pi}|\pi \in \Pi\}$ is $\{Q^{\pi}|\pi \in \Pi_{\text{AVF}}\}$.
The maximum between two elements of this polytope is reached at two extremal vertices of the polytope, hence the result.

Now, when approximating the metric $d_{\Delta_{\forall}}$ by $d_{\widetilde{\text{AVF}}(n)}=\max\limits_{\pi\in \Pi_{\widetilde{\text{AVF}}(n)}, a\in\aspace}|Q^{\pi}(s, a)-Q^{\pi}(s', a)|$, where $\Pi_{\widetilde{\text{AVF}}(n)}$ are $n$ samples from the set of extremal policies $\Pi_{\text{AVF}}$, it follows that $d_{\widetilde{\text{AVF}}(n)} \leq d_{\Delta_{\forall}}$.
\end{proof}

\begin{customthm}{2}
\label{the:topologyships}
  The relationships between the topologies induced by the metrics in \autoref{fig:largertable} are given by the following diagram.  We denote by $d_1\rightarrow d_2$ when $\mathcal{T}_{d_1} \subset  \mathcal{T}_{d_2}$, that is, when $\cT_{d_1}$ is coarser than $\cT_{d_2}$. Here $d$ denotes any arbitrary metric.
  \begin{equation*}
    \begin{tikzcd}
      e^{\sim_{lax}} \arrow[d] &d^{\sim_{lax}} \arrow[l]  \arrow[d] & & d  \arrow[r]  &e^{\rI} \\
      e^{\sim}  & d^{\sim}\arrow[l]& d_{\Delta^*}\arrow[l]\arrow[r] & d_{\Delta_{\forall}} &  d_{\Delta_{\pi}}\arrow[l] \arrow[d]&&& & & \\
      d& e^{\rT} \arrow[l] & & e^{\sim_{\pi}}& d^{\sim_{\pi}} \arrow[l]& 
    \end{tikzcd}
    \label{diag:metricRel}
  \end{equation*}
\end{customthm}

We now prove the second theorem of our paper, Theorem ~2. The proof itself will be made up of a series of lemmas for each of the arrows in the diagram. We first start by proving a necessary lemma.
\begin{lemmaap}
Given two metrics $d_1$ and $d_2$ on $\sspace$, if there exists $\alpha >0$ such that $d_1(s, t) \leq \alpha d_2(s, t)$ for all $s, t \in \sspace$, then $\mathcal{T}_{d_1}$ is coarser than $\mathcal{T}_{d_2}$, that is $\mathcal{T}_{d_1} \subset \mathcal{T}_{d_2} $ .
\end{lemmaap}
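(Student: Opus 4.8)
The plan is to show directly that every set open with respect to $d_1$ is open with respect to $d_2$, using the hypothesis $d_1 \le \alpha d_2$ to control how $\epsilon$-balls in the two metrics nest inside each other. The key observation is a ball-containment estimate: for any $x \in \sspace$ and any $\epsilon > 0$, we have $B_{d_2}(x, \epsilon/\alpha) \subseteq B_{d_1}(x, \epsilon)$. Indeed, if $y$ satisfies $d_2(x,y) < \epsilon/\alpha$, then by hypothesis $d_1(x,y) \le \alpha\, d_2(x,y) < \epsilon$, so $y \in B_{d_1}(x,\epsilon)$.

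With this in hand, I would take an arbitrary $U \in \mathcal{T}_{d_1}$ and verify $U \in \mathcal{T}_{d_2}$ by checking the defining property of the $d_2$-topology at each point. Fix $x \in U$. Since $U$ is $d_1$-open, there exists $\epsilon_x > 0$ with $B_{d_1}(x, \epsilon_x) \subseteq U$. Setting $\delta_x := \epsilon_x/\alpha > 0$, the ball-containment estimate gives $B_{d_2}(x, \delta_x) \subseteq B_{d_1}(x, \epsilon_x) \subseteq U$. As $x \in U$ was arbitrary, $U$ satisfies the openness criterion in $(\sspace, \mathcal{T}_{d_2})$, hence $U \in \mathcal{T}_{d_2}$. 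This shows $\mathcal{T}_{d_1} \subseteq \mathcal{T}_{d_2}$, i.e.\ $\mathcal{T}_{d_1}$ is coarser than $\mathcal{T}_{d_2}$, as required.

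There is no real obstacle here; the argument is entirely routine point-set topology, and the only thing to be careful about is using $\alpha > 0$ (so that $\epsilon/\alpha$ is a legitimate positive radius) and tracking the direction of the inequality correctly. When this lemma is subsequently applied to the concrete metric pairs appearing in the diagram of Theorem~2, the work will instead be in establishing the relevant pointwise domination $d_1 \le \alpha\, d_2$ for each arrow; this lemma simply packages the topological consequence once such a bound is available.
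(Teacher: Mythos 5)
Your proof is correct and follows essentially the same route as the paper's: establish the ball-nesting estimate $B_{d_2}(x,\epsilon/\alpha)\subseteq B_{d_1}(x,\epsilon)$ from the pointwise bound, then use it to verify the openness criterion for any $d_1$-open set under $d_2$. The paper merely states the containment in the equivalent form $B_{d_2}(x,\epsilon)\subseteq B_{d_1}(x,\alpha\epsilon)$; the argument is otherwise identical.
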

\begin{proof}
Let $\epsilon >0$ and $x \in \sspace$.
Suppose $x' \in B_{d_2}(x, \epsilon)$. By definition, this means that $d_2(x, x')\leq \epsilon$. It implies $\frac{1}{\alpha}d_1(x, x')\leq d_2(x, x')\leq \epsilon$ by assumption and then $x' \in B_{d_1}(x,\alpha\epsilon)$. Hence, $B_{d_2}(x, \epsilon) \subset B_{d_1}(x, \alpha \epsilon)$.\\
Now, suppose $U \subset \sspace$ is a non-empty open set of $\sspace$.Then $\forall x \in U$,there exists $r>0$ such that $B_{d_1}(x, r) \subset U$. We have shown that $B_{d_2}(x, \epsilon) \subset B_{d_1}(x, \alpha\epsilon)$ for all $\epsilon >0$. So we also have $B_{d_2}(x, \frac{r}{\alpha}) \subset U$. By definition a topology on $\sspace$ is a collection of open subsets on $\sspace$ so we can conclude that $\mathcal{T}_{d_1} \subset \mathcal{T}_{d_2} $.
\end{proof}

\begin{lemmaap}
For all $s, t \in \sspace, e^{\rT}(s, t) \leq  e^{\rI}(s, t)$ and $\mathcal{T}_{e^{\rT}} \subset \mathcal{T}_{e^{\rI}} $. 
\end{lemmaap}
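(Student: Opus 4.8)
The plan is to observe that this statement sits at the very bottom of the metric hierarchy and follows immediately from the definitions together with the comparison lemma just established. First I would record the pointwise inequality: by definition $e^{\rT}: \sspace\times\sspace\to\{0\}$, so $e^{\rT}(s,t)=0$ for every pair $s,t\in\sspace$, while $e^{\rI}$ is a (pseudo-)metric and hence $e^{\rI}(s,t)\ge 0$. Therefore $e^{\rT}(s,t)=0\le e^{\rI}(s,t)$ for all $s,t$, which is exactly the claimed inequality; note in particular that it holds with multiplicative constant $\alpha=1>0$.

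Next I would invoke the preceding comparison lemma (``if $d_1(s,t)\le\alpha d_2(s,t)$ for some $\alpha>0$ then $\mathcal{T}_{d_1}\subset\mathcal{T}_{d_2}$'') with $d_1=e^{\rT}$, $d_2=e^{\rI}$, and $\alpha=1$; this yields $\mathcal{T}_{e^{\rT}}\subset\mathcal{T}_{e^{\rI}}$ with no further work. As an alternative route, one could argue directly from the two structural lemmas already proven: \autoref{the:DiscreteMetricContt} shows $\mathcal{T}_{e^{\rT}}=\{\emptyset,\sspace\}$ and \autoref{the:DiscreteMetricCont} shows $\mathcal{T}_{e^{\rI}}$ is the discrete topology (all subsets of $\sspace$), and trivially $\{\emptyset,\sspace\}$ is contained in the power set of $\sspace$. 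I would likely present the first route for uniformity with the other arrows in the diagram of \autoref{the:topologyships}.

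There is essentially no obstacle here: the content is a consistency check confirming that the trivial metric induces the coarsest topology relative to the finest one induced by the identity metric. The only point requiring a moment's care is verifying that the hypothesis $\alpha>0$ of the comparison lemma is met — it is, with $\alpha=1$ — so that the lemma genuinely applies despite one side being identically zero.
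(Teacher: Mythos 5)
Your proposal is correct; the pointwise inequality $e^{\rT}(s,t)=0\le e^{\rI}(s,t)$ is immediate from the definitions, and either of your two routes to the topology inclusion is valid. The paper itself uses what you call the alternative route --- citing the earlier lemmas that $\mathcal{T}_{e^{\rI}}$ is the finest topology and $\mathcal{T}_{e^{\rT}}$ the coarsest --- but your primary route via the comparison lemma with $\alpha=1$ is equally sound and, as you note, more uniform with the other arrows of the diagram.
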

\begin{proof}
This comes directly from the definitions of the trivial metric $e^{\rT}$ and discrete metric $e^{\rI}$. Moreover, as mentioned in \autoref{the:DiscreteMetricCont} and \autoref{the:DiscreteMetricContt}, the discrete metric induces the finest topology on $\sspace$ while the trivial metric induces the coarsest topology.
\end{proof}
\begin{lemmaap}
For all $s, t \in \sspace,d_{\Delta^*}(s, t) \leq d_{\Delta}(s, t)$ and $d_{\Delta_{\pi}}(s, t) \leq d_{\Delta}(s, t)$.
\end{lemmaap}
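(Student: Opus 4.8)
The plan is to observe that this is a pure ``maximum over a larger index set'' comparison, and that the two desired topological inclusions $\cT_{d_{\Delta^*}}\subset\cT_{d_{\Delta_{\forall}}}$ and $\cT_{d_{\Delta_{\pi}}}\subset\cT_{d_{\Delta_{\forall}}}$ from the diagram of Theorem~2 then follow immediately from the lemma just proved (pointwise domination of metrics implies refinement of topologies), applied with $\alpha=1$. Throughout, $d_{\Delta}$ abbreviates $d_{\Delta_{\forall}}$.

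First I would recall the three definitions from Lemma~\ref{the:Deltametrics}: $d_{\Delta^*}(s,t)=\max_{a\in\aspace}|Q^*(s,a)-Q^*(t,a)|$, $d_{\Delta_{\pi}}(s,t)=\max_{a\in\aspace}|Q^{\pi}(s,a)-Q^{\pi}(t,a)|$, and $d_{\Delta_{\forall}}(s,t)=\max_{\pi'\in\Pi,\,a\in\aspace}|Q^{\pi'}(s,a)-Q^{\pi'}(t,a)|$ (these are already known to be pseudometrics, so the hypotheses of the preceding lemma are met). The only fact beyond elementary manipulation is that the optimal policy is a \emph{stationary} policy, i.e.\ $\pi^*\in\Pi$ (guaranteed, as noted in the Background), so that $Q^*=Q^{\pi^*}$ is one of the functions appearing inside the maximum defining $d_{\Delta_{\forall}}$.

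Given this, for fixed $s,t\in\sspace$ I would note that $\{(\pi^*,a):a\in\aspace\}$ is a subset of $\Pi\times\aspace$, so enlarging the index set can only enlarge the maximum:
\[ d_{\Delta^*}(s,t)=\max_{a\in\aspace}|Q^{\pi^*}(s,a)-Q^{\pi^*}(t,a)|\le\max_{\pi'\in\Pi,\,a\in\aspace}|Q^{\pi'}(s,a)-Q^{\pi'}(t,a)|=d_{\Delta_{\forall}}(s,t). \]
The identical argument with the fixed policy $\pi$ in place of $\pi^*$ (and $\pi\in\Pi$ trivially) gives $d_{\Delta_{\pi}}(s,t)\le d_{\Delta_{\forall}}(s,t)$. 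Applying the preceding lemma with $\alpha=1$ then yields $\cT_{d_{\Delta^*}}\subset\cT_{d_{\Delta_{\forall}}}$ and $\cT_{d_{\Delta_{\pi}}}\subset\cT_{d_{\Delta_{\forall}}}$, i.e.\ the arrows $d_{\Delta^*}\to d_{\Delta_{\forall}}$ and $d_{\Delta_{\pi}}\to d_{\Delta_{\forall}}$ in the diagram.

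There is essentially no obstacle: the content of the step is just the monotonicity of $\max$ under enlarging the index set. The one point worth stating explicitly is the membership $\pi^*\in\Pi$, which is what makes $Q^*$ one of the $Q^{\pi'}$ being maximized over; absent a stationary optimal policy one would have to argue via $\varepsilon$-optimal stationary policies and pass to a limit, but that is not needed here.
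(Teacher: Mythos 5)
Your proof is correct and follows essentially the same route as the paper's: both reduce the inequality to the monotonicity of $\max$ under enlargement of the index set from $\{\pi^*\}\times\aspace$ (resp. $\{\pi\}\times\aspace$) to $\Pi\times\aspace$. Your explicit remark that $Q^*=Q^{\pi^*}$ with $\pi^*\in\Pi$ stationary is a worthwhile clarification that the paper leaves implicit.
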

\begin{proof}
By definition, for all $s, s' \in \sspace$, $d_{\Delta^*}(s, s')=\max\limits_{a\in\aspace}|Q^*(s, a)-Q^*(s', a)|\leq \max\limits_{\pi\in\Pi, a\in\aspace}|Q^{\pi}(s, a)-Q^{\pi}(s', a)|$ so $d_{\Delta^*}(s, t) \leq d_{\Delta}(s, t)$.\\
The proof is similar for $d_{\Delta_{\pi}}$
\end{proof}
\begin{lemmaap}
For all $s, t \in \sspace$, there exists $\alpha>0$ such that $d^{\sim}(s, t) \leq \alpha e^{\sim}(s, t)$ and $d^{\sim_{lax}}(s, t) \leq \alpha e^{\sim_{lax}}(s, t)$ and $d^{\sim_{\pi}}(s, t) \leq \alpha e^{\sim_{\pi}}(s, t)$.
\end{lemmaap}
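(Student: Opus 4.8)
The plan is to split on whether the two states are identified by the relevant bisimulation relation, and to exploit that each continuous bisimulation metric is bounded. Recall that, by definition, $e^{\sim}$ is the discrete pseudo-metric of the largest bisimulation relation $\sim$, so $e^{\sim}(s,t)\in\{0,1\}$ with $e^{\sim}(s,t)=0$ exactly when $s\sim t$; the same holds for $e^{\sim_{lax}}$ and $e^{\sim_\pi}$ with their respective relations. Recall also that a bisimulation metric has kernel equal to its bisimulation relation, so $d^{\sim}(s,t)=0$ whenever $s\sim t$, and likewise $d^{\sim_{lax}}(s,t)=0$ when $s\sim_{lax}t$ and $d^{\sim_\pi}(s,t)=0$ when $s\sim_\pi t$.

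First I would set $\alpha:=V_{\text{max}}=\frac{R_{\text{max}}}{1-\gamma}$ and check that it works for all three inequalities simultaneously. Fix $s,t\in\sspace$. If $s\sim t$, then $e^{\sim}(s,t)=0$ and also $d^{\sim}(s,t)=0$ by the kernel property, so $d^{\sim}(s,t)\le\alpha\,e^{\sim}(s,t)$ holds trivially. If $s\not\sim t$, then $e^{\sim}(s,t)=1$, and since $d^{\sim}$ is bounded above by $\frac{R_{\text{max}}}{1-\gamma}$ (as used in the corollary following \autoref{the:bisimCont}), we get $d^{\sim}(s,t)\le\alpha=\alpha\,e^{\sim}(s,t)$. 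The identical case split, using that $d^{\sim_{lax}}$ and $d^{\sim_\pi}$ are also bounded by $\frac{R_{\text{max}}}{1-\gamma}$ (see \autoref{lemma:laxlipbis} and its $\pi$-analogue), handles the lax- and $\pi$-bisimulation inequalities.

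There is no genuinely hard step here; the only things that need care are citing the two ingredients correctly: (i) that each discrete metric $e^{\bullet}$ and its continuous counterpart $d^{\bullet}$ share the same kernel, making the inequality vacuous on related pairs, and (ii) the uniform upper bound $\frac{R_{\text{max}}}{1-\gamma}$ on each bisimulation metric, which supplies the constant on unrelated pairs. Together with the preceding lemma (pointwise $d_1\le\alpha d_2$ implies $\mathcal{T}_{d_1}\subset\mathcal{T}_{d_2}$), this yields the topological arrows $d^{\sim}\to e^{\sim}$, $d^{\sim_{lax}}\to e^{\sim_{lax}}$ and $d^{\sim_\pi}\to e^{\sim_\pi}$ appearing in the diagram of \autoref{the:topologyships}.
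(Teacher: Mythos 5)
Your proposal is correct and matches the paper's own argument: the same case split on whether $e^{\sim}(s,t)$ is $0$ or $1$, using the shared kernel in the first case and the bound $\frac{R_{\text{max}}}{1-\gamma}$ on the continuous metric in the second, with the identical choice of $\alpha$. No further comment needed.
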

\begin{proof}
Let $s, t \in \sspace$.\\
If $e^{\sim}(s, t)=0$, then $s \sim t$ and $d^{\sim}(s, t)=0$.\\
If $e^{\sim}(s, t)=1$, then $d^{\sim}(s, t)\neq0$. Moreover, by construction of the bisimulation metric, $d^{\sim}(s, t)\in [0, \frac{R_{\text{max}}}{1-\gamma}]$ for all $s, t\in \sspace$. Hence, $d^{\sim}(s, t) \leq \frac{R_{\text{max}}}{1-\gamma}e^{\sim}(s, t)$. Choosing $\alpha=  \frac{R_{\text{max}}}{1-\gamma}$, we get $d^{\sim}(s, t) \leq \alpha e^{\sim}(s, t)$.\\
The proof is similar for the two other inequalities.
\end{proof}
\begin{lemmaap}
For all $s, t \in \sspace, e^{\sim_{lax}}(s, t) \leq e^{\sim}(s, t)$ and $d^{\sim_{lax}}(s, t) \leq d^{\sim}(s, t)$.
\end{lemmaap}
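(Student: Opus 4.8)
The claim has a discrete part ($e^{\sim_{lax}}\le e^{\sim}$) and a continuous part ($d^{\sim_{lax}}\le d^{\sim}$), and I would treat them in that order since the first is essentially a special case of the idea used in the second. In both cases the point is that a lax comparison gets to \emph{choose} a matching action, so whenever ``equal actions'' works, ``matching actions'' works at least as well.

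\textbf{Discrete metrics.} First I would show that every bisimulation relation is also a lax-bisimulation relation. Let $E\subseteq\sspace\times\sspace$ be a bisimulation relation and $(s,t)\in E$. For any action $a$ available at $s$, pick the \emph{same} action $b=a$ at $t$: then $\cR_s^a=\cR_t^a=\cR_t^b$ and $\prob_s^a(X)=\prob_t^a(X)=\prob_t^b(X)$ for all $X\in\Sigma(E)$, which is exactly the matching condition; the ``vice versa'' requirement follows identically, using that $E$ is symmetric (so $(t,s)\in E$ and the bisimulation condition holds with the roles of $s$ and $t$ exchanged). Hence $E$ is a lax-bisimulation relation. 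Applying this to $E=\ \sim$ and using that $\sim_{lax}$ is the \emph{largest} lax-bisimulation relation gives $\sim\ \subseteq\ \sim_{lax}$. Consequently, if $e^{\sim}(s,t)=0$ then $(s,t)\in\ \sim\ \subseteq\ \sim_{lax}$, so $e^{\sim_{lax}}(s,t)=0$; and if $e^{\sim}(s,t)=1$ the inequality $e^{\sim_{lax}}(s,t)\le 1=e^{\sim}(s,t)$ is automatic. This settles $e^{\sim_{lax}}\le e^{\sim}$ pointwise.

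\textbf{Continuous metrics.} Let $F$ be the operator of Lemma~\ref{lemmacontbisim}, so $d^{\sim}$ is its least fixed point, and let $F_{\mathrm{lax}}$ be the analogous operator on $\cM$ whose least fixed point is $d^{\sim_{lax}}$ (its definition, a Hausdorff-type aggregation over actions after passing through state--action pairs, is the one recalled in the appendix). The key step is to prove that $F_{\mathrm{lax}}(d)\le F(d)$ pointwise for every $d\in\cM$: each term $|\cR_s^a-\cR_t^b|+\gamma W_d(\prob_s^a,\prob_t^b)$ appearing in $F_{\mathrm{lax}}(d)(s,t)$ is \emph{minimized} over the matching action $b$ (and, in the other ``direction'', over $a$), whereas the corresponding term in $F(d)(s,t)$ forces $b=a$; since $b=a$ is always an admissible choice, and since both Hausdorff directions are bounded by $\max_a\big(|\cR_s^a-\cR_t^a|+\gamma W_d(\prob_s^a,\prob_t^a)\big)=F(d)(s,t)$ (using symmetry of $|\cdot|$ and of $W_d$), the inequality follows. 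Now $d^{\sim}$ is a fixed point of $F$, so $F_{\mathrm{lax}}(d^{\sim})\le F(d^{\sim})=d^{\sim}$, i.e. $d^{\sim}$ is a pre-fixed point of $F_{\mathrm{lax}}$. Because $F_{\mathrm{lax}}$ is monotone on the complete lattice $\cM$ (monotonicity of $d\mapsto W_d$), the Knaster--Tarski theorem identifies its least fixed point $d^{\sim_{lax}}$ with the infimum of its pre-fixed points, whence $d^{\sim_{lax}}\le d^{\sim}$ pointwise.

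\textbf{Main obstacle.} Everything hinges on writing down $F_{\mathrm{lax}}$ precisely and checking $F_{\mathrm{lax}}(d)\le F(d)$ cleanly for \emph{both} of the $\sup$--$\inf$ directions; once that inequality is established, the fixed-point comparison via pre-fixed points is routine. The remaining ingredients --- monotonicity of $F_{\mathrm{lax}}$ and completeness of the lattice of state pseudo-metrics bounded by $V_{\text{max}}$ --- are standard and already underpin Lemma~\ref{lemmacontbisim}. (As a byproduct, combined with the scaling lemma above, the two pointwise inequalities yield the topological containments $\mathcal{T}_{e^{\sim_{lax}}}\subset\mathcal{T}_{e^{\sim}}$ and $\mathcal{T}_{d^{\sim_{lax}}}\subset\mathcal{T}_{d^{\sim}}$ used in Theorem~\ref{the:topologyships}.)
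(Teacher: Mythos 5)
Your proof is correct. For the discrete inequality you argue the same thing the paper does, just in the contrapositive direction and more cleanly: the paper case-splits on $e^{\sim_{lax}}(s,t)$ and (somewhat garbledly) negates the lax-matching condition to conclude $e^{\sim_{lax}}(s,t)=1\implies e^{\sim}(s,t)=1$, whereas you show directly that every bisimulation relation is a lax-bisimulation relation by taking $b=a$, so $\sim\ \subseteq\ \sim_{lax}$; the content is identical. For the continuous inequality the paper gives no argument at all --- it simply cites \citet{taylor2009bounding} --- so your operator comparison is a genuine addition: you verify $F_{\mathrm{lax}}(d)\le F(d)$ for both Hausdorff directions by choosing the diagonal matching (note that for the second direction, $\sup_b\inf_a\delta(d)((s,a),(t,b))\le\sup_b\delta(d)((s,b),(t,b))=F(d)(s,t)$ already, so the appeal to symmetry of $W_d$ is not even needed), and then conclude via monotonicity and the fact that the least fixed point lies below every pre-fixed point. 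This is exactly the right fixed-point machinery (it is the same lattice structure underlying Lemma~\ref{lemmacontbisim} and Theorem~\ref{thm:pi_bisim_operator} in the appendix), and it makes the lemma self-contained where the paper defers to an external reference; the only ingredient you are implicitly trusting is that the appendix's Hausdorff-based operator is indeed the one whose least fixed point is $d^{\sim_{lax}}$, which the paper asserts.
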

\begin{proof}
Let $s, t \in \sspace$.\\
If $e^{\sim_{lax}}(s, t)=0$ then the inequality is verified by postivity of any metric.\\
If $e^{\sim_{lax}}(s, t)=1$, it means there exists $a \in \aspace$ such that for all $b \in \aspace$, $\rew^a_s \neq \rew^b_s$ and there exists $X \in \Sigma(E)$ such that $\prob^a_s(X) \neq \prob^a_b(X)$. This is in particular true when $b=a$ which means that the conditions to be a bisimulation relation are not satisfied. Hence, $e^{\sim}(s, t)=1$. \\
The second inequality is proven in \citet{taylor2009bounding}.
\end{proof}
\textbf{Complexity results}\\
We now provide details explaining the complexity results from \autoref{tab:properties}.\\
The discrete identity metric $e^\rI$ compares all pairs of states which results in a complexity $O(|\sspace|)$. The complexity of the trivial metric $e^\rT$ is independent of the number of states and hence constant.

The discrete bisimulation metric can be computed by finding the bisimulation equivalence classes. This can be done by starting with a single equivalence class that gets iteratively split into smaller equivalence classes when one of the bisimulation conditions is violated; this process is repeated until stability. Each iteration of this process is $O(|\aspace| |\sspace|^2)$, since we are performing an update for all actions and pairs of states. Since there can be at most $|\sspace|$ splits, this yields the complexity of $O(|\aspace| |\sspace|^3)$.

The bisimulation metric can be computed by iteratively applying $\lfloor \frac{\ln \delta}{\ln \gamma} \rfloor$ times the operator F from \autoref{lemmacontbisim} \citep{ferns2004metrics}, for each action and each pair of states. We note that the time complexity for solving an optimal flow problem as originally presented by \citet{ferns2004metrics} is incorrect and off by a factor of $|\sspace|$: it should be $O(|\sspace|^3\log |\sspace|)$. Thus, the complexity they present for computing the bisimulation metric is also off by a factor of $|\sspace|$; the corrected time complexity is $O\big(|\cA||\cS|^5\log |\cS|\frac{\ln\delta}{\ln \gamma}\big)$, as we presented in Table~1. The $\pi$-bisimulation metrics do not require a loop over the action space as the matching is under a fixed policy. Therefore their complexity is the one of the bisimulation metrics off by a factor $|\aspace|$.

The time complexity of computing $d_{\Delta_{\pi}}$ and $d_{\Delta^{*}}$ is the same as the complexity of policy evaluation and value iteration, plus an extra $O(|\sspace^2|)$ term for computing the resulting metric; this last term is dominated by the policy/value iteration complexity, which is what we have included in the table.

 \section{Formal definition of bisimulation metrics}

We will also define a metric between probability functions that is used by some of the state metrics considered in this paper.
Let $(Y, d_Y)$ be a metric space with Borel $\sigma$-algebra $\Sigma$.

\begin{definition}
  The \textbf{Wasserstein} distance \citep{villani2008optimal} between two probability measures $P$ and $Q$ on $Y$, under a given metric $d_Y$ is given by
  $W_{d_Y}(P, Q) = \inf_{\lambda \in \Gamma(P, Q)}\mathop\mathbb{E}_{(x, y)\sim \lambda}[d_Y(x, y)]$, where $\Gamma(P, Q)$ is the set of couplings between $P$ and $Q$.
\end{definition}

The Wasserstein distance can be understood as the minimum cost of transporting $P$ into $Q$ where the cost of moving a unit mass from the point $x$ to the point $y$ is given by $d(x, y)$.
 \begin{theorem}
  \label{thm:pi_bisim_operator}
  Define $\cF^{\pi}:\mathcal{M}\rightarrow\mathcal{M}$ by
  $\cF^{\pi}(d)(s, t) = |\cR^{\pi}_s - \cR^{\pi}_{t}| + \gamma
  \cW_1(d)(\cP^{\pi}_s, \cP^{\pi}_{t})$, 
  then $\cF^{\pi}$ has a least fixed point $d^{\pi}_{\sim}$, and
  $d^{\pi}_{\sim}$ is a $\pi$-bisimulation metric.
\end{theorem}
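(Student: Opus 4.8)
The plan is to follow the template of \autoref{lemmacontbisim} (the construction of the bisimulation metric $d^{\sim}$ by \citet{ferns05metrics}), replacing the per-action quantities $\rew^a_s,\prob^a_s$ and the outer $\max_{a\in\aspace}$ by the policy-averaged reward $\cR^{\pi}_s$, the policy-averaged transition $\cP^{\pi}_s$, and no maximum. Concretely, I would let $\cM$ be the set of pseudo-metrics on $\sspace$ bounded by $V_{\text{max}}=\frac{R_{\text{max}}}{1-\gamma}$, equipped with the uniform metric $\rho(d,d')=\sup_{s,t\in\sspace}|d(s,t)-d'(s,t)|$; this is a complete metric space (and a complete lattice under the pointwise order). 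The first step is to check that $\cF^{\pi}$ is well defined, i.e. that $\cF^{\pi}(d)\in\cM$ whenever $d\in\cM$: symmetry and $\cF^{\pi}(d)(s,s)=0$ are immediate; the triangle inequality for $\cF^{\pi}(d)$ follows from the triangle inequality of $\cW_1(d)$ on the fixed metric space $(\sspace,d)$ together with the triangle inequality for $|\cdot|$; and $\cF^{\pi}(d)(s,t)\le R_{\text{max}}+\gamma V_{\text{max}}=V_{\text{max}}$ gives the bound.

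The second step is existence (and in fact uniqueness) of the fixed point. I would show that $\cF^{\pi}$ is a $\gamma$-contraction for $\rho$: the only nontrivial ingredient is that $|\cW_1(d)(P,Q)-\cW_1(d')(P,Q)|\le\rho(d,d')$ uniformly in $P,Q$, which follows by feeding any coupling of $P,Q$ into both functionals and using $|d(x,y)-d'(x,y)|\le\rho(d,d')$ pointwise. Hence $\rho(\cF^{\pi}(d),\cF^{\pi}(d'))\le\gamma\,\rho(d,d')$, and Banach's fixed point theorem yields a unique $d^{\pi}_{\sim}\in\cM$ with $\cF^{\pi}(d^{\pi}_{\sim})=d^{\pi}_{\sim}$; since any fixed point of $\cF^{\pi}$ is automatically bounded by $V_{\text{max}}$ by the computation above, it lies in $\cM$ and equals $d^{\pi}_{\sim}$, so $d^{\pi}_{\sim}$ is a fortiori the least fixed point. (Alternatively, monotonicity of $\cF^{\pi}$ with respect to the pointwise order plus Knaster--Tarski gives the least fixed point directly.)

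The third step is to identify the kernel $E:=\{(s,t)\in\sspace\times\sspace : d^{\pi}_{\sim}(s,t)=0\}$ with the largest $\pi$-bisimulation relation $\sim_{\pi}$ of \autoref{def:bisimulation}; this is exactly what makes $d^{\pi}_{\sim}$ a $\pi$-bisimulation metric. $E$ is an equivalence relation because $d^{\pi}_{\sim}$ is a pseudo-metric (reflexivity, symmetry, and transitivity via the triangle inequality). For $\sim_{\pi}\subseteq E$, I would observe that $\cM_{\sim_{\pi}}:=\{d\in\cM : d(s,t)=0 \text{ whenever } s\sim_{\pi}t\}$ is a nonempty closed (hence complete) subset of $\cM$ invariant under $\cF^{\pi}$: if $d\in\cM_{\sim_{\pi}}$ and $s\sim_{\pi}t$, then $\cR^{\pi}_s=\cR^{\pi}_t$ and, because $\cP^{\pi}_s$ and $\cP^{\pi}_t$ agree on every $\sim_{\pi}$-closed measurable set, there is a coupling of $\cP^{\pi}_s,\cP^{\pi}_t$ supported on $\sim_{\pi}$, on which $d$ vanishes, so $\cW_1(d)(\cP^{\pi}_s,\cP^{\pi}_t)=0$ and $\cF^{\pi}(d)(s,t)=0$. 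The unique fixed point therefore lies in $\cM_{\sim_{\pi}}$. For the reverse inclusion I would show $E$ is itself a $\pi$-bisimulation relation, whence $E\subseteq\sim_{\pi}$ by maximality: if $(s,t)\in E$ then $0=d^{\pi}_{\sim}(s,t)=\cF^{\pi}(d^{\pi}_{\sim})(s,t)=|\cR^{\pi}_s-\cR^{\pi}_t|+\gamma\cW_1(d^{\pi}_{\sim})(\cP^{\pi}_s,\cP^{\pi}_t)$ forces $\cR^{\pi}_s=\cR^{\pi}_t$ and $\cW_1(d^{\pi}_{\sim})(\cP^{\pi}_s,\cP^{\pi}_t)=0$; an optimal (or near-optimal) coupling is then concentrated on $\{d^{\pi}_{\sim}=0\}=E$, and pushing a measurable set $X\in\Sigma(E)$ through it gives $\cP^{\pi}_s(X)=\cP^{\pi}_t(X)$. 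Combining the two inclusions, $E=\sim_{\pi}$.

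The main obstacle — the only place requiring care rather than a crank — is the pair of measure-theoretic ``lifting'' facts used in Step~3: (i) two probability measures agreeing on the sub-$\sigma$-algebra of $E$-closed sets admit a coupling concentrated on $E$, and (ii) a coupling of zero Wasserstein cost forces agreement of the marginals on $\Sigma(E)$. These, together with the existence of optimal couplings, are standard in the bisimulation-metric literature (as in \autoref{lemmacontbisim} and \citep{givan2003equivalence}) but rely on mild regularity of $\sspace$ (e.g. Polish/analytic) and on measurability of $d^{\pi}_{\sim}$, which comes for free as the uniform limit of the iterates of $\cF^{\pi}$ started from a measurable pseudo-metric. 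Everything else — well-definedness, the contraction estimate, and the $\cF^{\pi}$-invariance of $\cM_{\sim_{\pi}}$ — is routine.
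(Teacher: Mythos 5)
Your proposal is correct and follows essentially the same route the paper relies on: the paper states this theorem without proof, importing it from the $\pi$-bisimulation literature (\citet{castro2020scalable}) as the policy-averaged analogue of the Ferns et al.\ construction in \autoref{lemmacontbisim}, and your contraction-mapping argument plus the kernel identification via couplings is exactly that standard argument. The measure-theoretic lifting facts you flag in Step~3 are indeed the only delicate points, and they are handled in the cited references under the same regularity assumptions you name.
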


\begin{definition}
  Given a $1$-bounded pseudometric $d\in\rM$, the metric
  $\delta(d):\cS\times\cA\rightarrow [0,1]$ is defined as follows:
  \[ \delta(d)((s, a), (t, b)) = |\cR(s, a) - \cR(t, b)| + \gamma\cW(d)(\cP(s, a), \cP(t, b)) \]
\end{definition}

\begin{definition}
  Given a finite $1$-bounded metric space $(\mathfrak{M}, d)$ let
  $\mathcal{C}(\mathfrak{M})$ be the set of compact spaces (e.g. closed and
  bounded in $\rR$). The {\em Hausdorff metric}
  $H(d):\mathcal{C}(\mathfrak{M})\times\mathcal{C}(\mathfrak{M})\rightarrow [0,
  1]$ is defined as:
  \[ H(d)(X, Y) = \max\left(\sup_{x\in X}\inf_{y\in Y} d(x, y), \sup_{y\in Y}\inf_{x\in X} d(x, y)\right) \]
\end{definition}

\begin{definition}
  Denote $X_s=\lbrace (s, a) | a\in\cA\rbrace$. We define the operator
  $F:\rM\rightarrow\rM$ as:
  \[ F(d)(s, t) = H(\delta(d))(X_s, X_t) \]
\end{definition}

\begin{theorem}
  $F$ is monotonic and has a least fixed point $d_{lax}$ in which
  $d_{lax}(s, t) = 0$ iff $s\sim_{lax} t$.
\end{theorem}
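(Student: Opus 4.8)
The plan is to treat $F$ as a monotone operator on the complete lattice $(\rM,\le)$ of $1$-bounded pseudometrics on $\cS$ ordered pointwise, obtain $d_{lax}$ by Knaster--Tarski (equivalently, since $\cS$ and $\cA$ are finite, as the limit $\lim_{n}F^n(\mathbf 0)$ of the iterates started from the zero pseudometric), and then pin down its kernel by a two-sided sandwiching argument. This mirrors the proof of \autoref{lemmacontbisim} for the ordinary bisimulation metric, with the $\max_{a\in\aspace}$ there replaced by the Hausdorff functional $H$ over the action-indexed point sets $X_s,X_t$.

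For monotonicity and well-definedness I would first record two facts. (i) The Wasserstein functional is monotone in its ground metric: if $d\le d'$ pointwise then $\cW(d)(P,Q)\le\cW(d')(P,Q)$, because the coupling set $\Gamma(P,Q)$ does not depend on the metric and $\rE_\lambda[d]\le\rE_\lambda[d']$ for every $\lambda$. (ii) The Hausdorff functional is monotone: $m\le m'$ gives $\inf_y m(x,y)\le\inf_y m'(x,y)$, and the two $\sup\inf$ terms defining $H$ then increase. Composing, $d\le d'\Rightarrow\delta(d)\le\delta(d')\Rightarrow F(d)=H(\delta(d))\le H(\delta(d'))=F(d')$. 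One also checks $F$ lands in $\rM$: $\delta(d)$ is a pseudometric on $\cS\times\cA$ (symmetry is clear, the triangle inequality comes from that of $|\cdot|$ and of $\cW(d)$, the latter being standard), $H$ of a pseudometric is a pseudometric on compact subsets (again the only nontrivial point is the classical triangle inequality for the Hausdorff metric), and $1$-boundedness is inherited from $\delta(d)$ being $[0,1]$-valued. Since every subset of $\rM$ has a pointwise supremum that is again a $1$-bounded pseudometric (symmetry, $d(s,s)=0$, and the triangle inequality all pass to suprema), $\rM$ is a complete lattice, so Knaster--Tarski yields the least fixed point $d_{lax}=\min\{d\in\rM : F(d)\le d\}$; in the finite case $F$ is continuous, so concretely $d_{lax}=\lim_n F^n(\mathbf 0)$, a bounded monotone sequence.

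For the kernel I prove both implications. For $s\sim_{lax}t\Rightarrow d_{lax}(s,t)=0$: I show $e^{\sim_{lax}}$ is a pre-fixed point, $F(e^{\sim_{lax}})\le e^{\sim_{lax}}$, whence $d_{lax}\le e^{\sim_{lax}}$ and the implication follows. When $s\not\sim_{lax}t$ this is trivial since $F$ is $1$-bounded; when $s\sim_{lax}t$, the definition of $\sim_{lax}$ supplies for each $a$ a $b$ with $\cR_s^a=\cR_t^b$ and $\cP_s^a(X)=\cP_t^b(X)$ for every $\sim_{lax}$-closed $X$, i.e. $\cP_s^a$ and $\cP_t^b$ have the same pushforward to $\cS/{\sim_{lax}}$, so the coupling matching equal classes witnesses $\cW(e^{\sim_{lax}})(\cP_s^a,\cP_t^b)=0$ and hence $\delta(e^{\sim_{lax}})((s,a),(t,b))=0$; symmetrically for actions from $t$, so $H(\delta(e^{\sim_{lax}}))(X_s,X_t)=0=e^{\sim_{lax}}(s,t)$. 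For $d_{lax}(s,t)=0\Rightarrow s\sim_{lax}t$: set $E=\{(s,t):d_{lax}(s,t)=0\}$, an equivalence relation because $d_{lax}$ is a pseudometric, and argue $E$ is a lax-bisimulation relation, which forces $E\subseteq{\sim_{lax}}$ by maximality. Indeed, for $(s,t)\in E$ we have $0=F(d_{lax})(s,t)=H(\delta(d_{lax}))(X_s,X_t)$, so for each $a$ the infimum $\inf_b\delta(d_{lax})((s,a),(t,b))$ vanishes and, $\aspace$ being finite, is attained: there is $b$ with $\cR_s^a=\cR_t^b$ and $\cW(d_{lax})(\cP_s^a,\cP_t^b)=0$. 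A minimizing coupling $\lambda$ then has $\rE_\lambda[d_{lax}]=0$, hence $d_{lax}=0$ $\lambda$-a.s., so $\lambda$ is supported on $E$; this gives $\cP_s^a(X)=\lambda(X\times\cS)=\lambda(\cS\times X)=\cP_t^b(X)$ for every $E$-closed measurable $X$. The symmetric statement holds likewise, so $E$ is a lax-bisimulation relation and the implication follows; combining the two gives $d_{lax}(s,t)=0\iff s\sim_{lax}t$.

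The main obstacle is the pair of steps translating ``$\cW(d)=0$'' into ``the measures agree on $E$-closed sets'' and back: in the $(\Leftarrow)$ direction one must build the matching coupling from equality of quotient pushforwards, and in the $(\Rightarrow)$ direction one must argue a cost-zero coupling concentrates on $E$ and then read off agreement on sets in $\Sigma(E)$. Getting the measurability bookkeeping right and invoking attainment of the relevant infima — immediate for finite $\cA$ but needing a compactness argument in general — is where the real content sits; the fixed-point existence and the monotonicity of $\cW$, $H$, and $F$ are routine.
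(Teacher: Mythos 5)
Your proof is correct. The paper itself states this theorem without proof (it is imported from \citet{taylor2009bounding}, whose argument is exactly the one you reconstruct), and your route --- monotonicity of $\cW$ and $H$ composed to give monotonicity of $F$, Knaster--Tarski (or contraction/Kleene iteration in the finite case) for the least fixed point, then sandwiching the kernel by showing $e^{\sim_{lax}}$ is a pre-fixed point for one direction and that $\ker(d_{lax})$ is a lax-bisimulation relation for the other --- is the standard and intended one. The only point worth flagging is that the inequality $F(e^{\sim_{lax}})\le e^{\sim_{lax}}$ on pairs with $s\not\sim_{lax}t$ leans on the asserted $1$-boundedness of $\delta(d)$, which as written requires the rewards to be normalized (e.g.\ a $(1-\gamma)$ weighting as in \citet{ferns2004metrics}); this is a gap in the paper's definition rather than in your argument.
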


\section{Additional empirical evaluations}
In this section we conduct an empirical evaluation to complement the theoretical analyses performed above. We conduct these experiments on the well-known four-room domain \citep{sutton99between,solway14optimal,machado17laplacian,bellemare19geometric} for all our experiments, which is illustrated in \autoref{fig:4rooms}. This domain enables clear visualization and ensures that we can compute a metric defined over the entirety of the state space. These experiments aim to showcase 1) the qualitative difference in the state-wise distances produced by the different metrics; 2) visualize the differences in abstract states that the different metrics produce when used for state aggregation.

The dynamics of the environment are as follows. There are four actions (up, down, left, right), transitions are deterministic, there is a reward of $+1$ upon entering the non-absorbing goal state, there is a penalty of $-1$ for running into a wall, and we use a discount factor $\gamma=0.9$. We will conduct our experiments on four representative metrics: $d^{\sim}$, $d^{\sim_{lax}}$, $d_{\Delta^*}$, and $d_{\Delta}$. Since the maximization over all policies required for $d_{\Delta}$ is in general intractable, we instead sample 50 adversarial value functions (AVFs) \citep{bellemare19geometric} as a proxy for the set of all policies. 

\clearpage
\subsection{Four rooms}

\begin{figure}[!h]
  \centering
  \includegraphics[width=0.5\textwidth]{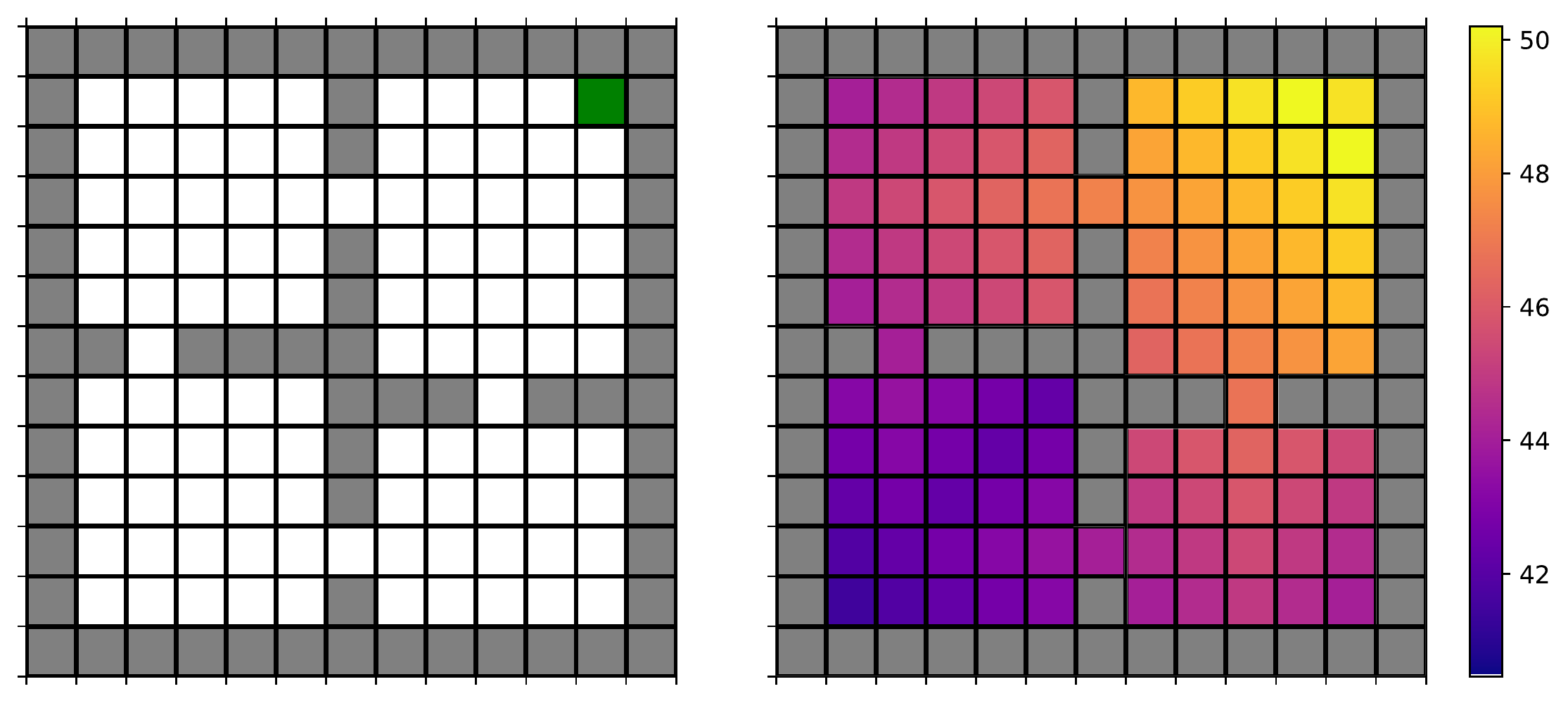}
  \caption{{\bf Left:} Four rooms domain with a single goal state (in green). {\bf Right:} Optimal values for each cell.}
  \label{fig:4rooms}
\end{figure}

\begin{figure*}[!h]
  \centering
  \includegraphics[width=\textwidth]{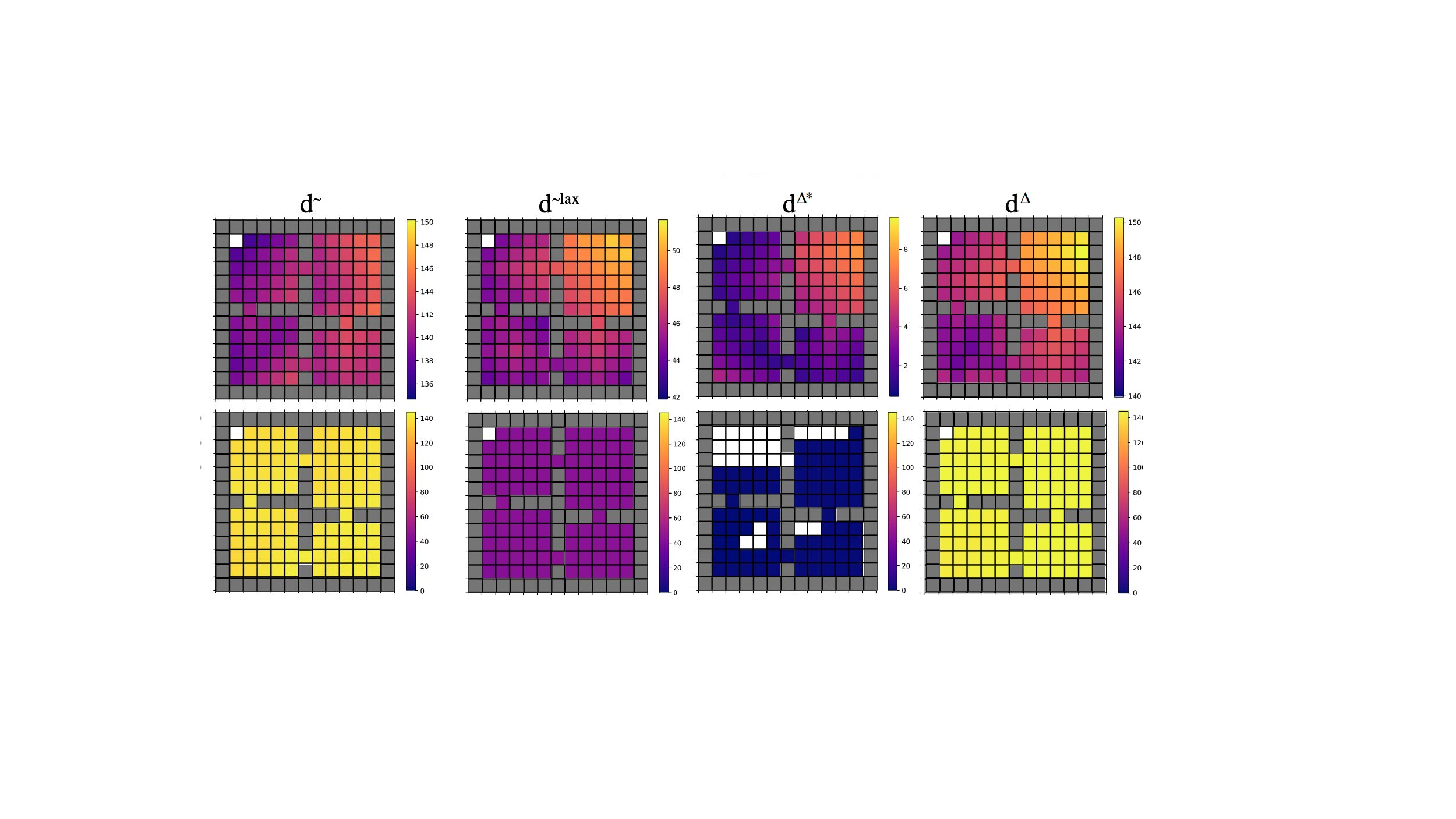}
  \caption{The top row illustrates the distances from the top-left cell to every other cell (note the color scales are shifted for each metric for easier differentiation between states). The bottom row displays $d(s, t) - |V^*(s) - V^*(t)|$, where $s$ is the top-left cell, illustrating how tight an upper bound the metrics yield on the difference in optimal values.}
  \label{fig:distances}
\end{figure*}

\begin{figure*}[!h]
  \centering
  \includegraphics[width=\textwidth]{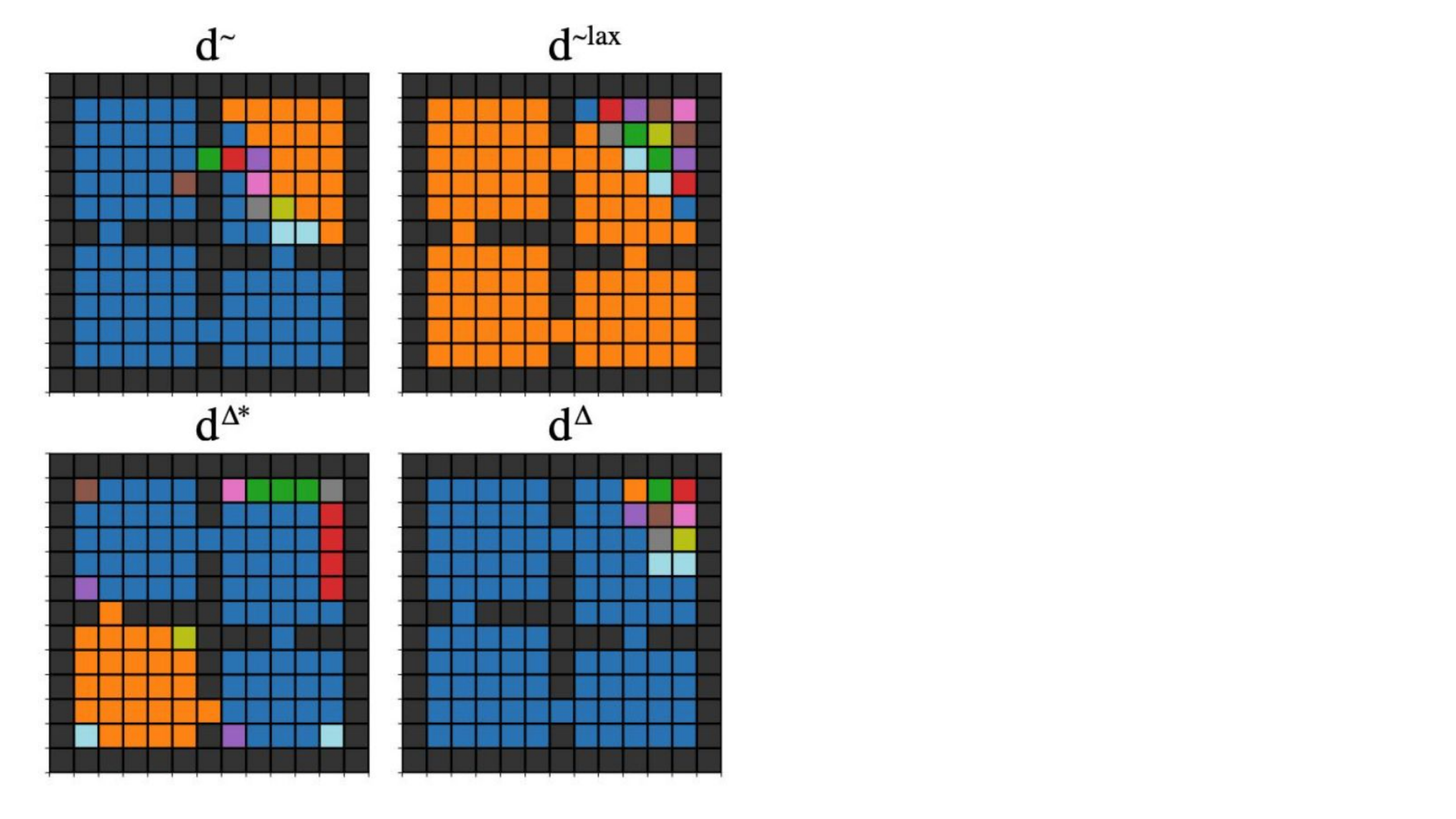}
  \caption{State clusters produced by the different metrics when targeting 11 aggregate states. There is no color correlation across metrics. }
  \label{fig:aggregation}
\end{figure*}

}

\end{document}